\documentclass[12pts]{article}
\usepackage[utf8]{inputenc}
\usepackage[round,authoryear]{natbib}
\usepackage[margin=1in]{geometry}
\usepackage{amsmath}
\usepackage{amssymb}
\usepackage{amsthm}
\usepackage{hyperref}
\usepackage{xcolor}
\definecolor{royalblue}{rgb}{0.0, 0.22, 0.66}
\definecolor{RoyalBlue}{cmyk}{1, 0.90, 0, 0}
\hypersetup{
    citecolor=RoyalBlue,
    colorlinks=true,
    linkcolor=RoyalBlue,
    filecolor=magenta,      
    urlcolor=teal,
}
\usepackage{thmtools}
\usepackage{thm-restate}
\usepackage{cleveref}
\usepackage{fancyhdr}
\pagestyle{fancy}
\fancyhead{}
\fancyhead[C]{Chakraborty and Bartlett}
\usepackage[mathscr]{euscript}
\usepackage{bbm}
\usepackage{mathtools}
\usepackage{graphicx}
\usepackage{subcaption}
\usepackage{tikz}
\usetikzlibrary{positioning}
\usetikzlibrary{decorations.markings}
\def\Real{\mathop{\mathbb{R}}\nolimits}

\def\argmin{\mathop{\rm argmin}}
\def\argmax{\mathop{\rm argmax}}

\newcommand{\prob}{\mathbb{P}}
\newcommand{\one}{\mathbbm{1}}

\newcommand{\bi}{\boldsymbol{i}}

\newcommand{\bs}{\boldsymbol{s}}

\newcommand{\bv}{\boldsymbol{v}}

\newcommand{\bx}{\boldsymbol{x}}
\newcommand{\by}{\boldsymbol{y}}

\newcommand{\bmu}{\boldsymbol{\mu}}

\newcommand{\bX}{\boldsymbol{X}}

\newcommand{\bdelta}{\boldsymbol{\delta}}
\newcommand{\bepsilon}{\boldsymbol{\epsilon}}

\newcommand{\btheta}{\boldsymbol{\theta}}
\newcommand{\bomega}{\boldsymbol{\omega}}
\newcommand{\bxi}{\boldsymbol{\xi}}

\newcommand{\bupsilon}{\boldsymbol{\upsilon}}

\newcommand{\cB}{ \mathcal{B}}
\newcommand{\cC}{ \mathcal{C}}

\newcommand{\cF}{ \mathcal{F}}

\newcommand{\cH}{ \mathcal{H}}

\newcommand{\cL}{ \mathcal{L}}
\newcommand{\cM}{ \mathcal{M}}
\newcommand{\cN}{ \mathcal{N}}
\newcommand{\cO}{ \mathcal{O}}

\newcommand{\cR}{ \mathcal{R}}
\newcommand{\cS}{ \mathcal{S}}

\newcommand{\cW}{ \mathcal{W}}


\newcommand{\sF}{ \mathscr{F}}
\newcommand{\sG}{ \mathscr{G}}
\newcommand{\sH}{ \mathscr{H}}

\newcommand{\sN}{ \mathscr{N}}

\newcommand{\sP}{ \mathscr{P}}
\newcommand{\sQ}{ \mathscr{Q}}


\newcommand{\E}{\mathbb{E}}

\newcommand{\relu}{\text{ReLU}}

\newcommand{\fH}{\mathbb{H}}

\newcommand{\fL}{\mathbb{L}}
\newcommand{\fM}{\mathbb{M}}

\usepackage{titletoc}
\usepackage{wrapfig}

\newcommand{\vertiii}[1]{{\left\vert\kern-0.25ex\left\vert\kern-0.25ex\left\vert #1 
    \right\vert\kern-0.25ex\right\vert\kern-0.25ex\right\vert}}
    
\newtheorem{defn}{Definition}
\newtheorem{thm}[defn]{Theorem}
\newtheorem{lem}[defn]{Lemma}
\newtheorem{cor}[defn]{Corollary}
\newtheorem{ass}{A\hspace{-4pt}}

\usepackage[max2]{authblk}
\title{\vspace{-50pt}A Statistical Analysis for Supervised Deep Learning with Exponential Families for Intrinsically Low-dimensional Data}
\author[1]{Saptarshi Chakraborty\thanks{email: \texttt{saptarshic@berkeley.edu}}}
 \author[1,2,3]{Peter L.~Bartlett\thanks{email: \texttt{peter@berkeley.edu}}}
 \affil[1]{Department of Statistics, UC Berkeley}
  \affil[2]{Department of Electrical Engineering and Computer Sciences, UC Berkeley}
 \affil[3]{Google DeepMind}
\date{\vspace{-5ex}}

\begin{document}
\maketitle
\begin{abstract}
Recent advances have revealed that the rate of convergence of the expected test error in deep supervised learning decays as a function of the intrinsic dimension and \textit{not} the  dimension $d$ of the input space. Existing literature defines this intrinsic dimension as the Minkowski dimension or the manifold dimension of the support of the underlying probability measures, which often results in sub-optimal rates and unrealistic assumptions. In this paper, we consider supervised deep learning when the response given the explanatory variable is distributed according to an exponential family with a $\beta$-H\"older smooth mean function. We consider an entropic notion of the intrinsic data-dimension and demonstrate that with $n$ independent and identically distributed samples, the test error scales as $\tilde{\mathcal{O}}\left(n^{-\frac{2\beta}{2\beta + \bar{d}_{2\beta}(\lambda)}}\right)$, where $\bar{d}_{2\beta}(\lambda)$ is the $2\beta$-entropic dimension of $\lambda$, the distribution of the explanatory variables. This improves on the best-known rates. Furthermore, under the assumption of an upper-bounded density of the explanatory variables, we characterize the rate of convergence as $\tilde{\mathcal{O}}\left( d^{\frac{2\lfloor\beta\rfloor(\beta + d)}{2\beta + d}}n^{-\frac{2\beta}{2\beta + d}}\right)$, establishing that the dependence on $d$ is not exponential but at most polynomial. We also demonstrate that when the explanatory variable has a lower bounded density, this rate in terms of the number of data samples, is nearly optimal for learning the dependence structure for exponential families. 
\end{abstract}

\section{Introduction}
One hypothesis for the extraordinary performance of deep learning is that most natural data have an intrinsically low-dimensional structure despite lying in a high-dimensional feature space \citep{pope2020intrinsic}. Under this so-called \textit{``manifold hypothesis,"} the recent theoretical developments in the generalization aspects of deep learning theory literature have revealed that the excess risk for different deep learning models, especially regression \citep{schmidt2020nonparametric, suzuki2018adaptivity} and generative models \citep{huangjmlr,chakraborty2024a,chakraborty2024statistical} exhibit a decay pattern that depends only on the intrinsic dimension of the data. Notably, \citet{nakada}, \citet{huangjmlr} and \cite{chakraborty2024a} showed that the excess risk decays as $\cO(n^{-1/\cO(d_\mu)})$, where $d_\mu$ denotes the Minkowski dimension of the underlying distribution. For a supervised learning setting, this phenomenon has been proved for various deep regression models with additive Gaussian noise \citep{schmidt2020nonparametric, nakada, suzuki2018adaptivity, suzuki2021deep}.

Most of the aforementioned studies aim to describe this inherent dimensionality by utilizing the concept of the (upper) Minkowski dimension of the data's underlying support. However, it is worth noting that the Minkowski dimension primarily focuses on quantifying the rate of growth in the covering number of the support while neglecting to account for situations where the distribution may exhibit a higher concentration of mass within specific sub-regions of this support. Thus, the Minkowski dimension often overestimates the intrinsic dimension of the data distribution, resulting in slower rates of statistical convergence. On the other hand, some works \citep{chen2022nonparametric,chen2019efficient,jiao2021deep} try to impose a smooth Riemannian manifold structure for this support and characterize the rate through the dimension of this manifold. However, this assumption is not only very strong and difficult to verify, but also ignores the fact that the data can be concentrated only on some sub-regions and can be thinly spread over the remainder, again resulting in an over-estimate. In contrast, recent insights from the optimal transport literature have introduced the concept of the Wasserstein dimension \citep{weed2019sharp}, which overcomes these limitations and offers a more accurate characterization of convergence rates when estimating a distribution through the empirical measure. Furthermore, recent advancements in this field have led to the introduction of the entropic dimension \citep{chakraborty2024statistical}, which builds upon seminal work by \cite{dudley1969speed} and can be employed to describe the convergence rates for Bidirectional Generative Adversarial Networks (BiGANs) \citep{donahue2017adversarial}. Remarkably, the entropic dimension is no larger than the Wasserstein and Minkowski dimensions, resulting in faster convergence rates. However, it is not known whether this faster rate of convergence extends beyond Generative Adversarial Networks (GANs) and their variants to supervised learning problems.  

In this paper, we provide a statistical framework aimed at understanding deep supervised learning. Our approach involves modeling the conditional distribution of the response variable given the explanatory variable as a member of an exponential family with a smooth mean function. This framework accommodates a wide spectrum of scenarios, including standard regression and classification tasks, while also providing a statistical foundation for handling complex dependencies in real data settings. In this framework, the maximum likelihood estimates can be viewed as minimizing the canonical Bregman divergence loss between the predicted values and the actual responses. When the explanatory variable has a bounded density with respect to the $d$-dimensional Lebesgue measure, our analysis reveals that deep networks employing ReLU activation functions can achieve a test error on the order of $\Tilde{\cO}(n^{-2\beta/(2\beta +d)})$ provided that appropriately sized networks are selected. Here $\beta$ denotes the H\"{o}lder smoothness of the true mean response function. This generalizes the known results in the literature for additive regression with Gaussian noise.

Another aspect overlooked by the current literature is that even when the explanatory variable is absolutely continuous, the rate of convergence of the sample estimator often exponentially increases with the ambient feature dimension, making the upper bound on the estimation error weak for high-dimensional data. In this paper, we prove that if the explanatory variable has a bounded density, the dependence, in terms of the ambient feature dimension, is not exponential but at most polynomial.  Furthermore, we show that the derived rates for the test error are roughly minimax optimal, meaning that one cannot achieve a better rate of convergence through any estimator except for only potentially improving a logarithmic dependence on $n$. Lastly, when the data has an intrinsically low dimensional structure, we show that the test error can be improved to achieve a rate of roughly $\Tilde{\cO}(n^{-2\beta/(2\beta + \bar{d}_{2\beta}(\lambda))})$, where $\bar{d}_{2\beta}(\lambda)$ denotes the $2\beta$-entropic dimension (see Definition~\ref{ed}) of $\lambda$, the distribution of the explanatory variables, thus, improving upon the rates in the current literature. This result not only extends the framework beyond additive Gaussian noise regression models but also improves upon the existing rates \citep{nakada,schmidt2020nonparametric,chen2022nonparametric}.  The main results of this paper are summarized as follows:

\begin{itemize}
    \item In Theorem \ref{main_full}, we demonstrate that when the explanatory variable has a bounded density, the test error for the learning problem scales as $ \Tilde{\cO}\left(d^{  \frac{2 \lfloor \beta \rfloor( \beta + d)}{2 \beta + d}}n^{-\frac{2\beta}{2\beta + d}}\right)$, showing explicit dependence on the problem dimension ($d$) and the number of samples ($n$) 
    \item Theorem \ref{thm_minmax} establishes that the minimax rates scale as $ \Tilde{\cO}\left(n^{-\frac{2\beta}{2\beta + d}}\right)$, ensuring that deep learners can attain the minimax optimal rate when network sizes are appropriately chosen. Notably, this theorem recovers the seminal results of \cite{yang1999information} as a special case.
    \item When the explanatory variable has an intrinsically low dimensional structure, we show that deep supervised learners can effectively achieve an error rate of $ \Tilde{\cO}\left(n^{-\frac{2\beta}{2\beta + \bar{d}_{2\beta}(\lambda)}}\right)$ in Theorem \ref{main_intrinsic}.  This result provides the fastest known rates for deep supervised learners and encompasses many recent findings as special cases \citep{nakada,chen2022nonparametric} for additive regression models.
    \item In the process, in Lemma \ref{lem_approx} we are able to improve upon the recent $\fL_p$-approximation results on ReLU networks, which might be of independent interest.
\end{itemize}

The remainder of the paper is organized as follows. After discussing the necessary background in Section~\ref{sec_background}, we discuss the exponential family learning framework in Section~\ref{formulation}. Under this framework, we derive the learning rates (Theorem~\ref{main_full}) when the explanatory variable is absolutely continuous in Section~\ref{op_full} and show that it is minimax optimal (Theorem~\ref{thm_minmax}). Next, we analyze the error rate (Theorem~\ref{main_intrinsic}) when the explanatory variable has an intrinsically low dimensional structure in Section~\ref{op_intrinsic}. The proofs of the main results are sketched in Section~\ref{sec_pf}, followed by concluding remarks in Section~\ref{sec_con}.
\section{Background}
\label{sec_background}
This section recalls some of the notation and background that we need.  We say $A_{n,d} \precsim B_{n,d}$ (for $A,\,B \ge 0$) if there exists an absolute constant $C>0$ (independent of $n$ and $d$)
, such that $A_{n,d} \le C B_{n,d}$, for all $n,d$. Similarly, for non-negative functions $f$ and $g$, we say $f(x) \precsim_x g(x)$ if there exists a constant $C$, which is independent of $x$ such that $f(x) \le C g(x)$, for all $x$.  For any function $f: \cS \to \Real$, and any measure $\gamma$ on $\cS$, let $\|f\|_{\fL_p(\gamma)} : = \left(\int_\cS |f(x)|^p d \gamma(x) \right)^{1/p}$, if $0<p< \infty$. Also let, $\|f\|_{\fL_\infty(\gamma)} : = \operatorname{ess\, sup}_{x \in \text{supp}(\gamma)}|f(x)|$. We say $A_n = \Tilde{\cO}(B_n)$ if $A_n \le B_n \times \log^C(en)$, for some factor constant $C>0$. Moreover, $x \vee y = \max\{x,y\}$ and $x \wedge y = \min\{x,y\}$.
\begin{defn}[Covering and packing numbers] 
    \normalfont 
    For a metric space $(S,\varrho)$, the $\epsilon$-covering number with respect to (w.r.t.) $\varrho$ is defined as:
    \(\cN(\epsilon; S, \varrho) = \inf\{n \in \mathbb{N}: \exists \, x_1, \dots x_n \text{ such that } \cup_{i=1}^nB_\varrho(x_i, \epsilon) \supseteq S\}.\) An $\epsilon$-cover of $S$ is denoted as $\cC(\epsilon;S,\varrho)$.
    Similarly, the $\epsilon$-packing number is defined as:
    \(\cM(\epsilon; S, \varrho) = \sup\{m \in \mathbb{N}: \exists \, x_1, \dots x_m \in S \text{ such that } \varrho(x_i, x_j) \ge \epsilon, \text{ for all } i \neq j\}.\)
\end{defn}
\begin{defn}[Neural networks]\normalfont
 Let $L \in \mathbb{N}$ and $ \{N_i\}_{i \in [L]} \subset \mathbb{N}$. Then a $L$-layer neural network $f: \Real^d \to \Real^{N_L}$ is defined as,
\begin{equation}
\label{ee1}
f(x) = A_L \circ \sigma_{L-1} \circ A_{L-1} \circ \dots \circ \sigma_1 \circ A_1 (x)    
\end{equation}
Here, $A_i(y) = W_i y + b_i$, with $W_i \in \Real^{N_{i} \times N_{i-1}}$ and $b_i \in \Real^{N_{i-1}}$, with $N_0 = d$. Note that $\sigma_j$ is applied component-wise.  Here, $\{W_i\}_{1 \le i \le L}$ are known as weights, and $\{b_i\}_{1 \le i \le L}$ are known as biases. $\{\sigma_i\}_{1 \le i \le L-1}$ are known as the activation functions. Without loss of generality, one can take $\sigma_\ell(0) = 0, \, \forall \, \ell \in [L-1]$. We define the following quantities:  
(Depth) $\cL(f) : = L$ is known as the depth of the network; (Number of weights) The number of weights of the network $f$ is denoted as $\cW(f)$; 
(maximum weight) $\cB(f) = \max_{1 \le j \le L} (\|b_j\|_\infty) \vee \|W_j\|_{\infty}$ to denote the maximum absolute value of the weights and biases.
\begin{align*}
    \cN \cN_{\{\sigma_i\}_{1 \le i \le L-1}} (L, W, R) = \{ & f \text{ of the form \eqref{ee1}}: \cL(f) \le L , \, \cW(f) \le W, \sup_{x \in[0,1]^d}\|f(x)\|_\infty \le R  \}.
\end{align*}
 If $\sigma_j(x) = x \vee 0$, for all $j=1,\dots, L-1$, we denote $\cN \cN_{\{\sigma_i\}_{1 \le i \le L-1}} (L, W, R)$ as $\cR \cN (L, W, R)$. 
 \end{defn}
 \begin{defn}[H\"older functions]\normalfont
Let $f: \mathcal{S} \to \Real$ be a function, where $\mathcal{S} \subseteq \Real^d$. For a multi-index $\bs = (s_1,\dots,s_d)$, let, $\partial^{\bs} f = \frac{\partial^{|\bs|} f}{\partial x_1^{s_1} \dots \partial x_d^{s_d}}$, where, $|\bs| = \sum_{\ell = 1}^d s_\ell $. We say that a function $f: \cS \to \Real$ is $\beta$-H\"{o}lder (for $\beta >0$) if
\[ \|f\|_{\sH^\beta}: =\sum_{\bs: 0 \le |\bs| \le \lfloor \beta \rfloor} \|\partial^{\bs} f\|_\infty  + \sum_{\bs: |\bs| = \lfloor \beta \rfloor} \sup_{x \neq y}\frac{\|\partial^{\bs} f(x)  - \partial^{\bs} f(y)\|}{\|x - y\|^{\beta - \lfloor \beta \rfloor}} < \infty. \]
If $f: \Real^{d_1} \to \Real^{d_2}$, then we define $\|f\|_{\sH^{\beta}} = \sum_{j = 1}^{d_2}\|f_j\|_{\sH^{\beta}}$. For notational simplicity, let, $\sH^\beta(\cS_1, \cS_2,C) = \{f: \cS_1 \to \cS_2: \|f\|_{\sH^\beta} \le C\}$. Here, both $\cS_1$ and $\cS_2$ are both subsets of real vector spaces. 
\end{defn}

\begin{defn}[Smoothness and strong convexity]\normalfont
    We say a differentiable function $f: \Real^d \to \Real$ is $H$-smooth if \(\| \nabla f(x) - \nabla f(y)\|_2 \le H \|x-y\|_2.\) Similarly, we say that $f$ is $\alpha$-strongly convex if \(f(t x + (1-t) y) \le t f(x) + (1-t) f(y) - \frac{\alpha t (1-t) }{2} \|x - y\|_2^2.\)
\end{defn}
\begin{defn}[Bregman divergences]\normalfont
    A differentiable, convex function $\phi: \Real^p \to \Real$ generates the Bregman divergence $d_{\phi}: \Real^p \times \Real^p \to \Real_{\ge 0}$  defined by  \(d_\phi(x\|y) = \phi(x) - \phi(y) - \langle \nabla \phi(y) , x - y \rangle .\)
 \end{defn}
It is evident that $d_\phi(x, y) \geq 0$ holds for all $x, y \in \Real^p$ due to the fact that $\phi(x) \geq \phi(y) + \langle \nabla \phi(y) , x - y \rangle$ is equivalent to the convex nature of the function $\phi$. From a geometric standpoint, one can conceptualize $d_\phi(x\| y)$ as the separation between $\phi(x)$ and the linear approximation of $\phi(x)$ centered around $\phi(y)$. Put simply, this can be described as the distance between $\phi(x)$ and the value obtained by evaluating the tangent line to $\phi(y)$ at the point $x$. Some prominent examples of Bregman divergences include the squared Euclidean distance, Kullback-Leibler (KL) divergence, Mahalanobis distance, etc. We refer the reader to \citet[Table 1]{banerjee2005clustering} for more examples of the Bregman family. Bregman divergences have a direct association with standard exponential families, as elaborated in the upcoming section, rendering them particularly suitable for modeling and learning from various common data types that originate from exponential family distributions.
\section{Learning Framework}\label{formulation}
To discuss our framework, let us first recall the definition of exponential families \citep[Chapter 1, Section 5]{lehmann2006theory}. We suppose that $\btheta \in \Theta$ is the natural parameter. We say that $\bX$ is distributed according to an exponential family, $\sF_{\Psi}$ if the density of $\bX$ w.r.t. some dominating measure $\nu$, is given by, 
\[ p_{\Psi, \theta}(d\bx) = h(\bx) \exp\left( \langle \btheta, T(\bx) \rangle - \Psi(\btheta)\right) \nu(d\bx). \]
Here, $T(\cdot)$ is called the natural statistic. Often, it is assumed that the exponential family is expressed in its regular form, which means that the components of $T(\cdot)$ are affinely independent, i.e. there exists no $\bupsilon$, such that $\langle \bupsilon, T(\bx) \rangle = c$ (a constant), for all $\bx$. Popular examples of exponential families include Gaussian, binomial, Poisson, and exponential distributions. Given an exponential family, one can express it in its natural form. Formally,
\begin{defn}[Natural form of Exponential families]\normalfont
    A multivariate parametric family $\sF_\Psi$ of distributions $\{ p_{\Psi,\btheta} |  \btheta \in \Theta = \operatorname{int}(\Theta) = \operatorname{dom}(\Psi) \subseteq \mathbb{R}^{d_\theta} \}$ is called a regular exponential family provided that each probability density, w.r.t. some dominating measure $\nu$,  is of the form,  \[p_{\Psi,\btheta}(d\bx) = \exp\left(\langle \bx,\btheta\rangle - \Psi(\btheta)\right) h(\bx) \nu(d\bx)\] for all $\bx \in \mathbb{R}^d$, where $\bx$ represents a minimal sufficient statistic for the family.
\end{defn}
It is well known that $\bmu(\btheta) := \E_{\bX \sim p_{\Psi, \btheta}} \bX = \nabla \Psi(\btheta) $. For simplicity, we assume that $\Psi$ is proper, closed, convex, and differentiable. The conjugate of $\Psi$, denoted as $\phi$ is defined as, $\phi(\bmu) = \sup_{\btheta \in \Theta} \left\{\langle \btheta, \bmu \rangle - \Psi(\btheta) \right\}$. It is well known \citep[Theorem 4]{banerjee2005clustering} that $p_{\Psi, \btheta}$ can be expressed as, 
\begin{equation}\label{breg_exp}
    p_{\Psi,\btheta}(d\bx) = \exp(-d_\phi(\bx\| \bmu(\btheta)) b_\phi(\bx) \nu(d\bx),
\end{equation}
where $d_\phi(\cdot \| \cdot)$ denotes the Bergman divergence corresponding to $\phi$. Here, $b_\phi(\bx) = \exp(\phi(\bx)) h(\bx)$. 
In this paper, we are interested in the supervised learning problem when the response, given the explanatory variable, is distributed according to an exponential family. For simplicity, we assume that the responses are real-valued. We assume that there exists a ``true'' predictor function, $f_0: \Real^{d} \to \Real$, such that \[y|\bx \sim p_{\Psi, f_0(\bx)} \quad \text{and} \quad \bx \sim \lambda.\] Thus, the joint distribution of $(\bX, Y)$ is given by,
\begin{equation}
    \label{e1}
    \sP (d\bx, dy) \propto  \exp(-d_\phi(y\| \mu(f_0(\bx))) b_\phi(y) \lambda(d\bx) \nu(dy).
\end{equation}
By definition, we observe that $\E (y|\bx) = \mu(f_0(\bx))$. We will assume that the data is independent and identically distributed according to the distribution $\sP$. We also assume that the distribution of $\bx$ is bounded in the compact set, $[0,1]^d$. Formally,

\begin{ass}\label{model}
    We assume that the data $\{(\bx_i, y_i)\}_{i \in [n]}$ are independent and identically distributed according to the distribution $\sP$, defined in \eqref{e1}. Furthermore, $\lambda \left([0,1]^d\right) = 1$.
\end{ass} 
In the classical statistics literature, one estimates $f_0$ by finding its maximum likelihood estimates (m.l.e.) as, \[\argmax_{f \in \cF} \frac{1}{n}\sum_{i=1}^n \log \sP(\bx_i, y_i).\] 
Plugging in the form of $\sP$ as in \eqref{e1}, it is easy to see that the above optimization problem is equivalent to \begin{equation} \label{mle}
    \argmin_{f \in \cF} \frac{1}{n}\sum_{i=1}^n d_{\phi}\left(y_i \| \mu(f(\bx_i))\right)
\end{equation}

Here, $\mu: \Real \to \Real$ is known as the link function. In practice, we take $\cF$ to be some sort of neural network class, with the final output passing through the activation function $\mu$. The empirical minimizer of \eqref{mle} is denoted as $\hat{f}$. To show that this framework covers a wide range of supervised learning problems, we consider the classical example for the case when $y|\bx \sim \text{Normal}(f_0(\bx),\sigma^2)$, for some unknown $\sigma^2$. In this case, it is well known that $\mu(\cdot)$ is the identity map and $d_\phi(\cdot\| \cdot)$ becomes the squared Euclidean distance. Thus, the m.l.e. problem \eqref{mle} becomes the classical regression problem, i.e. $\argmin_{f \in \cF} \frac{1}{n}\sum_{i=1}^n (y_i- f(\bx_i))^2$.

Another example is the case of logistic regression. We assume that $y|\bx$ is a Bernoulli random variable. This makes $\mu(z) = \frac{1}{1+e^{-z}}$, i.e. the sigmoid activation function. Furthermore, an easy calculation \citep[Table 2]{banerjee2005clustering} shows that $d_\phi(x \| y) = x \log\left(\frac{x}{y}\right) + (1-x) \log\left(\frac{1-x}{1-y}\right)$. Plugging in the values of $\mu$ and $d_\phi$ into \eqref{mle}, we note that the estimation problem becomes, 
\[\argmin_{f \in \cF} -\frac{1}{n}\sum_{i=1}^n \left(y_i \log \circ \, \text{sigmoid}(f(\bx_i)) + (1- y_i) \log \circ (1-\text{sigmoid}(f(\bx_i)))\right).\]
Here, $\operatorname{sigmoid}(t) = 1/(1+e^{-t})$ denotes the sigmoid activation function. Thus, the problem reduces to the classical two-class learning problem with the binary cross-entropy loss and with sigmoid activation at the final layer. For simplicity, we assume that all activations, excluding that of the final layer of $f$, are realized by the ReLU activation function. The choice of ReLU activation is a natural choice for practitioners and enables us to harness the approximation theory of ReLU networks developed throughout the recent literature \citep{yarotsky2017error, uppal2019nonparametric}. However, using a leaky ReLU network will also result in a similar analysis, changing only the constants in the main theorems.

To facilitate the theoretical analysis, we will assume that the problem is smooth in terms of the learning function $f_0$. As a notion of smoothness, we will use H\"{o}lder smoothness. This has been a popular choice in the recent literature \citep{nakada,schmidt2020nonparametric,chen2022nonparametric} and covers a vast array of functions commonly modeled in the literature.
\begin{ass}\label{a2}
    $f_0$ is $\beta$-H\"{o}lder continuous, i.e. $f_0 \in \sH^{\beta}(\Real^d, \Real,C)$.
\end{ass}
We make the additional assumption that the function $\Psi$ is well-behaved. In particular, we assume that $\Psi$ possesses both smoothness and strong convexity properties. It is important to note that these assumptions are widely employed in the existing literature \citep{telgarsky2013moment,paul2021uniform}. Though A\ref{a4} is not applicable for the classification problem, as in that case, $\Psi(x) = x \ln x + (1-x) \ln (1-x)$, which does not satisfy A\ref{a4}. However for all practical purposes, one clips the output network (which is often done in practice to ensure smooth training), i.e. ensures that $ \epsilon \le f,f_0 \le 1-\epsilon$, for some positive $\epsilon$, A\ref{a4} is satisfied. The assumption is formally stated as follows.
\begin{ass}\label{a4}
    We assume that $\Psi$ is $\sigma_1$-smooth and $\sigma_2$-strongly convex.
\end{ass}
A direct implication of A\ref{a4} is that by \citet[Theorem 6]{kakade2009duality}, $\phi$ is $\tau_2$-smooth and $\tau_1$- strongly convex. Here $\tau_i = 1/\sigma_i$. Also, since $\Psi$ is $\sigma_1$-smooth, $\mu(\cdot) = \nabla \Psi(\cdot) $ is $\sigma_1$-Lipschitz. This fact will be useful for the proofs of the main results. In the subsequent sections, under the above assumptions, we derive probabilistic error bounds for the excess risk of $\hat{f}$.

\section{Optimal Rates for Distributions with Bounded Densities}\label{op_full}
We begin the analysis of the test error for the problem \eqref{mle} when $\lambda$, the distribution of the explanatory variable has a bounded density on $[0,1]^d$. First note that the excess risk is upper bounded by the estimation error for $f_0$ in the $\fL_2(\lambda)$-norm. The excess risk for the problem is given by 
\[ \mathfrak{R}(\hat{f}) = \E_{(y,\bx) \sim \sP} d_\phi (y \| \mu(\hat{f}(\bx))) - \E_{(y,\bx) \sim \sP} d_\phi (y \| \mu(f_0(\bx))). \]
The following lemma ensures that $\mathfrak{R}(\hat{f}) \asymp  \| \hat{f} - f_0  \|_{\fL_2(\lambda)}^2$ and hence, it is enough to prove bounds on $\|\hat{f}- f_0 \|_{\fL_2(\lambda)}^2$ to derive upper and lower bounds on the excess risk.
\begin{restatable}{lem}{lemoct}\label{lemoct}
    For any $\hat{f} \in \cF$, 
    \( \frac{\sigma_2}{\sigma_1} \| \hat{f} - f_0 \|_{\fL_2(\lambda)}^2 \le \mathfrak{R}(\hat{f}) \le \frac{\sigma_1}{\sigma_2} \| \hat{f} - f_0 \|_{\fL_2(\lambda)}^2.\)
\end{restatable}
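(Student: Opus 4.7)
\textbf{Proof plan for Lemma \ref{lemoct}.}

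The plan is to reduce the excess risk to an expected Bregman divergence between the true and estimated predictors, and then sandwich it via the smoothness and strong convexity of $\Psi$. First I would condition on $\bx$ and use $\E[y \mid \bx] = \mu(f_0(\bx))$ together with the Fenchel-duality identity $\nabla \phi \circ \mu = \mathrm{id}$ (which follows from $\mu = \nabla \Psi$ and $\phi = \Psi^*$). Expanding the definition of the Bregman divergence,
\[
\E_{y\mid\bx}\bigl[d_\phi(y \,\|\, \mu(f(\bx)))\bigr]
= \E_{y\mid\bx}[\phi(y)] - \phi(\mu(f(\bx))) - f(\bx)\bigl(\mu(f_0(\bx)) - \mu(f(\bx))\bigr),
\]
and specializing to $f = f_0$ makes the last term vanish. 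Subtracting yields the pointwise identity
\[
\E_{y\mid\bx}\bigl[d_\phi(y\|\mu(\hat f(\bx)))\bigr] - \E_{y\mid\bx}\bigl[d_\phi(y\|\mu(f_0(\bx)))\bigr]
= d_\phi\bigl(\mu(f_0(\bx))\,\bigl\|\,\mu(\hat f(\bx))\bigr),
\]
so that $\mathfrak{R}(\hat f) = \E_{\bx \sim \lambda}\bigl[d_\phi(\mu(f_0(\bx)) \,\|\, \mu(\hat f(\bx)))\bigr]$.

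Next I would apply the classical conjugate-Bregman identity $d_\phi(\mu(a)\|\mu(b)) = d_\Psi(b\|a)$. This is a short calculation from Fenchel duality: using $\phi(\mu(a)) = \langle a, \mu(a)\rangle - \Psi(a)$ (and similarly at $b$) together with $\nabla \phi(\mu(b)) = b$, all the inner-product terms in $d_\phi(\mu(a)\|\mu(b))$ collapse to $\Psi(b) - \Psi(a) - \langle \nabla \Psi(a), b-a\rangle = d_\Psi(b\|a)$. Applied to our setting,
\[
\mathfrak{R}(\hat f) \;=\; \E_{\bx \sim \lambda}\bigl[d_\Psi\bigl(\hat f(\bx)\,\bigl\|\,f_0(\bx)\bigr)\bigr].
\]

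Finally, by Assumption~A\ref{a4}, $\Psi$ is $\sigma_1$-smooth and $\sigma_2$-strongly convex, which gives the standard quadratic sandwich
\[
\tfrac{\sigma_2}{2} \bigl|\hat f(\bx) - f_0(\bx)\bigr|^2 \;\le\; d_\Psi\bigl(\hat f(\bx)\,\bigl\|\,f_0(\bx)\bigr) \;\le\; \tfrac{\sigma_1}{2}\bigl|\hat f(\bx) - f_0(\bx)\bigr|^2.
\]
Taking expectations over $\bx \sim \lambda$ and absorbing the factor $1/2$ into the ratio $\sigma_2/\sigma_1 \le 1 \le \sigma_1/\sigma_2$ yields the claimed two-sided bound. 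The only non-routine step is the conjugate-Bregman swap in the second stage; everything else is a straightforward conditioning argument plus the textbook quadratic bounds associated with smoothness and strong convexity. I do not anticipate a genuine obstacle, since all of these identities are classical once one has chosen to work with $d_\Psi$ on the predictor side rather than $d_\phi$ on the mean-parameter side.
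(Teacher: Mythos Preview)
Your approach is correct up until the very last step, and it is genuinely different from the paper's. Both arguments begin with the same conditioning reduction $\mathfrak{R}(\hat f)=\E_{\bx}\bigl[d_\phi(\mu(f_0(\bx))\,\|\,\mu(\hat f(\bx)))\bigr]$. From there the paper stays on the mean-parameter side: it sandwiches $d_\phi$ by $|\mu(f_0)-\mu(\hat f)|^2$ using the $\tau_1$/$\tau_2$ strong convexity and smoothness of $\phi$ (Lemma~\ref{lem_tel}), and then converts $|\mu(f_0)-\mu(\hat f)|$ to $|f_0-\hat f|$ via the $\sigma_1$-Lipschitzness of $\mu=\Psi'$ and the lower bound $\mu'\ge\sigma_2$. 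Your conjugate-Bregman swap $d_\phi(\mu(a)\|\mu(b))=d_\Psi(b\|a)$ collapses these two steps into one and works directly with the assumptions on $\Psi$; this is cleaner and in fact yields the sharper sandwich $\tfrac{\sigma_2}{2}\|\hat f-f_0\|_{\fL_2(\lambda)}^2 \le \mathfrak{R}(\hat f)\le \tfrac{\sigma_1}{2}\|\hat f-f_0\|_{\fL_2(\lambda)}^2$.

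The one gap is the sentence about ``absorbing the factor $1/2$ into the ratio $\sigma_2/\sigma_1\le 1\le \sigma_1/\sigma_2$.'' That move is not valid in general: $\sigma_2/2\ge \sigma_2/\sigma_1$ requires $\sigma_1\ge 2$, and $\sigma_1/2\le \sigma_1/\sigma_2$ requires $\sigma_2\le 2$, neither of which is assumed. You should simply stop at the inequality with constants $\sigma_2/2$ and $\sigma_1/2$; that already establishes $\mathfrak{R}(\hat f)\asymp\|\hat f-f_0\|_{\fL_2(\lambda)}^2$, which is all the lemma is used for downstream, and your constants are at least as tight as (and cannot be weakened to) the ones displayed in the statement without additional hypotheses on the size of $\sigma_1,\sigma_2$.
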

 
As already mentioned, we assume that $\lambda$ admits a density w.r.t. the Lebesgue measure, and this density is upper bounded. Formally,
\begin{ass}\label{a5}
    Suppose that $\lambda$ admits an upper-bounded density $p_\lambda$ w.r.t. the Lebesgue measure on $[0,1]^d$, i.e. $\|p_\lambda\|_\infty \le \bar{b}_\lambda$, almost surely (under the Lebesgue measure).  
\end{ass}
Under Assumptions~A\ref{model}--\ref{a5}, we observe that with high probability, $\|\hat{f} - f_0\|^2_{\fL_2(\lambda)}$ and thereby, $\mathfrak{R}(\hat{f})$ scales at most as $\Tilde{\cO}\left(d^{2 \lfloor \beta \rfloor} n^{-\frac{2\beta}{2\beta + d}} \right)$, ignoring $\log$-terms in $n$, for large $n$, if the network sizes are chosen properly. This rate of decrease aligns consistently with prior findings reported by \cite{nakada} for additive Gaussian-noise regression. It is important to underscore that the existing literature predominantly investigates the rate of decrease in $\mathfrak{R}(\hat{f})$ solely with regard to the sample size $n$, overlooking terms dependent upon the data dimensionality. These dimension-dependent terms harbor the potential for exponential growth with respect to the dimensionality of the explanatory variables, and may therefore attain substantial magnitudes, making such bounds inefficient in high-dimensional statistical learning contexts. This analysis shows that the dependence in $d$ is not exponential and can be made to increase at a polynomial rate only under the assumption of the existence of a bounded density of the explanatory variable. The main upper bound for this case is stated in Theorem~\ref{main_full}, with a proof outline appearing in Section~\ref{pf_upper}.

\begin{thm}\label{main_full}
    Suppose Assumptions A\ref{model}--\ref{a5} hold. Then we can choose $\cF = \cR\cN(L,W,2C)$ with $L \precsim \log n$ and $W \precsim n^{\frac{d}{2 \beta + d}} \log n$, such that, with probability at least $1 - 3\exp\left(-n^{\frac{d}{2 \beta + d }}\right) $,
\[ \|\hat{f} - f_0\|^2_{\fL_2(\lambda)} \precsim d^{  \frac{2 \lfloor \beta \rfloor( \beta + d)}{2 \beta + d}}n^{-\frac{2\beta}{2\beta + d}} (\log n)^5,\]
for $n \ge n_0$, where, $n_0$ might depend on $d$.
\end{thm}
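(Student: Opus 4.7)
\noindent\textbf{Proof plan for Theorem~\ref{main_full}.} By Lemma~\ref{lemoct}, it suffices to control the excess risk $\mathfrak{R}(\hat f)$ on the right scale and then divide by $\sigma_2/\sigma_1$ to recover the $\fL_2(\lambda)$ bound. The plan is the standard bias/variance decomposition familiar from nonparametric $M$-estimation, but carried out carefully to (i) exploit the Bernstein-type variance condition furnished by the strong convexity of $\phi$ in A\ref{a4}, and (ii) keep track of the explicit dependence on $d$ in the approximation step. Concretely, let $f^\star \in \cR\cN(L,W,2C)$ be the ReLU approximant supplied by Lemma~\ref{lem_approx}, so that $\|f^\star - f_0\|_\infty$ is bounded by an expression of the schematic form $d^{\lfloor\beta\rfloor}(LW)^{-\beta/d}$ up to $\log$ factors. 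By definition of $\hat f$ as the empirical minimizer of \eqref{mle},
\[
\hat{\mathfrak{R}}(\hat f) - \hat{\mathfrak{R}}(f^\star)\le 0,
\]
where $\hat{\mathfrak{R}}$ is the empirical counterpart of $\mathfrak{R}$, so that
\[
\mathfrak{R}(\hat f) \le \bigl[\mathfrak{R}(\hat f)-\hat{\mathfrak{R}}(\hat f)\bigr] + \bigl[\hat{\mathfrak{R}}(f^\star)-\mathfrak{R}(f^\star)\bigr] + \mathfrak{R}(f^\star),
\]
and the last term is controlled by $\|f^\star-f_0\|_\infty^2$ via the $\sigma_1$-smoothness of $\Psi$ (equivalently the $\tau_2$-smoothness of $\phi$), giving the approximation contribution.

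\noindent The centerpiece is a uniform concentration argument for the excess-risk process $g_f(\bx,y) := d_\phi(y\|\mu(f(\bx))) - d_\phi(y\|\mu(f_0(\bx)))$ over $f\in\cR\cN(L,W,2C)$. First I would verify a Bernstein (variance) condition: because $\mu$ is $\sigma_1$-Lipschitz and $\phi$ is $\tau_1$-strongly convex/$\tau_2$-smooth, a direct computation gives $\mathrm{Var}(g_f)\precsim \|f-f_0\|_{\fL_2(\lambda)}^2 \asymp \mathfrak{R}(f)$, and $g_f$ is bounded on the clipped output range. With this in hand the standard peeling/localization argument (e.g.\ Talagrand/Bousquet inequality applied on shells $\{f:\mathfrak{R}(f)\in[2^{-k},2^{-k+1}]\}$) yields a fast-rate oracle inequality of the form
\[
\mathfrak{R}(\hat f) \precsim \mathfrak{R}(f^\star) + \frac{\mathrm{Pdim}(\cR\cN(L,W,2C))\log n}{n} \cdot (\log n)^{c},
\]
on an event of probability at least $1-3e^{-t}$ with $t$ matching the deviation scale, using the pseudo-dimension bound $\pdim(\cR\cN(L,W,R))\precsim LW\log W$ from the VC literature for piecewise-polynomial networks.

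\noindent It then remains to tune $L$ and $W$. Using Lemma~\ref{lem_approx},
\[
\mathfrak{R}(f^\star) \precsim \|f^\star-f_0\|_\infty^2 \precsim d^{2\lfloor\beta\rfloor}(LW)^{-2\beta/d}(\log n)^{c'},
\]
and the estimation term is $\tilde{\cO}(LW/n)$. Choosing $L\precsim\log n$ and $W\precsim n^{d/(2\beta+d)}\log n$ balances the two, giving
\[
\mathfrak{R}(\hat f) \precsim d^{2\lfloor\beta\rfloor}\,n^{-2\beta/(2\beta+d)}(\log n)^{5},
\]
after folding in the $d$-dependent constants, and the stated high-probability event follows by taking $t=n^{d/(2\beta+d)}$. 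Dividing by $\sigma_2/\sigma_1$ gives the claim. The main obstacle is the estimation side: the coupling between the Bernstein variance bound, the clipping of the network outputs (so that $g_f$ is bounded), and the pseudo-dimension bound must all be made quantitatively compatible, and the sharp $d$-polynomial dependence on the approximation side relies crucially on Lemma~\ref{lem_approx} rather than on the standard Yarotsky-type bound, whose constant is exponential in $d$. The rest is bookkeeping and choice of constants in the peeling radii.
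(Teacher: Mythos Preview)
Your overall architecture---oracle decomposition, fast-rate localization via a Bernstein condition and pseudo-dimension bounds, then balancing approximation against complexity---is the same as the paper's, and the choice $L\precsim\log n$, $W\precsim n^{d/(2\beta+d)}\log n$ is right. But there are two genuine gaps.

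\textbf{The approximation step misreads Lemma~\ref{lem_approx}.} You write that Lemma~\ref{lem_approx} bounds $\|f^\star-f_0\|_\infty$ by $d^{\lfloor\beta\rfloor}(LW)^{-\beta/d}$; it does not. Lemma~\ref{lem_approx} gives only an $\fL_p(\mathrm{Leb})$ bound, and this is exactly the point of the lemma: the $\ell_\infty$ approximation rate (Yarotsky-type) carries a constant that is exponential in $d$, whereas the $\fL_p(\mathrm{Leb})$ rate is polynomial in $d$. Your chain $\mathfrak R(f^\star)\precsim\|f^\star-f_0\|_\infty^2\precsim d^{2\lfloor\beta\rfloor}(\cdots)$ therefore fails at the second inequality. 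What the paper does instead is use Lemma~\ref{lemoct} (or directly the oracle inequality) to reduce to $\|f^\star-f_0\|_{\fL_2(\lambda)}^2$, and then invoke Assumption~A\ref{a5} to pass from $\fL_2(\lambda)$ to $\fL_2(\mathrm{Leb})$ via $\|f^\star-f_0\|_{\fL_2(\lambda)}^2\le \bar b_\lambda\,\|f^\star-f_0\|_{\fL_2(\mathrm{Leb})}^2$, at which point Lemma~\ref{lem_approx} applies. This is the \emph{only} place A\ref{a5} is used, and your plan never uses it; without it you cannot get the stated polynomial $d$-dependence. A related consequence is that the network size $W_\epsilon$ from Lemma~\ref{lem_approx} itself carries a factor $d^{\lfloor\beta\rfloor}$, so the estimation term also depends on $d$; balancing then requires $\epsilon\asymp(nd^{\lfloor\beta\rfloor})^{-\beta/(2\beta+d)}$, which is what produces the exponent $2\lfloor\beta\rfloor(\beta+d)/(2\beta+d)$ in the theorem rather than the $2\lfloor\beta\rfloor$ you wrote.

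\textbf{The Bernstein/Talagrand step does not go through as stated.} You assert that $g_f(\bx,y)=d_\phi(y\|\mu(f(\bx)))-d_\phi(y\|\mu(f_0(\bx)))$ is bounded once the network outputs are clipped. But $g_f$ is affine in $y$ (expand the Bregman divergences), and for a general exponential family $y$ is unbounded, so $g_f$ is unbounded and the bounded-differences form of Talagrand/Bousquet does not apply. The paper avoids this by first isolating the noise term via the oracle inequality (Lemma~\ref{lem_oracle}), which leaves a linear-in-$\xi_i$ process over a bounded function class; then it uses the sub-Gaussianity of $\xi_i$ under A\ref{a4} together with a Dudley-type chaining bound and a sub-Gaussian concentration inequality (rather than a bounded Talagrand) to control the localized supremum. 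Your variance heuristic $\mathrm{Var}(g_f)\precsim\mathfrak R(f)$ is morally right, but you need to route the argument through the sub-Gaussian noise rather than boundedness of $g_f$.
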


From the bound on the network size, i.e. $W$ in Theorem~\ref{main_full}, it is clear that when $f_0$ is smooth i.e. for large $\beta$, one requires a network of smaller size compared to when $f_0$ is less smooth i.e. when $\beta$ is small. Similarly, in cases where the dimensionality of the explanatory variables, represented by $d$ is substantial, a larger network is required as compared to situations where $d$ is relatively small. This observation aligns with the intuitive expectation that solving more intricate and complex problems in higher dimensions demands the utilization of larger networks.

The high probability bound in Theorem \ref{main_full} ensures that the expected test error also scales with the same rate of convergence. This result is shown in Corollary \ref{cor_1}
\begin{cor}\label{cor_1}
    Under the assumptions and choices of Theorem \ref{main_full}, $\E \|\hat{f} - f_0\|^2_{\fL_2(\lambda)} \precsim d^{  \frac{2 \lfloor \beta \rfloor( \beta + d)}{2 \beta + d}} n^{-\frac{2\beta}{2\beta + d}} (\log n)^5 $.
\end{cor}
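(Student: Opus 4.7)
The plan is to convert the high-probability bound of Theorem~\ref{main_full} into an expectation bound via a standard truncation argument, splitting the expectation according to the ``good'' event on which the bound holds and its complement.

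First I would observe that the random variable $\|\hat f - f_0\|^2_{\fL_2(\lambda)}$ is deterministically bounded. Since $\hat f \in \cF = \cR\cN(L,W,2C)$, the definition of $\cR\cN(\cdot,\cdot,\cdot)$ guarantees $\sup_{\bx\in[0,1]^d}|\hat f(\bx)|\le 2C$. Moreover, Assumption~A\ref{a2} gives $f_0\in\sH^{\beta}(\Real^d,\Real,C)$, which by the definition of the Hölder norm implies $\|f_0\|_\infty\le C$. Because $\lambda$ is supported in $[0,1]^d$ by Assumption~A\ref{model}, one has pointwise $|\hat f(\bx)-f_0(\bx)|\le 3C$ on the support of $\lambda$, hence
\[
\|\hat f - f_0\|^2_{\fL_2(\lambda)} \le 9C^2 \qquad \text{almost surely.}
\]

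Next I would denote by $\cE$ the high-probability event of Theorem~\ref{main_full}, i.e.\ the event on which $\|\hat f - f_0\|^2_{\fL_2(\lambda)}\precsim d^{2\lfloor\beta\rfloor(\beta+d)/(2\beta+d)}n^{-2\beta/(2\beta+d)}(\log n)^5$. Theorem~\ref{main_full} states $\prob(\cE^c)\le 3\exp(-n^{d/(2\beta+d)})$ for $n\ge n_0$. Splitting the expectation,
\[
\E\|\hat f - f_0\|^2_{\fL_2(\lambda)} \;=\; \E\!\left[\|\hat f - f_0\|^2_{\fL_2(\lambda)}\one_{\cE}\right] + \E\!\left[\|\hat f - f_0\|^2_{\fL_2(\lambda)}\one_{\cE^c}\right].
\]
The first term is bounded above by the high-probability rate (times the probability of $\cE$, which is at most $1$). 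The second term is bounded by $9C^2\cdot 3\exp(-n^{d/(2\beta+d)})$.

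Finally, I would note that $\exp(-n^{d/(2\beta+d)})$ decays faster than any polynomial in $1/n$, and in particular faster than $n^{-2\beta/(2\beta+d)}(\log n)^5$, for all $n$ sufficiently large (depending on $d$ and $\beta$). Consequently the bad-event contribution is absorbed into the constant of the $\precsim$ relation, and the combined bound matches the claimed rate $d^{2\lfloor\beta\rfloor(\beta+d)/(2\beta+d)}n^{-2\beta/(2\beta+d)}(\log n)^5$. There is no essential obstacle here — the only mild care needed is ensuring the sup-norm bound on elements of $\cR\cN(L,W,2C)$ is available (which is built into the definition of the network class) and that $n$ is large enough for the exponential tail to dominate the polynomial rate, both of which are immediate.
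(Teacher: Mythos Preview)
Your proposal is correct and follows essentially the same truncation argument as the paper: split the expectation on the high-probability event of Theorem~\ref{main_full}, bound the good-event piece by the rate and the bad-event piece by the deterministic bound $9C^2$ times the failure probability $3\exp(-n^{d/(2\beta+d)})$, and absorb the exponentially small term into the polynomial rate. The paper's proof is identical in structure and even arrives at the same constant $27C^2$ for the bad-event contribution.
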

To understand whether deep supervised learning can achieve the optimal rates for the learning problem, we derive the minimax rates for estimating $f_0$. The minimax expected risk for this problem, when one has access to $n$ i.i.d. samples $\{(\bx_i, y_i)\}_{i\in [n]}$ from \eqref{e1} ($f_0$ replaced with $f$) is given by
\[\mathfrak{M}_n = \inf_{\hat{f}} \sup_{f \in \sH^\beta(\Real^d, \Real,C)} \E_{f} \|\hat{f} - f\|_{\fL_2(\lambda)}^2,\]
With the notation $\E_f (\cdot)$ we denote the expectation w.r.t. the measure \eqref{e1}, with $f_0$ replaced with $f$. Here the infimum is taken over all measurable estimates of $\hat{f}$, based on the data. Minimax lower bounds are used to understand the theoretical limits of any statistical estimation problem. The aim of this analysis is to show that deep learning with ReLU networks for the exponential family dependence is (almost) minimax optimal. To facilitate the theoretical analysis, we assume that the density of $\lambda$ is lower bounded by a positive constant. Formally,
\begin{ass}\label{a6}
    $\lambda$ admits a lower-bounded density $p_\lambda$ w.r.t. the Lebesgue measure on $[0,1]^d$, i.e.  $p_\lambda(\bx) \ge \underline{b}_\lambda$, almost surely (under the Lebesgue measure).  
\end{ass}
 Theorem~\ref{thm_minmax} provides a characterization of this minimax lower bound for estimating $f$. It is important to note that the seminal works of \cite{yang1999information} for the normal-noise regression problem is a special case of Theorem~\ref{thm_minmax}.
\begin{thm}[Minimax lower bound]\label{thm_minmax}
    Suppose that Assumptions A\ref{model}--\ref{a4} and A\ref{a6} hold. Then, we can find an $n_0 \in \mathbb{N}$, such that, if $n \ge n_0$,
    \[\inf_{\hat{f}} \sup_{f \in \sH^\beta(\Real^d, \Real,C)} \E_{f} \|\hat{f} - f\|_{\fL_2(\lambda)}^2 \succsim_n  n^{-\frac{2\beta}{2\beta + d}}.\]
\end{thm}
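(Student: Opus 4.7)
The proof will follow the standard Fano-Tsybakov recipe for nonparametric minimax lower bounds: construct a rich finite subfamily $\{f_\omega\}_{\omega \in \Omega} \subset \sH^\beta([0,1]^d,\Real,C)$ whose elements are mutually well-separated in the $\fL_2(\lambda)$ metric but mutually close in KL divergence (after $n$ i.i.d.\ samples from $\sP_{f_\omega}$), and then invoke Fano's inequality. The resolution $m$ of the packing will be tuned so that the KL budget $n \cdot \max \mathrm{KL}$ matches $\log |\Omega|$, which is the classical balance that yields the exponent $2\beta/(2\beta+d)$.

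\textbf{Construction of the packing.} Fix a $C^\infty$ bump $\varphi: \Real^d \to [0,\infty)$ with $\mathrm{supp}(\varphi) \subset [1/4, 3/4]^d$ and $\|\varphi\|_{\sH^\beta} \le 1$. For an integer $m$ to be chosen, partition $[0,1]^d$ into $m^d$ equal cubes with centers $x_k$, $k \in [m]^d$, and set $\varphi_k(x) = c_0\, m^{-\beta}\, \varphi(m(x-x_k))$. The rescaling by $m^{-\beta}$ is precisely the one that preserves H\"older regularity after compressing support to scale $1/m$, and by the disjointness of supports (with buffer zones forced by $\mathrm{supp}(\varphi)\subset[1/4,3/4]^d$), a short computation shows $\|\sum_k \omega_k \varphi_k\|_{\sH^\beta} \le c_0 \|\varphi\|_{\sH^\beta}$ uniformly in $\omega\in\{0,1\}^{m^d}$, so by choosing $c_0$ small we place all the $f_\omega := \sum_k \omega_k \varphi_k$ inside $\sH^\beta(\Real^d,\Real,C)$. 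By the Varshamov--Gilbert bound, there exists $\Omega \subset \{0,1\}^{m^d}$ with $|\Omega| \ge 2^{m^d/8}$ and Hamming distance $\ge m^d/8$ between any two elements.

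\textbf{Two-sided control on KL and $\fL_2(\lambda)$.} For each fixed $x$, the conditional law of $y$ under $\sP_f$ is an exponential family with natural parameter $f(x)$, so
\[
\mathrm{KL}(p_{f(x)}\,\|\,p_{f'(x)}) \;=\; d_\Psi(f'(x)\|f(x)) \;\le\; \tfrac{\sigma_1}{2}\,|f(x)-f'(x)|^2
\]
by A\ref{a4}. Integrating over $x\sim \lambda$ and tensorizing over $n$ samples gives $\mathrm{KL}(\sP_{f}^{\otimes n}\|\sP_{f'}^{\otimes n}) \le \tfrac{n\sigma_1}{2}\|f-f'\|_{\fL_2(\lambda)}^2$. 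For our family, disjoint supports and $\lambda([0,1]^d)=1$ yield $\|f_\omega - f_{\omega'}\|_{\fL_2(\lambda)}^2 \lesssim m^{-2\beta}$, hence $\max_{\omega,\omega'\in\Omega} \mathrm{KL}(\sP_{f_\omega}^{\otimes n}\|\sP_{f_{\omega'}}^{\otimes n}) \lesssim n\, m^{-2\beta}$. For the lower separation, A\ref{a6} gives
\[
\|f_\omega - f_{\omega'}\|_{\fL_2(\lambda)}^2 \;\ge\; \underline{b}_\lambda \sum_{k:\omega_k\ne\omega'_k}\!\int \varphi_k^2\,dx \;\gtrsim\; \underline{b}_\lambda\, c_0^2\, m^{-2\beta},
\]
using the Hamming lower bound $m^d/8$ and $\int \varphi_k^2 = m^{-d}\|\varphi\|_{L_2}^2$.

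\textbf{Balancing and Fano.} Choose $m \asymp n^{1/(2\beta+d)}$, so that $n m^{-2\beta} \asymp m^d \asymp \log|\Omega|$. Then $\max \mathrm{KL} \le \alpha \log |\Omega|$ for $\alpha<1/8$ once $n$ is large enough, and the generalized Fano inequality (e.g.\ Tsybakov's Theorem~2.5) gives
\[
\inf_{\hat{f}}\, \max_{\omega \in \Omega} \E_{f_\omega}\!\left[\|\hat{f} - f_\omega\|_{\fL_2(\lambda)}^2\right] \;\gtrsim\; m^{-2\beta} \;\asymp\; n^{-2\beta/(2\beta+d)},
\]
which is the desired bound since the max over $\Omega$ is bounded above by the sup over $\sH^\beta(\Real^d,\Real,C)$. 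The main technical obstacle I expect is the careful verification that $f_\omega \in \sH^\beta$ uniformly in $\omega$ -- concretely, controlling the $\lfloor\beta\rfloor$-th H\"older seminorm when two nearby arguments sit in adjacent active cubes; this is handled by the buffer-zone argument above, which uses $\mathrm{supp}(\varphi)\subset[1/4,3/4]^d$ to force an $\Omega(1/m)$ gap between distinct supports, translating the discrete jump in $\partial^s f_\omega$ into a bounded H\"older quotient.
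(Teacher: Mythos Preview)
Your proposal is correct and follows essentially the same Fano--Varshamov--Gilbert construction as the paper: disjointly supported rescaled bumps indexed by $\{0,1\}^{m^d}$, the $\underline{b}_\lambda$-based lower bound on the $\fL_2(\lambda)$ separation, the smoothness bound $\mathrm{KL}\lesssim n\|f_\omega-f_{\omega'}\|_{\fL_2(\lambda)}^2$ from A\ref{a4}, and the balance $m\asymp n^{1/(2\beta+d)}$. The only cosmetic difference is that you bound the conditional KL via $d_\Psi(\theta'\|\theta)\le\tfrac{\sigma_1}{2}|\theta-\theta'|^2$ directly, whereas the paper passes through the dual identity $\mathrm{KL}=d_\phi(\mu(\theta)\|\mu(\theta'))$ (their Lemma~\ref{lem_kl}); your route has the minor advantage of not invoking the density upper bound $\bar{b}_\lambda$, which is not among the hypotheses of Theorem~\ref{thm_minmax}.
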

Thus, from Theorems~\ref{main_full} and~\ref{thm_minmax} it is clear that deep supervised estimators for the exponential family dependence can achieve this minimax optimal rate with high probability, barring an excess poly-log factor of $n$.

\section{Rates for Low Intrinsic Dimension}\label{op_intrinsic}

Frequently, it is posited that real-world data, such as vision data, resides within a lower-dimensional structure embedded in a high-dimensional feature space \citep{pope2020intrinsic}. To quantify this intrinsic dimensionality of the data, researchers have introduced various measures of the effective dimension of the underlying probability distribution assumed to generate the data. Among these approaches, the most commonly used ones involve assessing the rate of growth of the covering number, in a logarithmic scale, for most of the support of this data distribution. Let us consider a compact Polish space denoted as $(\cS, \varrho)$, with $\gamma$ representing a probability measure defined on it. For the remainder of this paper, we will assume that $\varrho$ corresponds to the $\ell_\infty$-norm. The simplest measure of the dimension of a probability distribution is the upper Minkowski dimension of its support, defined as follows:
\[\overline{\text{dim}}_M(\gamma) = \limsup_{\epsilon \downarrow 0} \frac{\log\cN(\epsilon;\text{supp}(\gamma), \ell_\infty)}{\log (1/ \epsilon)}.\] 
This dimensionality concept relies solely on the covering number of the support and does not assume the existence of a smooth mapping to a lower-dimensional Euclidean space. Consequently, it encompasses not only smooth Riemannian manifolds but also encompasses highly non-smooth sets like fractals. The statistical convergence properties of various estimators concerning the upper Minkowski dimension have been extensively explored in the literature. \citet{kolmogorov1961} conducted a comprehensive study on how the covering number of different function classes depends on the upper Minkowski dimension of the support. Recently, \citet{nakada} demonstrated how deep learning models can leverage this inherent low-dimensionality in data, which is also reflected in their convergence rates. Nevertheless, a notable limitation associated with utilizing the upper Minkowski dimension is that when a probability measure covers the entire sample space but is concentrated predominantly in specific regions, it may yield a high dimensionality estimate, which might not accurately reflect the underlying dimension.

To overcome the aforementioned difficulty, as a notion of the intrinsic dimension of a measure $\gamma$, \citet{chakraborty2024statistical} introduced the notion of $\alpha$-entropic dimension of a measure. Before we proceed, we recall the $(\epsilon, \tau)$-cover of a measure 
\citep{posner1967epsilon} as: \(\sN_\epsilon(\gamma, \tau) = \inf\{\cN(\epsilon; S, \varrho): \gamma(S) \ge 1-\tau\},\)  i.e. $\sN_\epsilon(\gamma, \tau)$ counts the minimum number of $\epsilon$-balls required to cover a set $S$ of probability at least $1-\tau$.
 \begin{defn}[Entropic dimension, \citealp{chakraborty2024statistical}]\label{ed}\normalfont
    For any $\alpha>0$, we define the $\alpha$-entropic dimension of $\gamma$ as:
     \[\bar{d}_\alpha(\gamma) = \limsup_{\epsilon \downarrow 0} \frac{\log \sN_\epsilon(\gamma,\epsilon^\alpha)}{\log (1/\epsilon)}.\]
\end{defn}

 This notion extends Dudley's notion of entropic dimension \citep{dudley1969speed} to characterize the convergence rate for the BiGAN problem \citep{donahue2017adversarial}. \citet{chakraborty2024statistical} showed that the entropic dimension is not larger than the upper Minkowski dimension and the upper Wasserstein dimension \citep{weed2019sharp}. Furthermore, strict inequality holds even for simplistic examples for measures on the unit hypercube. We refer the reader to Section~3 of \citet{chakraborty2024statistical}. \citet{chakraborty2024statistical} showed that the entropic dimension is a more efficient way of characterizing the intrinsic dimension of the data distributions compared to popular measures such as the upper Minkowski dimension or the Wasserstein dimension \citep{weed2019sharp} as the entropic dimension enables us to derive a faster rate of convergence of the estimates. Importantly, $\bar{d}_\alpha(\gamma) \le \overline{dim}_M(\gamma)$, with strict inequality holding, even for simplistic cases (see examples 10 and 11 of \citealp{chakraborty2024statistical}). 

As an intrinsically low-dimensional probability measure is not guaranteed to be dominated by the Lebesgue measure, we remove Assumption A\ref{a5}. Under only Assumptions A\ref{model}--\ref{a4}, if the network sizes are properly chosen, the rate of convergence of $\hat{f}$ to $f_0$ under the $\fL_2(\lambda)$-norm decays at a rough rate of $\Tilde{\cO}\left(n^{-2\beta /(2\beta + \bar{d}_{2\beta}(\lambda))}\right)$, as shown by Theorem~\ref{main_intrinsic}.
\begin{thm}\label{main_intrinsic}
    Suppose Assumptions A\ref{model}--\ref{a4} holds and let $d^\star > \bar{d}_{2\beta}(\lambda)$. Then we can choose $\cF = \cR\cN(L,W,2C)$ with $L \precsim \log n$ and $W \precsim n^{\frac{d^\star}{2 \beta + d^\star}} \log n$, such that, with probability at least $1 - 3\exp\left(-n^{\frac{d^\star}{2 \beta + d^\star}}\right) $,
\[ \|\hat{f} - f_0\|^2_{\fL_2(\lambda)} \precsim_n  n^{-\frac{2\beta}{2\beta + d^\star}} (\log n)^5,\]
for $n \ge n_0$, where, $n_0$ depends on $d$ and $\sP$.
\end{thm}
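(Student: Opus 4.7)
The plan is to follow the oracle-inequality decomposition that underlies Theorem~\ref{main_full}: by Lemma~\ref{lemoct}, control of the excess risk $\mathfrak{R}(\hat{f})$ is equivalent (up to the factor $\sigma_1/\sigma_2$) to control of $\|\hat{f} - f_0\|_{\fL_2(\lambda)}^2$, which I would split into an approximation term $\inf_{f \in \cF}\|f - f_0\|_{\fL_2(\lambda)}^2$ and an estimation term driven by an empirical process over $\cF = \cR\cN(L, W, 2C)$. The essential novelty here is that, having dropped Assumption~A\ref{a5}, one can no longer rely on a regular $\epsilon$-grid on $[0,1]^d$; the approximation must instead be carried out on a high-probability $(\epsilon, \epsilon^{2\beta})$-cover of $\lambda$ whose cardinality is governed by $\bar{d}_{2\beta}(\lambda)$.

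For the approximation step, fix any $d^\star > \bar{d}_{2\beta}(\lambda)$. By Definition~\ref{ed}, for all sufficiently small $\epsilon$ there exists $S_\epsilon \subseteq [0,1]^d$ with $\lambda(S_\epsilon) \ge 1 - \epsilon^{2\beta}$ admitting an $\epsilon$-cover of cardinality at most $\epsilon^{-d^\star}$. Lemma~\ref{lem_approx} then produces $\tilde{f} \in \cR\cN(L, W, 2C)$ with $L \precsim \log(1/\epsilon)$, $W \precsim \epsilon^{-d^\star}\log(1/\epsilon)$, and $\sup_{\bx \in S_\epsilon}|\tilde{f}(\bx) - f_0(\bx)| \precsim \epsilon^{\beta}$. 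Splitting the defining integral of $\|\tilde{f} - f_0\|_{\fL_2(\lambda)}^2$ over $S_\epsilon$ and $S_\epsilon^c$, using A\ref{a2} to bound $|f_0| \le C$ and the network's output by $2C$, yields an approximation error of order $\epsilon^{2\beta}$ on both pieces.

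For the estimation step, I would exploit A\ref{a4}: $\phi$ is $\tau_2$-smooth and $\tau_1$-strongly convex and $\mu$ is $\sigma_1$-Lipschitz, so the excess Bregman loss $d_\phi(y\|\mu(f(\bx))) - d_\phi(y\|\mu(f_0(\bx)))$ is locally quadratic and Lipschitz in $f(\bx) - f_0(\bx)$ for bounded $y$. Because $y$ has sub-exponential tails coming from the exponential family, a truncation of $y$ at level $\log n$ isolates a bounded, Lipschitz loss with negligible tail. A Bernstein/Talagrand-type local empirical-process bound for $\cR\cN(L, W, 2C)$, combined with standard metric-entropy estimates polynomial in $W$ and $L$, then yields an estimation term of order $\tilde{\cO}(WL/n)$ after a peeling argument that exploits the quadratic curvature. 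Balancing approximation against estimation through $\epsilon = n^{-1/(2\beta + d^\star)}$ forces the choices $L \precsim \log n$ and $W \precsim n^{d^\star/(2\beta + d^\star)}\log n$ and yields the rate $\tilde{\cO}(n^{-2\beta/(2\beta + d^\star)})$; tuning the deviation level in Talagrand's inequality to $n^{d^\star/(2\beta + d^\star)}$ delivers the advertised probability.

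The main technical obstacle is the approximation step. Under A\ref{a5}, one can tile $[0,1]^d$ into cubes of side $\epsilon$ and glue together local Taylor polynomials using a regular ReLU partition-of-unity; here, the set $S_\epsilon$ carrying the mass of $\lambda$ is completely arbitrary, so the partition-of-unity bumps must be attached to the $\epsilon$-balls covering $S_\epsilon$ rather than to grid cells. Ensuring that this irregular gluing does not inflate the network size beyond $\epsilon^{-d^\star}\log(1/\epsilon)$, and that the resulting $\fL_2(\lambda)$-error genuinely scales with the \emph{entropic} (rather than Minkowski) covering number of $\lambda$, is precisely what the improved $\fL_p$-approximation statement in Lemma~\ref{lem_approx} is designed to enable, and it is where I expect essentially all of the difficulty of the proof to concentrate.
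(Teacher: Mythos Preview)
Your high-level plan---oracle decomposition into an $\fL_2(\lambda)$-approximation term and a localized estimation term, balanced at $\epsilon \asymp n^{-1/(2\beta+d^\star)}$---is exactly what the paper does, and your probability bookkeeping lands in the right place. Two specifics deserve correction.

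First, you misidentify the approximation tool. Lemma~\ref{lem_approx} is an $\fL_p(\mathrm{Leb})$ statement built on a \emph{regular} grid of $[0,1]^d$; it is invoked only for Theorem~\ref{main_full} and says nothing about irregular covers or entropic dimension. For Theorem~\ref{main_intrinsic} the paper does not redo the partition-of-unity construction you sketch; it simply cites \citet[Theorem~18]{chakraborty2024statistical}, which already delivers $f^\ast \in \cR\cN(L_\epsilon, W_\epsilon, 2C)$ with $L_\epsilon \precsim \log(1/\epsilon)$, $W_\epsilon \precsim \epsilon^{-d^\star/\beta}\log(1/\epsilon)$ and $\|f^\ast - f_0\|_{\fL_2(\lambda)} \le \epsilon$. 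Your description (local Taylor polynomials at the centers of an $(\epsilon,\epsilon^{2\beta})$-cover, glued by ReLU bumps, with the complement controlled by boundedness) is in fact how that external theorem is proved, so the idea is right---just not a consequence of Lemma~\ref{lem_approx}.

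Second, the estimation step is simpler than you make it. Under A\ref{a4} the log-partition $\Psi$ is $\sigma_1$-smooth, which forces $\xi = y - \mu(f_0(\bx))$ to be $\sigma_1$-sub-Gaussian (Lemma~\ref{lem20.1}), not merely sub-exponential; no truncation of $y$ at $\log n$ is needed. The paper reuses Lemma~\ref{thm_17.1} verbatim---that lemma was proved once and serves both Theorems~\ref{main_full} and~\ref{main_intrinsic}---then plugs in $\operatorname{Pdim}(\cF) \precsim W_\epsilon L_\epsilon \log W_\epsilon$ from \cite{bartlett2019nearly} and the approximation bound above. The entire proof of Theorem~\ref{main_intrinsic} in the paper is five display lines; all the work was already absorbed into Lemma~\ref{thm_17.1} and the external approximation theorem.
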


Since the normal-noise regression model with $\beta$-H\"{o}lder $f_0$ is a special case of our model in \eqref{e1}, Theorem~\ref{main_intrinsic} derives a faster rate compared to \citet[Theorem 7]{nakada}, who show a rate of $\Tilde{\cO}\left(n^{-\frac{2\beta}{2\beta + \overline{\text{dim}}_M(\lambda)}}\right)$. This is because the upper Minkowski dimension is at least the $2\beta$-entropic dimension by \citet[Proposition 8(c)]{chakraborty2024statistical}, i.e. $\bar{d}_{2\beta}(\lambda) \le \overline{\text{dim}}_M(\lambda) $. 

An immediate corollary of Theorem~\ref{main_intrinsic} is that the expected test-error rate follows the same rate of decay. The proof of this result can be done following the proof of Corollary \ref{cor_1}.
\begin{cor}
    Under the assumptions and choices of Theorem \ref{main_intrinsic}, $\E \|\hat{f} - f_0\|^2_{\fL_2(\lambda)} \precsim  n^{-\frac{2\beta}{2\beta + d^\star}} (\log n)^5 $.
\end{cor}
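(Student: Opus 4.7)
The plan is to convert the high-probability bound from Theorem~\ref{main_intrinsic} into a bound in expectation by splitting the expectation over the good event (where the high-probability bound holds) and its complement, mirroring the strategy used for Corollary~\ref{cor_1}.

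First, I would observe that since $\hat{f}$ is an empirical minimizer over $\cF = \cR\cN(L,W,2C)$, we have $\|\hat{f}\|_\infty \le 2C$ by construction, and by Assumption~A\ref{a2}, $\|f_0\|_\infty \le \|f_0\|_{\sH^\beta} \le C$. Consequently, the random variable $\|\hat{f} - f_0\|_{\fL_2(\lambda)}^2$ is deterministically bounded by $(3C)^2 = 9 C^2$, which is the crucial ingredient allowing the high-probability bound to be upgraded to a bound on the expectation.

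Next, let $\cE_n$ denote the good event on which
\[ \|\hat{f} - f_0\|^2_{\fL_2(\lambda)} \le K \, n^{-\tfrac{2\beta}{2\beta + d^\star}} (\log n)^5,\]
for the implicit constant $K$ from Theorem~\ref{main_intrinsic}. By that theorem, $\prob(\cE_n^c) \le 3 \exp(-n^{d^\star/(2\beta + d^\star)})$. Decomposing the expectation,
\[ \E \|\hat{f} - f_0\|^2_{\fL_2(\lambda)} = \E \left[\|\hat{f} - f_0\|^2_{\fL_2(\lambda)} \one_{\cE_n}\right] + \E \left[\|\hat{f} - f_0\|^2_{\fL_2(\lambda)} \one_{\cE_n^c}\right] \le K \, n^{-\tfrac{2\beta}{2\beta + d^\star}} (\log n)^5 + 27 C^2 \exp\!\left(-n^{\tfrac{d^\star}{2\beta + d^\star}}\right).\]

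The final step is to observe that the exponential term $\exp(-n^{d^\star/(2\beta+d^\star)})$ decays faster than any negative power of $n$, so for $n$ sufficiently large it is dominated by $n^{-2\beta/(2\beta + d^\star)} (\log n)^5$. Hence the right-hand side is $\precsim_n n^{-2\beta/(2\beta + d^\star)}(\log n)^5$, which is exactly the claim. I do not anticipate any genuine obstacle here, since the proof is a direct consequence of the uniform boundedness of $\hat{f} - f_0$ combined with the exponential tail bound already established in Theorem~\ref{main_intrinsic}; the only thing to be careful about is carrying through the correct constants and ensuring that the threshold $n_0$ accommodates both the high-probability bound and the comparison of the two decaying terms.
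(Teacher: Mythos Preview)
The proposal is correct and follows essentially the same approach as the paper: the paper simply states that this corollary ``can be done following the proof of Corollary~\ref{cor_1},'' and that proof is precisely the good-event/bad-event decomposition you describe, using the uniform bound $\|\hat{f}-f_0\|_{\fL_2(\lambda)}^2 \le 9C^2$ on the complement and absorbing the $27C^2\exp(-n^{d^\star/(2\beta+d^\star)})$ tail into the polynomial rate. Your argument matches the paper's line by line.
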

One can state that a rate similar to that observed in Theorem~\ref{main_intrinsic} holds when the support of $\lambda$ is regular. We recall the definition \citep[Definition 6]{weed2019sharp} of regular sets in $[0,1]^d$ as follows.
\begin{defn}[Regular sets]
    We say  a set $\fM$ is $\tilde{d}$-regular w.r.t. the $\tilde{d}$-dimensional Hausdorff measure $\fH^{\tilde{d}}$, if \(\fH^{\tilde{d}}(B_\varrho(x, r)) \asymp r^{\tilde{d}},\)
for all $x \in \fM$.  Recall that the $d$-Hausdroff measure of a set $S$ is defined as, \(\fH^d(S) := \liminf_{\epsilon \downarrow 0} \left\{\sum_{k=1}^\infty r_k^d : S \subseteq \sum_{k=1}^\infty B_\varrho (x_k, r_k), r_k \le \epsilon, \, \forall k\right\}.\)
\end{defn}
Examples of regular sets include compact $\tilde{d}$-dimensional differentiable manifolds; nonempty, compact convex sets spanned by an affine space of dimension $\tilde{d}$; the relative boundary of a nonempty, compact convex set of dimension $\tilde{d}+1$; or a self-similar set with similarity dimension $\tilde{d}$. When the support of $\lambda$ is $\tilde{d}$-regular, it can be shown that $\bar{d}_\alpha(\lambda) \le \tilde{d}$. Formally,
\begin{restatable}{lem}{lemtwelve}\label{lem12}
     Suppose that the support of $\gamma$ is $\tilde{d}$-regular. Then, $\bar{d}_\alpha(\gamma) \le \tilde{d}$, for any $\alpha>0$. Furthermore, if $\gamma \ll \fH^{\tilde{d}}$, $ \bar{d}_\alpha(\gamma) = \tilde{d}$. 
\end{restatable}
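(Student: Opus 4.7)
The plan is to prove the two inequalities $\bar{d}_\alpha(\gamma) \le \tilde{d}$ and (under $\gamma \ll \fH^{\tilde{d}}$) $\bar{d}_\alpha(\gamma) \ge \tilde{d}$ separately. For the upper bound, I observe that $\fM := \supp(\gamma)$ is compact and $\tilde{d}$-regular, which gives a standard packing bound: for any $\epsilon$-packing $\{x_1,\dots,x_m\} \subseteq \fM$, the balls $B_\varrho(x_i, \epsilon/2)$ are pairwise disjoint, and each satisfies $\fH^{\tilde{d}}(B_\varrho(x_i, \epsilon/2)) \succsim \epsilon^{\tilde{d}}$ by $\tilde{d}$-regularity. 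Since $\fM$ is compact, $\fH^{\tilde{d}}(\fM) < \infty$ (cover $\fM$ by finitely many unit balls, each of finite Hausdorff measure by regularity), so summing gives $m \precsim \epsilon^{-\tilde{d}}$. The covering number is dominated by the packing number, hence $\cN(\epsilon; \fM, \varrho) \precsim \epsilon^{-\tilde{d}}$. As $\gamma(\fM) = 1$, we trivially have $\sN_\epsilon(\gamma, \epsilon^\alpha) \le \cN(\epsilon; \fM, \varrho) \precsim \epsilon^{-\tilde{d}}$; taking logarithms and $\limsup$ as $\epsilon \downarrow 0$ yields $\bar{d}_\alpha(\gamma) \le \tilde{d}$.

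For the lower bound, I argue by contradiction. Suppose $\bar{d}_\alpha(\gamma) < d' < \tilde{d}$. Then for every $\epsilon$ small enough, the definition of $\sN_\epsilon$ produces a set $S_\epsilon$ with $\gamma(S_\epsilon) \ge 1 - \epsilon^\alpha$ admitting an $\epsilon$-cover $B_1, \dots, B_N$ of cardinality $N \precsim \epsilon^{-d'}$. For each index $i$ with $B_i \cap \fM \ne \emptyset$ I pick $x_i \in B_i \cap \fM$; then $B_i \cap \fM \subseteq B_\varrho(x_i, 2\epsilon)$, and $\tilde{d}$-regularity gives $\fH^{\tilde{d}}(B_i \cap \fM) \precsim \epsilon^{\tilde{d}}$. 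Consequently,
\[
\fH^{\tilde{d}}(S_\epsilon \cap \fM) \le \sum_{i : B_i \cap \fM \ne \emptyset} \fH^{\tilde{d}}(B_i \cap \fM) \precsim \epsilon^{\tilde{d} - d'} \to 0.
\]
Since $\fH^{\tilde{d}}|_\fM$ is a finite measure and $\gamma \ll \fH^{\tilde{d}}|_\fM$, absolute continuity is uniform: $\fH^{\tilde{d}}(A_n) \to 0$ forces $\gamma(A_n) \to 0$. Applied to $A = S_\epsilon \cap \fM$, together with $\gamma(S_\epsilon) = \gamma(S_\epsilon \cap \fM)$ (because $\gamma(\fM) = 1$), this gives $\gamma(S_\epsilon) \to 0$, contradicting $\gamma(S_\epsilon) \ge 1 - \epsilon^\alpha \to 1$. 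Hence $\bar{d}_\alpha(\gamma) \ge \tilde{d}$, and combined with the first part this yields equality.

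The one mild obstacle is that the regularity hypothesis $\fH^{\tilde{d}}(B_\varrho(x, r)) \asymp r^{\tilde{d}}$ is only assumed for $x \in \fM$, whereas in the lower-bound argument the covering balls $B_i$ may have centers outside $\fM$. This is resolved cleanly by the doubling trick $B_i \cap \fM \subseteq B_\varrho(x_i, 2\epsilon)$ with $x_i \in \fM$, which reduces the bound back to a regular-ball estimate. Everything else is a routine transcription of the standard packing/covering comparison and of the uniform absolute continuity of $\gamma$ with respect to the finite reference measure $\fH^{\tilde{d}}|_\fM$.
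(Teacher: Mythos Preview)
Your argument is correct and self-contained. The paper, by contrast, argues by citation: for the upper bound it chains $\bar{d}_\alpha(\gamma) \le \overline{\dim}_M(\gamma)$ (Proposition~8(c) of \citet{chakraborty2024statistical}) with $\overline{\dim}_M(\gamma) = \tilde{d}$ for $\tilde{d}$-regular supports (Proposition~7 of \citet{weed2019sharp}); for the lower bound under $\gamma \ll \fH^{\tilde{d}}$ it invokes $d_\ast(\gamma) = \tilde{d}$ (Proposition~8 of \citet{weed2019sharp}) together with the comparison $\bar{d}_\alpha(\gamma) \ge d_\ast(\gamma)$ from Proposition~8 of \citet{chakraborty2024statistical}, where $d_\ast$ is the lower Wasserstein dimension. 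Your route bypasses both the Minkowski and Wasserstein dimensions entirely: the upper bound is the standard packing estimate from Ahlfors regularity, and the lower bound is a direct Hausdorff-measure estimate on the covering set combined with the $\epsilon$--$\delta$ form of absolute continuity (valid because $\gamma$ is finite). The paper's proof is shorter once one accepts the external propositions, but yours exposes the actual mechanism---that regularity pins down the growth of covering numbers and that $\gamma \ll \fH^{\tilde{d}}$ prevents $\gamma$-mass from hiding in $\fH^{\tilde{d}}$-small sets---without any detour through auxiliary dimension notions.
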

Thus, applying Theorem~\ref{main_intrinsic} and Lemma~\ref{lem12}, we note that $\|\hat{f}-f_0\|_{\mathbb{L}_2(\lambda)}^2$ decays at most at a rate of $\Tilde{\cO}\left(n^{-2\beta /(2\beta + \tilde{d})}\right)$, resulting in the following corollary.
\begin{cor}\label{cor10}
    Suppose Assumptions A\ref{model}--\ref{a4} and the support of $\lambda$ is $\tilde{d}$-regular.  Let $d^\star > \tilde{d}$, then we can choose $\cF = \cR\cN(L,W,2C)$ with $L \precsim \log n$ and $W \precsim n^{\frac{d^\star}{2 \beta + d^\star}} \log n$, such that, with probability at least $1 - 3\exp\left(-n^{\frac{d^\star}{2 \beta + d^\star}}\right) $,
\( \|\hat{f} - f_0\|^2_{\fL_2(\lambda)} \precsim_n  n^{-\frac{2\beta}{2\beta + d^\star}} (\log n)^5,\)
for $n \ge n_0$, where, $n_0$ depends on $d$ and $\sP$.
\end{cor}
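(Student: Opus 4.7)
\textbf{Proof proposal for Corollary~\ref{cor10}.}
The plan is to deduce Corollary~\ref{cor10} as a direct consequence of Theorem~\ref{main_intrinsic} combined with Lemma~\ref{lem12}. The main observation is that Theorem~\ref{main_intrinsic} is already stated for any $d^\star$ that strictly exceeds the entropic dimension $\bar{d}_{2\beta}(\lambda)$, so all we need is a way to dominate $\bar{d}_{2\beta}(\lambda)$ by the regularity exponent $\tilde{d}$ of $\operatorname{supp}(\lambda)$. That domination is precisely the content of Lemma~\ref{lem12}.

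The execution proceeds in three short steps. First, I would invoke Lemma~\ref{lem12} with $\alpha = 2\beta$ and $\gamma = \lambda$: since $\operatorname{supp}(\lambda)$ is $\tilde{d}$-regular, we immediately get $\bar{d}_{2\beta}(\lambda) \le \tilde{d}$. Second, I would fix any $d^\star > \tilde{d}$ as in the statement; by the inequality of the previous step, this $d^\star$ also satisfies $d^\star > \bar{d}_{2\beta}(\lambda)$, which is the precondition needed to invoke Theorem~\ref{main_intrinsic}. Third, I would apply Theorem~\ref{main_intrinsic} directly: choosing $\cF = \cR\cN(L,W,2C)$ with $L \precsim \log n$ and $W \precsim n^{d^\star/(2\beta + d^\star)} \log n$ yields, with probability at least $1 - 3\exp(-n^{d^\star/(2\beta + d^\star)})$,
\[
\|\hat{f} - f_0\|_{\fL_2(\lambda)}^2 \precsim_n n^{-\frac{2\beta}{2\beta + d^\star}} (\log n)^5,
\]
for all $n \ge n_0$, where $n_0$ depends on $d$ and $\sP$. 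This is exactly the claim of the corollary.

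Because this is essentially a bridging result, there is no substantive technical obstacle in this proof itself; all the heavy lifting has been done in establishing Theorem~\ref{main_intrinsic} (which provides the statistical rate under an entropic-dimension hypothesis) and Lemma~\ref{lem12} (which relates regularity of the support to the entropic dimension). The only small point to verify is that the constants hidden in the $\precsim_n$ in Theorem~\ref{main_intrinsic} may depend on $\sP$ and on the gap $d^\star - \bar{d}_{2\beta}(\lambda)$; here they are subsumed into the $\precsim_n$ of the corollary and into the threshold $n_0$, so no additional bookkeeping is required.
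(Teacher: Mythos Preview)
Your proposal is correct and mirrors the paper's own argument exactly: the paper states that Corollary~\ref{cor10} follows by applying Theorem~\ref{main_intrinsic} together with Lemma~\ref{lem12}, which is precisely the three-step bridge you outline. There is nothing to add.
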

    Since compact $\tilde{d}$-dimensional differentiable manifolds are a special case of $\tilde{d}$-regular sets, Corollary~\ref{cor10} recovers the results by \cite{chen2022nonparametric} as a special case i.e.,  an additive Gaussian-noise regression model. Importantly, this recovery is achieved without imposing assumptions about uniform sharpness on the manifold, as done by \citet[Assumption 2]{chen2022nonparametric}.
 
\section{Proof of the Main Results}\label{sec_pf}
This section discusses the proof of the main results of this paper, i.e, Theorems \ref{main_full}, \ref{thm_minmax} and \ref{main_intrinsic}, with proofs of auxiliary supporting lemmas appearing in the appendix. The proof of the main upper bounds (Theorems \ref{main_full} and \ref{main_intrinsic}) are presented in Section \ref{pf_upper}, while the minimax lower bound is proved in Section \ref{pf_lower}.
\subsection{Proof of the Upper Bounds}\label{pf_upper}

In order to prove Theorem~\ref{main_full}, we first decompose the error through an oracle inequality. For any vector $v \in \Real^q$, we denote $\vertiii{v }_{p,q} = \left(\frac{1}{q} \sum_{i=1}^q |v_i|^p\right)^{1/p}$.

\begin{restatable}[Oracle inequality]{lem}{lemoracle}\label{lem_oracle}
Let $f^\ast \in \cF$.  Suppose that $\xi_i  = y_i - \mu(f_0(\bx_i))$, $\hat{\Delta}_i = \mu(\hat{f}(\bx_i)) - \mu(f_0(\bx_i))$ and $\Tilde{\Delta}_i = \nabla \phi(\mu(f^\ast(\bx_i))) - \nabla \phi(\mu(f_0(\bx_i)))$. Then, 
\begin{equation}\label{e_11}
    \tau_1 \vertiii{\hat{\Delta}}_{2,n}^2 \le \tau_2 \|\mu(f^\ast) - \mu(f_0) \|_{\fL_2(\lambda_n)}^2 + \frac{1}{n} \sum_{i=1}^n \xi_i \Tilde{\Delta}_i.
\end{equation}
\end{restatable}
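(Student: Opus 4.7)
The plan is to derive the quadratic bound on $\vertiii{\hat\Delta}_{2,n}^2$ from the optimality of $\hat f$ via Bregman-divergence algebra and Legendre duality between $\Psi$ and $\phi$. Since $\hat f \in \cF$ is the empirical risk minimizer and $f^\ast \in \cF$, ERM optimality gives $\frac{1}{n}\sum_i d_\phi(y_i\|\mu(\hat f(\bx_i))) \le \frac{1}{n}\sum_i d_\phi(y_i\|\mu(f^\ast(\bx_i)))$. The Legendre identity $\Psi(\theta)+\phi(\mu(\theta)) = \theta\mu(\theta)$ together with $\nabla\phi\circ\mu = \mathrm{id}$ puts the loss in exponential-family form $d_\phi(y\|\mu(\theta)) = \phi(y) - y\theta + \Psi(\theta)$; the $\phi(y_i)$ pieces cancel, leaving
\[
\frac{1}{n}\sum_{i=1}^n \bigl[\Psi(\hat f(\bx_i)) - \Psi(f^\ast(\bx_i))\bigr] \;\le\; \frac{1}{n}\sum_{i=1}^n y_i \bigl(\hat f(\bx_i) - f^\ast(\bx_i)\bigr).
\]

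Next, I expand $\Psi(\hat f(\bx_i))$ and $\Psi(f^\ast(\bx_i))$ around $f_0(\bx_i)$ via $\Psi(u) = \Psi(v) + \mu(v)(u-v) + d_\Psi(u\|v)$ with $v = f_0(\bx_i)$: the $\Psi(f_0)$ terms cancel, and the linear contribution $\mu(f_0(\bx_i))(\hat f - f^\ast)(\bx_i)$ combines with the RHS $y_i(\hat f - f^\ast)(\bx_i)$ to yield the noise $\xi_i(\hat f - f^\ast)(\bx_i)$, producing
\[
\frac{1}{n}\sum_{i=1}^n d_\Psi(\hat f(\bx_i)\|f_0(\bx_i)) \;\le\; \frac{1}{n}\sum_{i=1}^n d_\Psi(f^\ast(\bx_i)\|f_0(\bx_i)) + \frac{1}{n}\sum_{i=1}^n \xi_i \bigl(\hat f(\bx_i) - f^\ast(\bx_i)\bigr).
\]
The duality identity $d_\Psi(\theta\|\theta_0) = d_\phi(\mu(\theta_0)\|\mu(\theta))$ (immediate from the Legendre relation and $\nabla\phi\circ\mu=\mathrm{id}$) converts both $\Psi$-Bregmans into $\phi$-Bregmans. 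Under A\ref{a4}, $\phi$ is $\tau_1$-strongly convex and $\tau_2$-smooth via \citet[Theorem 6]{kakade2009duality}, so the LHS is at least $\tfrac{\tau_1}{2n}\sum_i \hat\Delta_i^2$ and the first RHS term is at most $\tfrac{\tau_2}{2n}\|\mu(f^\ast) - \mu(f_0)\|_{\fL_2(\lambda_n)}^2$.

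Finally, to match the stochastic term with $\tilde\Delta_i$, I use $\nabla\phi\circ\mu = \mathrm{id}$ to write $(\hat f - f^\ast)(\bx_i) = (\hat f(\bx_i) - f_0(\bx_i)) - \tilde\Delta_i$, splitting the noise into an $\hat f$-dependent residual and $-\frac{1}{n}\sum_i \xi_i \tilde\Delta_i$. The $\hat f$-dependent residual is absorbed into the LHS's $\hat\Delta^2$ term using the bi-Lipschitzness of $\nabla\phi$ on $\mathrm{range}(\mu)$ (Lipschitz constants $\tau_1$ and $\tau_2$ by A\ref{a4}); rearranging and tracking the sign conventions yields the stated inequality (up to the absorption constants in $\tau_1,\tau_2$). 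The main obstacle is precisely this last absorption: naive algebra produces an $\hat f$-dependent noise piece, and removing it while preserving the clean $f^\ast$-specific form of the RHS requires the bi-Lipschitz structure of $\nabla\phi$. Without A\ref{a4}, one would instead have to carry a $\sup_{f \in \cF}$ of an empirical process inside the oracle inequality; A\ref{a4} is exactly the hypothesis that avoids this and keeps the RHS independent of $\hat f$.
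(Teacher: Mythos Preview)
Your derivation up through the inequality
\[
\frac{1}{n}\sum_{i=1}^n d_\Psi\bigl(\hat f(\bx_i)\,\|\,f_0(\bx_i)\bigr) \;\le\; \frac{1}{n}\sum_{i=1}^n d_\Psi\bigl(f^\ast(\bx_i)\,\|\,f_0(\bx_i)\bigr) + \frac{1}{n}\sum_{i=1}^n \xi_i \bigl(\hat f(\bx_i) - f^\ast(\bx_i)\bigr)
\]
is correct and, via the duality $d_\Psi(\theta\|\theta_0)=d_\phi(\mu(\theta_0)\|\mu(\theta))$ together with $\nabla\phi\circ\mu=\mathrm{id}$, is exactly the paper's key identity. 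Applying the $\tau_1$-strong-convexity and $\tau_2$-smoothness bounds on $\phi$ then yields the oracle inequality with stochastic term $\frac{1}{n}\sum_i \xi_i\bigl(\nabla\phi(\mu(\hat f(\bx_i)))-\nabla\phi(\mu(f^\ast(\bx_i)))\bigr)$. That is where the paper's proof stops; up to this point your argument is essentially identical to it.

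The gap is your final ``absorption'' step, where you try to replace this $\hat f$-dependent stochastic term by the $\hat f$-\emph{independent} one the lemma literally states, namely $\frac{1}{n}\sum_i\xi_i\tilde\Delta_i$ with $\tilde\Delta_i=\nabla\phi(\mu(f^\ast(\bx_i)))-\nabla\phi(\mu(f_0(\bx_i)))$. This cannot work. Bi-Lipschitzness of $\nabla\phi$ lets you compare $\hat f(\bx_i)-f_0(\bx_i)$ with $\hat\Delta_i$, but the residual $\frac{1}{n}\sum_i\xi_i\bigl(\hat f(\bx_i)-f_0(\bx_i)\bigr)$ is \emph{linear} in the noise $\xi_i$, which is unbounded (only sub-Gaussian) and not controlled by $\hat\Delta_i$; any Young-type absorption leaves an extra $\frac{c}{n}\sum_i\xi_i^2$ on the right, not the stated term, and your ``tracking the sign conventions'' cannot repair this. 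In fact the lemma as printed is false: if $f_0\in\cF$ and you take $f^\ast=f_0$, the right-hand side vanishes identically while the left-hand side is strictly positive whenever the noise perturbs $\hat f$ away from $f_0$ on the sample. The printed definition of $\tilde\Delta_i$ is evidently a typo---the paper's own proof arrives at $\nabla\phi(\mu(\hat f(\bx_i)))-\nabla\phi(\mu(f^\ast(\bx_i)))$, and the downstream localization in Lemma~\ref{lem_17.3} controls precisely this $\hat f$-dependent object via a supremum over $\sG_\delta=\{\nabla\phi(\mu(f))-\nabla\phi(\mu(f'))\colon f,f'\in\cF,\ \|f-f'\|_{\fL_\infty(\lambda_n)}\le\delta\}$. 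Your instinct that the right-hand side should be free of $\hat f$ is natural but misplaced here: that dependence is removed later by localization, not at the oracle-inequality stage.
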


The first term in the right-hand side (RHS) of \eqref{e_11} is analogous to an approximation error while the second term is akin to a generalization gap. It is worth noting that while taking a large network reduces the approximation error, it can potentially give rise to a large generalization gap and vice versa. The key idea is to select a network of appropriate size that ensures that both these errors are small enough. In the following two sections, we control these terms individually.
\subsubsection{Generalization Error}\label{gen_sec}
To effectively control the generalization error, we employ standard localization techniques; see, for example,~\citet[Chapter 14]{wainwright_2019}. These techniques are instrumental in achieving rapid convergence of the sample estimator to the population estimator in the $\fL_2(\lambda)$ norm. It is important to note that in some cases, the true function, denoted as $f_0$, may not be precisely representable by a ReLU network. We establish a high-probability bound for the squared $\fL_2(\lambda)$ norm difference between our estimated function $\hat{f}$ and $f^\ast$, where we will take $f^\ast$ to belong in the neural network function class, close enough to $f_0$. Our strategy revolves around a two-step process: firstly, we derive a local complexity bound, as outlined in Lemma~\ref{lem_17.3} and subsequently, we leverage this local complexity bound to derive an estimate for $\|\hat{f} - f^\ast\|^2_{\fL_2(\lambda_n)}$, as elucidated in Lemma~\ref{lem_17.2}. Here $\lambda_n$ denotes the empirical distribution of the explanatory variables. We then use this result to control $\|\hat{f} - f^\ast\|^2_{\fL_2(\lambda)}$ in Lemma~\ref{lem_17.4} for large $n$. We state these results subsequently with proofs appearing in Appendix~\ref{ap1}.

\begin{restatable}{lem}{lemel}
    \label{lem_17.3}
    Suppose that 
\(\sG_\delta = \left\{\nabla\phi(\mu(f))- \nabla \phi(\mu(f^\prime)): \|f-f^\prime\|_{\fL_\infty(\lambda_n)} \le \delta \text{ and } f,f^\prime \in \cF \right\}\), with $\delta \le 1/e$. Also let, $n \ge \operatorname{Pdim}(\cF)$. Then, for any $t>0$, with probability (conditioned on $x_{1:n}$) at least $1 -  e^{-nt^2/\delta^2}$,
\begin{align}
     \sup_{g \in \sG_\delta} \frac{1}{n}\sum_{i=1}^n \xi_i g(x_i) \precsim & \, t + \delta \sqrt{\frac{\operatorname{Pdim}(\cF) \log (n/\delta)}{n}}. 
     \label{e11}
\end{align}
Here, $\operatorname{Pdim}(\cF)$ denotes the pseudo-dimension of the function class $\cF$ \citep{anthony1999neural}.
\end{restatable}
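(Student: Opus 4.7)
The plan is a standard localized empirical-process argument: I would bound the uniform envelope and $\fL_\infty$-covering number of $\sG_\delta$, and then combine sub-Gaussian chaining with concentration of the supremum around its mean.

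\emph{Noise tails and envelope.} Since $y|\bx \sim p_{\Psi,f_0(\bx)}$, a direct moment-generating-function computation shows that the cumulant generating function of $\xi = y - \mu(f_0(\bx))$ equals $\Psi(f_0(\bx)+s) - \Psi(f_0(\bx)) - s\,\mu(f_0(\bx))$, which by the $\sigma_1$-smoothness of $\Psi$ (A\ref{a4}) is bounded above by $\sigma_1 s^2/2$. Hence each $\xi_i$ is sub-Gaussian with variance proxy $\sigma_1$ conditional on $\bx_i$. For the envelope, A\ref{a4} together with smoothness--strong-convexity duality \citep[Theorem~6]{kakade2009duality} gives that $\nabla\phi$ is $\tau_2$-Lipschitz and $\mu = \nabla\Psi$ is $\sigma_1$-Lipschitz, so any $g = \nabla\phi(\mu(f)) - \nabla\phi(\mu(f^\prime)) \in \sG_\delta$ satisfies $\|g\|_{\fL_\infty(\lambda_n)} \le \tau_2\sigma_1 \delta$, and $g$ depends in a jointly Lipschitz way on $(f,f^\prime)$ in the same empirical norm.

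\emph{Covering and chaining.} An $\epsilon$-cover of $\cF$ in $\|\cdot\|_{\fL_\infty(\lambda_n)}$ induces, via pairs, an $O(\epsilon)$-cover of $\sG_\delta$. The Haussler-type pseudo-dimension bound \citep[Theorem~12.2]{anthony1999neural} then yields $\log\cN(\epsilon;\sG_\delta,\|\cdot\|_{\fL_\infty(\lambda_n)}) \precsim \operatorname{Pdim}(\cF)\log(n/\epsilon)$ for $n \ge \operatorname{Pdim}(\cF)$. Conditional on $\bx_{1:n}$, the process $g \mapsto M(g) := n^{-1}\sum_i \xi_i g(\bx_i)$ has sub-Gaussian increments with variance proxy $\precsim \|g_1-g_2\|_{\fL_\infty(\lambda_n)}^2/n$, so Dudley's entropy integral yields
\[ \mathbb{E}\Big[\sup_{g\in\sG_\delta} M(g)\,\Big|\,\bx_{1:n}\Big] \precsim \int_0^{\tau_2\sigma_1\delta}\sqrt{\frac{\operatorname{Pdim}(\cF)\log(n/\epsilon)}{n}}\,d\epsilon \precsim \delta\sqrt{\frac{\operatorname{Pdim}(\cF)\log(n/\delta)}{n}}, \]
where $\delta \le 1/e$ keeps the logarithmic factor well-behaved across the range of integration. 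Borell--TIS type concentration for sub-Gaussian processes, with sup-variance $\precsim \delta^2/n$, then adds an excess $t$-term with failure probability at most $\exp(-Cnt^2/\delta^2)$; reparametrizing $t$ absorbs the constant into $\precsim$.

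\emph{Main obstacle.} The delicate point is the \emph{unboundedness} of $\xi_i$, which rules out a direct application of Talagrand-type bounded empirical-process inequalities. I would either stay inside the generic-chaining / Borell--TIS framework for sub-Gaussian processes (which simultaneously delivers the mean bound and high-probability concentration under sub-Gaussian increments), or, alternatively, truncate at $|\xi_i| \le C\sqrt{\log n}$, control the tail contribution trivially using the sub-Gaussian tail of $\xi_i$ from step one, and apply a Bernstein-type inequality to the bounded remainder. Either route yields the claimed tail bound with constants depending only on the parameters from A\ref{a4}.
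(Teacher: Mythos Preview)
Your proposal is correct and follows essentially the same route as the paper: sub-Gaussianity of $\xi_i$ via the $\sigma_1$-smoothness of $\Psi$, a covering-number bound for $\sG_\delta$ inherited from pairs in $\cF$ together with the pseudo-dimension metric-entropy estimate, Dudley's entropy integral to control the conditional mean, and then concentration of the supremum around its mean. The only cosmetic differences are that the paper works with $\fL_2(\lambda_n)$ increments (rather than your $\fL_\infty(\lambda_n)$ ones, which are coarser but suffice here) and, for the final concentration step, invokes a Maurer--Pontil sub-Gaussian supremum inequality in place of your Borell--TIS/truncation alternatives; these are interchangeable tools delivering the same $\exp(-c\,n t^2/\delta^2)$ tail.
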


\begin{restatable}{lem}{lemgentwo}
    \label{lem_17.2}
    Suppose $\alpha \in (0,1/2)$ and $n \ge \max\left\{e^{1/\alpha}, \operatorname{Pdim}(\cF)\right\}$. Then, for any $f^\ast \in \cF$, with probability at least, $1 -  \exp\left(-n^{1-2 \alpha}\right)$, 
\begin{equation}
    \|\hat{f} - f^\ast\|^2_{\fL_2(\lambda_n)} \precsim  n^{-2 \alpha} +\|f^\ast - f_0\|^2_{\fL_2(\lambda_n)} +  \frac{1}{n }\operatorname{Pdim}(\cF) \log n\label{ee_s5}
\end{equation}
\end{restatable}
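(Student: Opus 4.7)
My plan is to combine the oracle inequality of Lemma~\ref{lem_oracle} with a dyadic peeling argument built on the local complexity bound of Lemma~\ref{lem_17.3}. The three main steps are: reduce to bounding a noise term via the oracle inequality, localize that noise uniformly in $\hat f$ at every scale, and conclude via an AM--GM absorption.

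First I would apply Lemma~\ref{lem_oracle} to obtain
\[
\tau_1\|\mu(\hat f) - \mu(f_0)\|_{\fL_2(\lambda_n)}^2 \le \tau_2 \|\mu(f^\ast) - \mu(f_0)\|_{\fL_2(\lambda_n)}^2 + \frac{1}{n}\sum_{i=1}^n \xi_i\tilde\Delta_i.
\]
Invoking the $\sigma_1$-Lipschitz continuity of $\mu = \nabla\Psi$ and the $\tau_2$-Lipschitz continuity of $\nabla\phi = \mu^{-1}$ (both consequences of A\ref{a4}), combined with the triangle inequality $\|\hat f - f^\ast\|^2 \le 2\|\hat f - f_0\|^2 + 2\|f_0 - f^\ast\|^2$, this reduces to
\[
\|\hat f - f^\ast\|_{\fL_2(\lambda_n)}^2 \precsim \|f^\ast - f_0\|_{\fL_2(\lambda_n)}^2 + \frac{1}{n}\sum_{i=1}^n \xi_i\tilde\Delta_i.
\]

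Next I would control the noise term uniformly over the random $\hat f$ by peeling in $\fL_\infty(\lambda_n)$. Because every $f \in \cF = \cR\cN(L,W,2C)$ satisfies $\|f\|_\infty \le 2C$, we have $\|\hat f - f^\ast\|_{\fL_\infty(\lambda_n)} \le 4C$. Fix $\delta_0 = n^{-\alpha}$ and introduce the dyadic shells $S_k = \{f \in \cF : 2^{k-1}\delta_0 \le \|f - f^\ast\|_{\fL_\infty(\lambda_n)} < 2^k\delta_0\}$ for $k = 1,\dots,K$, with $K = \lceil \log_2(4C/\delta_0)\rceil = O(\log n)$, together with the innermost ball $\{f : \|f - f^\ast\|_{\fL_\infty(\lambda_n)} \le \delta_0\}$. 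On each shell apply Lemma~\ref{lem_17.3} with $\delta = (2^k\delta_0)\wedge(1/e)$ and $t = \delta\cdot n^{-\alpha}$: the failure probability $\exp(-nt^2/\delta^2) = \exp(-n^{1-2\alpha})$ is uniform across shells, and a union bound over the $O(\log n)$ shells preserves this failure probability after adjusting constants (since $\log n \cdot e^{-n^{1-2\alpha}} \le e^{-n^{1-2\alpha}/2}$ for large $n$, using $n \ge e^{1/\alpha}$). On the resulting good event, whenever $\hat f \in S_{k^\ast}$ the noise term satisfies $\frac{1}{n}\sum_i \xi_i \tilde\Delta_i \precsim \rho_{k^\ast} B_n$, where $\rho_{k^\ast} = 2^{k^\ast}\delta_0$ and $B_n = n^{-\alpha} + \sqrt{\operatorname{Pdim}(\cF)\log n/n}$.

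Finally, combining with the reduced oracle inequality gives $\|\hat f - f^\ast\|_{\fL_2(\lambda_n)}^2 \precsim \|f^\ast - f_0\|^2 + \rho_{k^\ast} B_n$. Using the shell inequalities $\rho_{k^\ast}\le 2\|\hat f - f^\ast\|_{\fL_\infty(\lambda_n)}$ together with $\fL_2(\lambda_n) \le \fL_\infty(\lambda_n)$ so that $\|\hat f - f^\ast\|_{\fL_2(\lambda_n)}^2 \le \rho_{k^\ast}^2$, AM--GM applied as $\rho_{k^\ast} B_n \le \tfrac12\rho_{k^\ast}^2 + \tfrac12 B_n^2$ absorbs the quadratic-in-radius term into the left-hand side via the self-consistency of the shell, yielding
\[
\|\hat f - f^\ast\|_{\fL_2(\lambda_n)}^2 \precsim \|f^\ast - f_0\|_{\fL_2(\lambda_n)}^2 + B_n^2 \precsim \|f^\ast - f_0\|_{\fL_2(\lambda_n)}^2 + n^{-2\alpha} + \frac{\operatorname{Pdim}(\cF)\log n}{n}.
\]
The principal technical obstacle is this absorption step: the $\fL_\infty(\lambda_n)$-localization delivered by Lemma~\ref{lem_17.3} must be translated into an $\fL_2(\lambda_n)$-bound through the shell inequality, and the outermost shells with $2^k\delta_0 > 1/e$ --- which fall outside the hypothesis $\delta \le 1/e$ of Lemma~\ref{lem_17.3} --- have to be handled by capping $\delta$ at $1/e$ (using that the $\fL_\infty(\lambda_n)$-diameter of the relevant differences is at most a constant) or through a separate coarse uniform-concentration argument, so that the peeling reaches the full range $[0,4C]$ without inflating the failure probability.
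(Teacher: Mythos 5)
Your overall plan---reduce to a localized empirical process via Lemma~\ref{lem_oracle}, control it via Lemma~\ref{lem_17.3}, and then absorb by AM--GM---matches the paper's proof in spirit, and your dyadic peeling with a union bound over $O(\log n)$ shells is a legitimate device to uniformize over the data-dependent radius. The paper instead sets a single data-dependent radius $\delta = \max\{n^{-\alpha}, 2\|\hat f - f_0\|_{\fL_2(\lambda_n)}\}$ and splits into the two cases $\|\hat f - f^\ast\|_{\fL_2(\lambda_n)} \lessgtr \delta$, which is a self-consistency argument that avoids the explicit union bound; both are standard and either could be made rigorous.

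However, there is a genuine gap in your absorption step, and it is not the $\delta\le 1/e$ edge case you flag. You peel in $\fL_\infty(\lambda_n)$, so membership $\hat f\in S_{k^\ast}$ gives the lower bound $\|\hat f - f^\ast\|_{\fL_\infty(\lambda_n)}\ge \rho_{k^\ast}/2$ and the upper bound $\|\hat f - f^\ast\|_{\fL_2(\lambda_n)}\le \|\hat f - f^\ast\|_{\fL_\infty(\lambda_n)} < \rho_{k^\ast}$. After AM--GM you are facing
\[
\|\hat f - f^\ast\|_{\fL_2(\lambda_n)}^2 \precsim \|f^\ast - f_0\|_{\fL_2(\lambda_n)}^2 + \tfrac12\rho_{k^\ast}^2 + \tfrac12 B_n^2,
\]
and to absorb $\tfrac12\rho_{k^\ast}^2$ into the left you need $\rho_{k^\ast}^2 \precsim \|\hat f - f^\ast\|_{\fL_2(\lambda_n)}^2$. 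But the shell inequality only lower-bounds the $\fL_\infty(\lambda_n)$ norm, which dominates the $\fL_2(\lambda_n)$ norm, so it provides no lower bound on $\|\hat f - f^\ast\|_{\fL_2(\lambda_n)}$; the two can differ arbitrarily. ``Self-consistency of the shell'' does not rescue this: the shell certifies largeness of the wrong norm. Without this the best you can do is bound $\rho_{k^\ast}\precsim 1$ and obtain the rate $B_n$ rather than $B_n^2$, which is too slow.

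The fix is to peel in $\fL_2(\lambda_n)$: define $S_k = \{f\in\cF: 2^{k-1}\delta_0 \le \|f-f^\ast\|_{\fL_2(\lambda_n)} < 2^k\delta_0\}$, so that $\hat f\in S_{k^\ast}$ now gives $\rho_{k^\ast}^2 \le 4\|\hat f - f^\ast\|_{\fL_2(\lambda_n)}^2$, and AM--GM with a small constant on the $\rho_{k^\ast}^2$ side absorbs cleanly. This requires reading the localization class $\sG_\delta$ of Lemma~\ref{lem_17.3} with an $\fL_2(\lambda_n)$ constraint $\|f-f'\|_{\fL_2(\lambda_n)}\le\delta$ rather than the stated $\fL_\infty(\lambda_n)$ one; inspecting the proof of Lemma~\ref{lem_17.3}, both the diameter bound $\sup_{g,g'\in\sG_\delta}\|g-g'\|_{\fL_2(\lambda_n)}\precsim\delta$ and the Dudley integral (which uses the $\fL_\infty(\lambda_n)$ covering of $\cF$, an upper bound for the $\fL_2(\lambda_n)$ covering) go through unchanged under the $\fL_2(\lambda_n)$ constraint, so this reading is sound. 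Indeed the paper's own Case~1 hypothesis is an $\fL_2(\lambda_n)$ bound on $\|\hat f - f^\ast\|$, so the paper is also implicitly relying on this $\fL_2$ reading of $\sG_\delta$ to place the relevant function in it. Once you switch the peeling norm, your argument closes.
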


\begin{restatable}{lem}{lemgenone}\label{thm_17.1}
    For $\alpha \in (0,1/2)$, if $n \ge \max\left\{e^{1/\alpha}, \operatorname{Pdim}(\cF)\right\}$, with probability at least $1 - 3 \exp\left(-n^{1-2\alpha}\right)$
\begin{align}
    \|\hat{f} - f_0\|^2_{\fL_2(\lambda)} \precsim n^{-2 \alpha} +\|f^\ast - f_0\|^2_{\fL_2(\lambda)}+  \frac{1}{n }\operatorname{Pdim}(\cF) \log^2 n + \frac{1}{n}\log \log n,
\end{align}
for any $f^\ast \in \cF$.
\end{restatable}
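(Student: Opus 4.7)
The plan is to pass from the empirical $\fL_2(\lambda_n)$-bound of Lemma~\ref{lem_17.2} to a population $\fL_2(\lambda)$-bound via a localized uniform concentration argument over the neural-network class $\cF$. I would first use the triangle inequality
\[\|\hat f - f_0\|_{\fL_2(\lambda)}^2 \le 2\|\hat f - f^\ast\|_{\fL_2(\lambda)}^2 + 2\|f^\ast - f_0\|_{\fL_2(\lambda)}^2\]
to isolate the data-dependent term; the approximation term $\|f^\ast - f_0\|_{\fL_2(\lambda)}^2$ then appears directly in the stated bound. Since $\|f^\ast - f_0\|_{\fL_2(\lambda_n)}^2$ (which is what actually appears on the right-hand side of Lemma~\ref{lem_17.2}) is the sample average of a single bounded function, a one-sided Bernstein inequality controls the gap $\|f^\ast - f_0\|_{\fL_2(\lambda_n)}^2 - \|f^\ast - f_0\|_{\fL_2(\lambda)}^2$ by $\precsim n^{-\alpha}\|f^\ast - f_0\|_{\fL_2(\lambda)} + n^{-1}$ with failure probability at most $\exp(-n^{1-2\alpha})$, which via AM--GM is absorbed into the $n^{-2\alpha}$ and $\|f^\ast - f_0\|_{\fL_2(\lambda)}^2$ terms.

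The main step is a localized uniform concentration inequality of the form
\[\|f - f^\ast\|_{\fL_2(\lambda)}^2 \le 2\|f - f^\ast\|_{\fL_2(\lambda_n)}^2 + C_1\,\frac{\operatorname{Pdim}(\cF)\log^2 n}{n} + \frac{C_2 \log\log n}{n}\]
uniformly over $f \in \cF$, with probability at least $1 - 2\exp(-n^{1-2\alpha})$. I would obtain this via a peeling argument: partition $\cF$ into shells $\cF_k = \{f \in \cF : 2^{-k-1} < \|f - f^\ast\|_{\fL_2(\lambda)} \le 2^{-k}\}$ for $k = 0, 1, \dots, K$ with $K \asymp \log n$, plus a residual shell with $\|f - f^\ast\|_{\fL_2(\lambda)} \le n^{-1/2}$. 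On the squared-difference class $\cG_k = \{(f - f^\ast)^2 : f \in \cF_k\}$, elements are uniformly bounded by $16C^2$ and have variance at most $16C^2 \cdot 2^{-2k}$, since $\cF \subseteq \cR\cN(L,W,2C)$ and $y \mapsto (y - f^\ast)^2$ is $8C$-Lipschitz on the relevant range. The covering numbers satisfy $\cN(\epsilon;\cG_k,\|\cdot\|_\infty) \le \cN(\epsilon/(8C);\cF,\|\cdot\|_\infty)$, which is controlled by $\operatorname{Pdim}(\cF)$. Bernstein's inequality combined with a chaining entropy integral then gives uniform concentration on each shell; a union bound over the $K$ shells produces the $\log^2 n/n$ factor (one $\log n$ from the chaining entropy integral and one from the peeling cutoff) and the $\log\log n/n$ residual (from balancing the shell-wise tails).

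Combining the localized uniform bound with Lemma~\ref{lem_17.2} via a further union bound, absorbing the $2\|\hat f - f^\ast\|_{\fL_2(\lambda_n)}^2$ using the empirical estimate, and substituting into the triangle-inequality decomposition yields the claimed bound with total failure probability at most $3\exp(-n^{1-2\alpha})$, as required.

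The main obstacle will be executing the peeling argument with sufficiently sharp variance control so that the multiplicative constant in front of $\|f - f^\ast\|_{\fL_2(\lambda_n)}^2$ remains an absolute constant (enabling the empirical estimate from Lemma~\ref{lem_17.2} to be applied directly) and so that the residuals sum to the stated $\log^2 n/n$ and $\log\log n/n$ rates rather than something worse. The delicate balance between the Bernstein deviation, the chaining-entropy contribution on each shell, and the union-bound cost across shells is the technical heart of the argument.
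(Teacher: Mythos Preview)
Your plan is correct and matches the paper's structure exactly: triangle inequality to split off $\|f^\ast-f_0\|_{\fL_2(\lambda)}^2$, Bernstein for the single fixed function $(f^\ast-f_0)^2$ to pass $\|f^\ast-f_0\|_{\fL_2(\lambda_n)}^2$ to $\|f^\ast-f_0\|_{\fL_2(\lambda)}^2$, and a localized uniform concentration step to convert $\|\hat f-f^\ast\|_{\fL_2(\lambda_n)}^2$ (from Lemma~\ref{lem_17.2}) into $\|\hat f-f^\ast\|_{\fL_2(\lambda)}^2$.

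The only substantive difference is in the execution of that last step. The paper does \emph{not} redo the peeling argument from scratch; instead it invokes a black-box localization result (Theorem~6.1 of \cite{1444}) applied to the class $\cH=\{(f-f')^2:f,f'\in\cF\}$, after bounding the local Rademacher complexity of $\cH_r=\{h\in\cH:\lambda_n h\le r\}$ by $\precsim\sqrt{r\,\operatorname{Pdim}(\cF)\log n/n}$ via a Dudley entropy integral (this is the paper's Lemma~\ref{lem_bd_rad}). Solving for the fixed point $r^\ast$ gives $r^\ast\precsim\operatorname{Pdim}(\cF)\log^2 n/n$, and Bousquet's theorem then delivers exactly the inequality you wrote, including the $\log\log n/n$ term, with failure probability $e^{-x}$ for any $x>0$. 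Choosing $x=n^{1-2\alpha}$ and union-bounding with Lemma~\ref{lem_17.2} and the Bernstein step gives the three events. This route sidesteps entirely the ``main obstacle'' you flag: the delicate constant-tracking in the peeling, the shell-wise variance control, and the union-bound bookkeeping are all absorbed into the cited theorem. Your hand-rolled peeling would work and is morally the same argument, but is strictly more labor for no gain here.
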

In Lemmata \ref{lem_17.2} and \ref{thm_17.1}, one can think of $f^\ast$ as the closest member of $\cF$ to $f_0$, making the term, $\|f^\ast - f_0\|^2_{\fL_2(\lambda)}$ akin to a misspecification error. The intuition is to choose $\cF$ appropriately, so that the misspecification and the generalization errors in Lemma \ref{thm_17.1} are both small.

\subsubsection{Approximation Error}\label{ap_sec}
To effectively bound the overall error in Lemma~\ref{lem_oracle}, one needs to control the approximation error, denoted by the first term of \eqref{e_11}. Exploring the approximating potential of neural networks has witnessed substantial interest in the research community in the past decade or so. Pioneering studies such as those by \cite{cybenko1989approximation} and \cite{hornik1991approximation} have extensively examined the universal approximation properties of networks utilizing sigmoid-like activations. These foundational works demonstrated that wide, single-hidden-layer neural networks possess the capacity to approximate any continuous function within a bounded domain. In light of recent advancements in deep learning, there has been a notable surge in research dedicated to exploring the approximation capabilities of deep neural networks. Some important results in this direction include those by \cite{yarotsky2017error,lu2021deep,petersen2018optimal,shen2019nonlinear,schmidt2020nonparametric} among many others. All of the aforementioned results indicate that when $\epsilon$-approximating a $\beta$-H\"{o}lder function in the $\ell_\infty$-norm, it suffices to have a network of depth $\cO(\log(1/\epsilon))$  with at most $\cO( \epsilon^{-d/\beta} \log(1/\epsilon))$-many weights for the approximating network. However, the constants in the expressions of the upper bound of the number of weights and depth of the network can potentially increase exponentially with $d$. \cite{shen2022optimal} showed that if one approximates in the $\fL_2(\text{Leb})$-norm, this exponential dependence can be mitigated for the case $\beta \le 1$. Here $\text{Leb}(\cdot)$ denotes the Lebesgue measure on $[0,1]^d$. Lemma~\ref{lem_approx} generalizes this result to include all $\beta >0$ to achieve a precise dependence on $d$. The proof is provided in Appendix \ref{pf_approx}.
\begin{restatable}{lem}{lemapprox}
\label{lem_approx}
    Suppose that $f \in \sH^\beta(\Real, \Real, C)$. Then, we can find a ReLU network, $\hat{f}$, with $\cL(\hat{f}) \le \vartheta \lceil \log_2(8/\eta) \rceil + 4$ and $\cW(\hat{f}) \le \left\lceil \frac{1}{2(\eta/20)^{1/\beta}}\right\rceil^d \left(\frac{3}{\beta}\right)^\beta (d+\lfloor\beta \rfloor)^{\lfloor \beta \rfloor} \left(\vartheta \left\lceil \log_2 \left(\frac{8}{\eta d^{\lfloor \beta \rfloor}}\right) \right\rceil + 8 d + 4 \lfloor \beta \rfloor \right)$, and a constant $\eta_0 \in (0,1)$ (that might depend on $\beta$ and $d$) such that $\|f - \hat{f}\|_{\fL_p(\text{Leb})} \le C d^{\lfloor \beta \rfloor} \eta,$ for all $\eta \in (0, \eta_0]$. Here, $\vartheta$ is an absolute constant. 
\end{restatable}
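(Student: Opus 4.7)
The plan is to build $\hat f$ from local degree-$m$ Taylor polynomials with $m=\lfloor\beta\rfloor$ on a fine cube-grid, and to exploit the $\fL_p$ norm to absorb large pointwise errors on a thin boundary region. I would set $K=\lceil 1/(2(\eta/20)^{1/\beta})\rceil$ and partition $[0,1]^d$ into the $K^d$ axis-aligned cubes $Q_\alpha=\prod_{j=1}^d[\alpha_j/K,(\alpha_j+1)/K]$, $\alpha\in\{0,\dots,K-1\}^d$. Letting $x_\alpha$ be the corner of $Q_\alpha$ and $T_\alpha(x)=\sum_{|\mathbf{s}|\le m}\partial^{\mathbf{s}}f(x_\alpha)(x-x_\alpha)^{\mathbf{s}}/\mathbf{s}!$, the multivariate Taylor remainder together with the H\"older continuity of the $m$-th derivatives gives
\[
|f(x)-T_\alpha(x)|\;\precsim\;\binom{d+m-1}{m}\|f\|_{\sH^\beta}\|x-x_\alpha\|_\infty^\beta\;\precsim\;C\,d^{m}\,K^{-\beta}\;\precsim\;C\,d^{m}\,\eta,\qquad x\in Q_\alpha,
\]
which is precisely the target pointwise error and accounts for the $d^{\lfloor\beta\rfloor}$ factor in the conclusion.

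Next, I would realize the piecewise polynomial $x\mapsto\sum_\alpha T_\alpha(x)\mathbbm{1}_{Q_\alpha}(x)$ by a ReLU network. The monomials $(x-x_\alpha)^{\mathbf{s}}$ of total degree $\le m$ are produced by Yarotsky's iterated-squaring multiplication circuit, with depth $\cO(\log(1/\eta))$ and width a constant times $m$; there are at most $\binom{d+m}{m}\le(d+m)^m$ such monomials per cube. The indicator $\mathbbm{1}_{Q_\alpha}$ is replaced by a continuous trapezoidal selector $\chi_\alpha$, built from $\cO(d)$ ReLU units, equal to $1$ on a shrunken cube $Q_\alpha'$ of side $1/K-2\delta$, equal to $0$ outside $Q_\alpha$, and linearly ramped over a boundary slab of width $\delta$ along each axis. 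A single shared coordinate-wise quantization subnetwork of depth $\cO(\log K)=\cO(\log(1/\eta))$ feeds all $\chi_\alpha$; multiplying $\chi_\alpha$ with $T_\alpha$ via another Yarotsky product and summing over $\alpha$ yields $\hat f$. Aggregating the shared quantizer, the $K^d$ selectors, and the $K^d(d+m)^m$ monomial circuits produces the stated depth $\vartheta\lceil\log_2(8/\eta)\rceil+4$ and the stated weight bound.

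The crucial step is to split the $\fL_p$-integral over the good set $G=\cup_\alpha Q_\alpha'$ and its complement $B=[0,1]^d\setminus G$. On $G$ exactly one $\chi_\alpha$ equals $1$ and all others vanish, so $\hat f$ equals the correct Taylor polynomial and the pointwise error is $\precsim Cd^m\eta$. A telescoping expansion of $(1/K)^d-(1/K-2\delta)^d$ summed over the $K^d$ cubes bounds $|B|\precsim dK\delta$, so choosing $\delta\asymp\eta/(dK)$ forces $|B|\precsim\eta$. Since both $f$ and $\hat f$ stay bounded by $C(1+Rd^m)$ on $B$, the contribution of $B$ satisfies $\bigl(\int_B|f-\hat f|^p\bigr)^{1/p}\le\|f-\hat f\|_\infty|B|^{1/p}\precsim\eta$. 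Adding the good-set and bad-set contributions yields $\|f-\hat f\|_{\fL_p(\text{Leb})}\precsim Cd^m\eta$ for all $\eta\le\eta_0$, with $\eta_0$ absorbing the universal constants.

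The main obstacle is the sharp bookkeeping needed to extract the exact weight bound $\lceil 1/(2(\eta/20)^{1/\beta})\rceil^d(3/\beta)^\beta(d+m)^m\bigl(\vartheta\lceil\log_2(8/(\eta d^m))\rceil+8d+4m\bigr)$ rather than something coarser: one must share the quantization subnetwork across all cubes, keep each selector to $\cO(d)$ weights, and size Yarotsky's product circuits so that the $(3/\beta)^\beta$ and $(d+m)^m$ prefactors (instead of larger constants) arise from the branching and depth of the monomial products. Once the standard Yarotsky product and the coordinate bucketizer are packaged to respect these budgets, the remainder of the argument is a direct combination of the Taylor error estimate with the boundary-volume bound.
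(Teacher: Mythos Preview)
Your approach is essentially the paper's: both build $\hat f$ as a sum over the $K^d$ grid cells of local degree-$\lfloor\beta\rfloor$ Taylor polynomials multiplied by trapezoidal selectors, realize each selector--monomial product through a single ReLU product subnetwork (the paper feeds the $d$ one-dimensional trapezoid values together with the $|\bs|$ linear factors into one $\text{prod}_m^{(d+|\bs|)}$ network, which is the same device as the Yarotsky circuit you invoke), and then split the $\fL_p$-integral into a good region where exactly one selector equals $1$ and a thin boundary slab controlled by its small Lebesgue measure together with a uniform bound on $f$ and $\hat f$.

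There is, however, a genuine slip in your bad-set calibration. With $\delta\asymp\eta/(dK)$ you obtain $|B|\precsim\eta$ and then assert $\bigl(\int_B|f-\hat f|^p\bigr)^{1/p}\le\|f-\hat f\|_\infty\,|B|^{1/p}\precsim\eta$. But $\|f-\hat f\|_\infty$ is only a constant of order $Cd^{\lfloor\beta\rfloor}$, while $|B|^{1/p}\precsim\eta^{1/p}$; for any $p>1$ the product is therefore of order $Cd^{\lfloor\beta\rfloor}\eta^{1/p}$, which is strictly larger than the target $Cd^{\lfloor\beta\rfloor}\eta$ and in fact dominates the good-set contribution as $\eta\downarrow 0$. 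The remedy, which the paper uses, is to shrink the slab width so that $|B|^{1/p}\precsim\eta$ rather than merely $|B|\precsim\eta$: in the paper's parametrization this is $\delta=\tfrac{1}{d}\epsilon^{p\beta}\wedge(\epsilon/3)$ with $\epsilon=(\eta/20)^{1/\beta}$; in yours it amounts to taking $\delta\asymp\eta^{p}/(dK)$. This $p$-dependent shrinkage only affects the trapezoid slopes, not the depth or weight counts, so the fix is painless but necessary. As a minor aside, the ``shared coordinate-wise quantization subnetwork'' you propose is absent from the paper and unnecessary for the stated weight bound, which already carries the full $K^d$ factor and thus permits each cell's selector to be built independently from $\cO(d)$ ReLU units.
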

We now provide formal proofs of Theorems~\ref{main_full} and \ref{main_intrinsic} by combining the results in Sections \ref{gen_sec} and \ref{ap_sec}.
\subsubsection{Proof of Theorem~\ref{main_full}}
\begin{proof}
We take $\cF = \cR\cN(L_\epsilon, W_\epsilon, 2 C)$, with $L_\epsilon \precsim \log(1/\epsilon)$ and $W_\epsilon \precsim d^{\lfloor \beta \rfloor} \epsilon^{-d/\beta} \log(1/\epsilon)$. Then, by Lemma~\ref{lem_approx}, we can find $f^\ast \in \cF$, such that, $\|f^\ast - f_0\|_{\fL_2(\lambda)} \precsim d^{\lfloor \beta \rfloor} \epsilon$. Furthermore, by Lemma~\ref{thm_17.1}, we observe that with probability at least $1-3 \exp\left(-n^{1-2\alpha}\right)$, 
\begin{align}
    \|\hat{f} - f_0\|^2_{\fL_2(\lambda)} \precsim & n^{-2 \alpha} +\|f^\ast - f_0\|^2_{\fL_2(\lambda)}+  \frac{1}{n }\operatorname{Pdim}(\cF) \log^2 n + \frac{\log \log n}{n} \nonumber\\
    \le & n^{-2 \alpha} + d^{2 \lfloor \beta \rfloor}\epsilon^2 +  \frac{1}{n }\operatorname{Pdim}(\cF) \log^2 n + \frac{\log \log n}{n} \nonumber\\
    \precsim & n^{-2 \alpha} + d^{2 \lfloor \beta \rfloor} \epsilon^2 +  \frac{\log^2 n }{n } W_\epsilon L_\epsilon \log(W_\epsilon)  + \frac{\log \log n}{n} \nonumber\\
    \precsim & n^{-2 \alpha} + d^{2 \lfloor \beta \rfloor} \epsilon^2 +  \frac{d^{\lfloor \beta \rfloor}\log^2 n}{n } \epsilon^{-d/\beta } \log^3(1/\epsilon) + \frac{\log \log n}{n}. \label{e_18.1}
\end{align}
Here \eqref{e_18.1} follows from the following calculations. Suppose $\alpha_2$ is the constant that honors $W_\epsilon \precsim d^{\lfloor \beta \rfloor} \epsilon^{-d/\beta} \log(1/\epsilon)$, i.e. $W_\epsilon \le \alpha_2 d^{\lfloor \beta \rfloor} \epsilon^{-d/\beta} \log(1/\epsilon)$. Then,
\[\log W_\epsilon \le \log \alpha_2  + \lfloor \beta \rfloor \log d  + \frac{d}{\beta}\log(1/\epsilon)+ \log \log(1/\epsilon) \le \frac{3d}{\beta}\log(1/\epsilon),\]
when $\epsilon$ is small enough. Taking $\epsilon \asymp (n d^{\lfloor\beta \rfloor})^{-\frac{\beta}{2 \beta + d}}$ and $\alpha = \frac{\beta}{2 \beta + d }$, we note that with probability at least $1 - 3\exp\left(-n^{\frac{d}{2 \beta + d}}\right) $,
\[ \|\hat{f} - f_0\|^2_{\fL_2(\lambda)} \precsim d^{  \frac{2 \lfloor \beta \rfloor( \beta + d)}{2 \beta + d}}n^{-\frac{2\beta}{2\beta + d}} (\log n)^5.\]
Note that for the above bounds to hold, one requires $n \ge \operatorname{Pdim}(\cF)$ and $\epsilon \le \epsilon_0$, which holds when $n$ is large enough.
\end{proof}

\subsubsection{Proof of Theorem~\ref{main_intrinsic}}
\begin{proof}
We take $\cF = \cR\cN(L_\epsilon, W_\epsilon, 2 C)$, with $L_\epsilon \precsim_\epsilon \log(1/\epsilon)$ and $W_\epsilon \precsim_\epsilon \epsilon^{-d^\star/\beta} \log(1/\epsilon)$. Then, by \citet[Theorem 18]{chakraborty2024statistical}, 
we can find $f^\ast \in \cF$, such that, $\|f^\ast - f_0\|_{\fL_2(\lambda)} \le \epsilon$. Furthermore, by Lemma~\ref{thm_17.1}, we observe that with probability at least $1-3 \exp\left(-n^{1-2\alpha}\right)$, 
\begin{align}
    \|\hat{f} - f_0\|^2_{\fL_2(\lambda)} \precsim & n^{-2 \alpha} +\|f^\ast - f_0\|^2_{\fL_2(\lambda)}+  \frac{1}{n }\operatorname{Pdim}(\cF) \log^2 n + \frac{\log \log n}{n} \nonumber\\
    \le & n^{-2 \alpha} +\epsilon^2 +  \frac{1}{n }\operatorname{Pdim}(\cF) \log^2 n + \frac{\log \log n}{n} \nonumber\\
    \precsim & n^{-2 \alpha} +\epsilon^2 +  \frac{1}{n } W_\epsilon L_\epsilon \log(W_\epsilon) \log^2 n + \frac{\log \log n}{n} \label{e_17.1}\\
    \precsim_\epsilon & n^{-2 \alpha} +\epsilon^2 +  \frac{\log^2 n}{n } \epsilon^{-d^\star/\beta } \log^3(1/\epsilon) + \frac{\log \log n}{n} \nonumber.
\end{align}
Here, \eqref{e_17.1} follows from \cite[Theorem~6]{bartlett2019nearly}.  
Taking $\epsilon \asymp n^{-\frac{\beta}{2 \beta + d^\star}}$ and $\alpha = \frac{\beta}{2 \beta + d^\star}$, we note that, with probability at least $1 - 3\exp\left(-n^{\frac{d^\star}{2 \beta + d^\star}}\right) $,
\( \|\hat{f} - f_0\|^2_{\fL_2(\lambda)} \precsim_n n^{-\frac{2\beta}{2\beta + d^\star}} (\log n)^5.\)
Note that for the above bounds to hold, one requires $n \ge \operatorname{Pdim}(\cF)$ and $\epsilon \le \epsilon_0$, which holds when $n$ is large enough.
\end{proof}
\subsection{Proof of the Minimax Rates}\label{pf_lower}
In this section, we give a formal proof of Theorem~\ref{thm_minmax}.  We use the standard technique of Fano's method---see, for example,~\citep[Chapter 15]{wainwright_2019}---to construct hypotheses that are well separated in $\fL_2(\lambda)$ sense but difficult to distinguish in the KL-divergence.

\textbf{Proof of Theorem~\ref{thm_minmax}:}
 Let $b(x) = \exp\left(\frac{1}{x^2 -1} \right) \mathbbm{1}\{|x| \le 1\}$ be the standard bump function on $\Real$. For any $x \in \Real^d$ and $\delta \in (0,1]$, we let, $h_\delta(x) = a \delta^\beta \prod_{j=1}^d b(x_j/\delta)$. Here $a$ is such that $a b(x) \in \sH^\beta(\Real, \Real, C)$. It is easy to observe that $h_\delta \in \sH^\beta(\Real^d, \Real, C)$. In what follows, we take, $\delta = 1/m$. Let
 \[\sF_\delta = \left\{f_{\bomega}(x) = \sum_{\bxi \in [m]^d} \omega_{\bxi} h_\delta\left(x-\frac{1}{m}(\xi_i-1/2)\right): \bomega \in\{0,1\}^{m^d}\right\}.\]

 Since each element of $\sF_\delta$ is a sum of members in $\sH^\beta(\Real, \Real, C)$ with disjoint support, $\sF_\delta \subseteq \sH^\beta(\Real, \Real, C)$. By the Varshamov-Gilbert bound \citep[Lemma~2.9]{tsybakov2009introduction}, 
 we can construct a subset of $ \Omega = \{\bomega_1, \dots, \bomega_M\}$ of $\{0,1\}^{m^d}$ with $\|\bomega_i - \bomega_j\|_1 \ge \frac{m^d}{8}$, for all $i \neq j$ and $M \ge 2^{m^d/8}$. We note that for any $\bomega, \, \bomega^\prime \in \Omega$,
 \begin{align*}
     \|f_{\bomega} - f_{\bomega^\prime}\|_{\fL_2(\lambda)}^2 \ge  \, \underline{b}_\lambda \|f_{\bomega} - f_{\bomega^\prime}\|_{\fL_2(\text{Leb})}^2 =  \|\bomega - \bomega^\prime\|_1 \int h^2_\delta(x) dx = & \|\bomega - \bomega^\prime\|_1 \times a^2 \delta^{2\beta + d} \int b^2(x) dx\\
     \succsim & m^d \delta^{2\beta + d}\\
     = & \delta^{2\beta}.
 \end{align*}
 Let $P_{\bomega}$ denote the the distribution of the form \eqref{e1} with $f_0$ replaced with $f_{\bomega}$. Thus, 
 \begingroup
 \allowdisplaybreaks
 \begin{align}
     \operatorname{KL}(P_{\bomega}^{\otimes_n}\|P_{\bomega^\prime}^{\otimes_n}) =  n \operatorname{KL}(P_{\bomega}\|P_{\bomega^\prime}) 
     = & n \E_{\bx} d_\phi\left(f_{\bomega}(\bx) \| f_{\bomega^\prime}(\bx)\right) \label{e_19.1}\\
     \le & n \tau_2 \bar{b}_\lambda \|\bomega - \bomega^\prime\|_1 \times a^2 \delta^{2\beta + d} \int b^2(x) dx \nonumber\\
     \precsim & n m^d \delta^{2\beta + d}. \nonumber
 \end{align}
 \endgroup
 Here, \eqref{e_19.1} follows from Lemma~\ref{lem_kl}.
 Choosing $m \asymp n^{1/(2\beta + d)}$, we can make, $\operatorname{KL}(P_{\bomega}^{\otimes_n}\|P_{\bomega^\prime}^{\otimes_n}) \le  \frac{m^d}{1000}$. Thus, from \citet[equation 15.34]{wainwright_2019}, \( I(Z;J) \le \frac{1}{M^2} \sum_{\bomega, \bomega^\prime \in \Omega} \operatorname{KL}(P_{\bomega}^{\otimes_n}\|P_{\bomega^\prime}^{\otimes_n}) \le \frac{m^d}{1000}.\) Here $I(Z_1;Z_2)$ denotes the mutual information between the random variables $Z_1$ and $Z_2$ \citep[Section 2.3]{cover2005elements}. Hence, \(\frac{ I(Z;J) + \log 2}{\log M}  \le 8 \frac{m^d/1000 + \log 2}{m^d \log 2} \le 1/2,\) if $n$ is large enough. Thus, applying Proposition 15.2 of \cite{wainwright_2019}, we note that,
 \(\inf_{\hat{f}} \sup_{f \in \sH^\beta(\Real^d, \Real,C)} \E_{f} \|\hat{f} - f\|_{\fL_2(\lambda)}^2 \succsim \delta^{2\beta} \asymp n^{-\frac{2\beta}{2\beta + d}}.\)

\section{Conclusion}\label{sec_con}
In this paper, we discussed a statistical framework to understand the finite sample properties of supervised deep learning for both regression and classification settings. In particular, we modeled the dependence of the response given the explanatory variable through a exponential families and showed that the maximum likelihood estimates can be achieved by minimizing the corresponding Bregman loss and incorporating the mean function as the activation for the final layer. Under the assumption of the existence of a bounded density for the explanatory variable, we showed that deep ReLU networks can achieve the minimax optimal rate when the network size is chosen properly. Furthermore, when the explanatory variable has an intrinsically low dimensional structure, the convergence rate of the sample estimator, in terms of the sample size, only depends on the entropic dimension of the underlying distribution of the explanatory variable, resulting in better convergence rates compared to the existing literature for both classification and regression problems.

While our findings offer insights into the theoretical aspects of deep supervised learning, it is crucial to recognize that assessing the complete error of models in practical applications necessitates the consideration of an optimization error component. Regrettably, the accurate estimation of this component remains a formidable challenge in the non-overparametrized regime due to the non-convex and intricate nature of the optimization problem. Nevertheless, it is worth emphasizing that our error analyses operate independently of the optimization process and can be readily integrated with optimization analyses.

\tableofcontents
\appendix
\section{Proofs of Main Lemmata}
\label{ap1}
\subsection{Proof of Lemma~\ref{lemoct}}
\lemoct*
\begin{proof}

To prove Lemma~\ref{lemoct}, we first make the following observation.
\begingroup
\allowdisplaybreaks
\begin{align}
        & \E d_\phi (y \| \mu(\hat{f}(\bx))) - \E d_\phi (y \| \mu(f_0(\bx))) \nonumber \\
        = &  \E_{\bx} \E_{y|\bx} \bigg( \phi(\mu(f_0(\bx))) - \phi(\mu(\hat{f}(\bx))) - \left \langle \nabla \phi(\mu(\hat{f}(\bx))), y - \mu(\hat{f}(\bx)) \right\rangle  + \left \langle \nabla \phi(\mu(f_0(\bx))), y - \mu(f_0(\bx)) \right\rangle \bigg) \nonumber\\
        = & \E_{\bx} d_\phi \left( \mu(f_0(\bx)) \| \mu(\hat{f}(\bx))\right) \nonumber \\
        \le & \tau_2 \E_{\bx} \| \mu(f_0(\bx)) - \mu(\hat{f}(\bx)) \|_2^2 \label{e_s_3}\\
        \le & \tau_2 \sigma_1 \E_{\bx}  \| f_0(\bx) - \hat{f}(\bx) \|_2^2 \label{e_s_4}\\
        = & \frac{\sigma_1}{\sigma_2} \| f_0 - \hat{f} \|_{\fL_2(\lambda)}^2. \nonumber
    \end{align}
\endgroup    
Here \eqref{e_s_3} follows from Lemma~\ref{lem_tel}.
Inequality \eqref{e_s_4} follows from the fact that $\mu(\cdot)$ is $\sigma_1$-Lipschitz. We also note that,
\begingroup
\allowdisplaybreaks
\begin{align}
         \E d_\phi (y \| \hat{f}(\bx)) - \E d_\phi (y \| \mu(f_0(\bx))) = & \E_{\bx} d_\phi \left( \mu(f_0(\bx)) \| \mu(\hat{f}(\bx))\right) \nonumber \\
        \ge & \tau_1 \E_{\bx} \| \mu(f_0(\bx)) - \mu(\hat{f}(\bx)) \|_2^2 \label{e_s_7}\\
        \ge & \tau_1 \sigma_2 \E_{\bx}  \| f_0(\bx) - \hat{f}(\bx) \|_2^2 \label{e_s_8}\\
        = & \frac{\sigma_2}{\sigma_1} \| f_0 - \hat{f} \|_{\fL_2(\lambda)}^2. \nonumber
    \end{align}
\endgroup  
As before, \eqref{e_s_7} follows from the fact that $\phi$ is $\tau_1$-strongly convex and applying Lemma~\ref{lem_tel}. 
Inequality \eqref{e_s_8} follows from the fact that $\mu^\prime(\cdot) = \Psi^{\prime \prime}(\cdot) \ge \sigma_2 $, due to the strong convexity of $\Psi$ and a simple application of the mean value theorem.
\end{proof}
\subsection{Proof of Lemma~\ref{lem12}}
\lemtwelve*
\begin{proof}
    We note that $ \bar{d}_\alpha(\gamma) \le \overline{\text{dim}}_M(\gamma) = \tilde{d}$, by \citet[Proposition 7]{weed2019sharp}. When $\mu \ll \fH^{\tilde{d}}$, again by \citet[Proposition 8]{weed2019sharp}, it is known that $d_\ast(\gamma) = \tilde{d}$, where $d_\ast(\gamma)$ denotes the lower Wasserstein dimension of $\gamma$ \citep[Definition 4]{weed2019sharp}. The result now follows from Proposition 8 of \citet{chakraborty2024statistical}.
\end{proof}

\subsection{Proof of Lemma~\ref{lem_oracle}}
\lemoracle*
\begin{proof}
    Since $\hat{f}$ is the global minimizer of $\sum_{i=1}^n d_\phi(y_i \| \mu(f(\bx_i)))$, we note that
\begin{align}
    \sum_{i=1}^n d_\phi(y_i \| \mu(\hat{f}(\bx_i))) \le \sum_{i=1}^n d_\phi(y_i \| \mu(f(\bx_i))), \label{19.1}
\end{align}
for any $f \in \cF$. A little algebra shows that \eqref{19.1} is equivalent to
\begin{align}
    & \frac{1}{n}\sum_{i=1}^n d_\phi(\mu(f_0(\bx_i)) \| \mu(\hat{f}(\bx_i))) \nonumber\\
    \le & \frac{1}{n}\sum_{i=1}^n d_\phi(\mu(f_0(\bx_i)) \| \mu(f(\bx_i))) + \frac{1}{n} \sum_{i=1}^n \left \langle \nabla \phi(\mu(\hat{f}(\bx_i))) - \nabla \phi(\mu(f(\bx_i))),  \xi_i \right \rangle. \label{19.2}
\end{align}

From \eqref{19.2}, applying Lemma~\ref{lem_tel},
we observe that
\begin{align}
    & \frac{\tau_1}{n}\sum_{i=1}^n (\mu(f_0(\bx_i)) - \mu(\hat{f}(\bx_i)))^2 \nonumber \\
    \le & \frac{\tau_2}{n}\sum_{i=1}^n (\mu(f_0(\bx_i)) - \mu(f(\bx_i)))^2 + \frac{1}{n} \sum_{i=1}^n \left \langle \nabla \phi(\mu(\hat{f}(\bx_i))) - \nabla \phi(\mu(f(\bx_i))),  \xi_i \right \rangle .
\end{align}
Plugging in $f \leftarrow f^\ast$, we get the desired result.
\end{proof}
\subsection{Proof of Lemma~\ref{lem_17.3}}
\lemel*
\begin{proof}
From the definition of $\sG_\delta$, it is clear that $\log \cN(\epsilon; \sG_\delta, \|\cdot\|_{\infty,n}) \le 2 \log \cN(\epsilon/2 ; \cF, \|\cdot\|_{\infty,n}). $ 
Let $Z_f = \frac{1}{\sqrt{n}} \sum_{i=1}^n \xi_i f(x_i)$. Clearly, $\E_\xi Z_f = 0$. Furthermore, applying Lemma~\ref{lem20.1}, 
we observe that
\begingroup
\allowdisplaybreaks
\begin{align*}
    \E_\xi \exp(\lambda (Z_f-Z_g) ) =  \E_\xi \exp\left( \frac{\lambda}{\sqrt{n}} \sum_{i=1}^n \xi_i (f(x_i) -g (x_i))\right)
    = & \exp\left( \frac{\lambda^2}{2} \|f - g\|_{\fL_2(\lambda_n)}^2 \sigma_1\right).
\end{align*}
\endgroup
Thus, $(Z_f-Z_g)$ is $ \|f - g\|_{\fL_2(\lambda_n)}^2 \sigma_1$-subGaussian. Furthermore, 
\begingroup
\allowdisplaybreaks
\begin{align*}
    \sup_{f,g \in \sG_\delta} \|f - g\|_{\fL_2(\lambda_n)} = & \sup_{f,f^\prime \in \cF: \|f - f^\prime\|_{\fL_\infty(\lambda_n)} \le \delta} \| \nabla \phi(\mu(f)) - \nabla \phi(\mu(f^\prime))\|_{\fL_2(\lambda_n)}\\
    \le & \tau_1 \sigma_1 \sup_{f,f^\prime \in \cF: \|f - f^\prime\|_{\fL_\infty(\lambda_n)} \le \delta} \| f - f^\prime\|_{\fL_2(\lambda_n)}\\
    \le & \sup_{f,f^\prime \in \cF: \|f - f^\prime\|_{\fL_\infty(\lambda_n)} \le \delta} \| f - f^\prime\|_{\fL_\infty(\lambda_n)}\\
    \le &  \delta.
\end{align*}
\endgroup

From \citet[Proposition 5.22]{wainwright_2019}, 
\begingroup
\allowdisplaybreaks
\begin{align}
    \E_\xi \sup_{g \in \sG_\delta} \frac{1}{\sqrt{n}}\sum_{i=1}^n \xi_i g(x_i) =  \E_\xi \sup_{g \in \sG_\delta} Z_g = & \E_\xi \sup_{g \in \sG_\delta} (Z_g - Z_{g^\prime}) \nonumber \\
    \le & \E_\xi \sup_{g, g^\prime \in \sG_\delta} (Z_g - Z_{g^\prime}) \nonumber \\
    \le & 32 \int_0^{ \delta} \sqrt{ \log \cN(\epsilon; \sG_\delta, \fL_2(\lambda_n) )} d\epsilon \nonumber \\
    \precsim &  \int_0^{\delta} \sqrt{\log \cN(\epsilon/(2\sigma_1); \cF, \fL_\infty(\lambda_n) )} d\epsilon \nonumber \\
    \precsim & \int_0^{\delta} \sqrt{\operatorname{Pdim}(\cF) \log (n/\epsilon)  } d\epsilon \nonumber \\
    \le  &  \delta \sqrt{\operatorname{Pdim}(\cF) \log n} + \sqrt{\operatorname{Pdim}(\cF)}\int_0^{ \delta}  \sqrt{\log (1/\epsilon)}   d\epsilon \nonumber \\
    \le & \delta \sqrt{\operatorname{Pdim}(\cF) \log n} + 2 \sqrt{\operatorname{Pdim}(\cF)}  \delta \sqrt{ \log( 1/\delta)} \label{e12}\\
    \precsim & \delta \sqrt{\operatorname{Pdim}(\cF) \log(n/\delta)}.
    \end{align}
\endgroup
Here, \eqref{e12} follows from Lemma~\ref{lem_17.1}. Thus, \( \E \sup_{g \in \sG_\delta} \frac{1}{n}\sum_{i=1}^n \xi_i g(x_i) \precsim   \delta \sqrt{\frac{\operatorname{Pdim}(\cF) \log (n/\delta)}{n}}.\) Applying Lemma \ref{lem:1}, we note that for $t>0$, with probability at least $1 -  e^{-nt^2/\delta^2}$,
\begin{align}
    \sup_{g \in \sG_\delta} \frac{1}{n}\sum_{i=1}^n \xi_i g(x_i) \precsim &  t + \delta \sqrt{\frac{\operatorname{Pdim}(\cF) \log (n/\delta)}{n}}.\label{e10}
\end{align}
\end{proof}

\subsection{Proof of Lemma~\ref{lem_17.2}}
\lemgentwo*
\begin{proof}
    We take $\delta = \max\left\{n^{-\alpha}, 2\|\hat{f} - f_0\|_{\fL_2(\lambda_n)}\right\} $ and let $t = n^{-2 \alpha}$. We consider two cases as follows.
    
\textbf{Case 1}: $\|\hat{f} - f^\ast\|_{\fL_2(\lambda_n)} \le \delta$.

Then, by Lemma~\ref{lem_17.3}, with probability at least $1 -  \exp\left(-n^{1-2\alpha} \right)$,
\begingroup
\allowdisplaybreaks
\begin{align}
    \|\hat{f} - f^\ast\|^2_{\fL_2(\lambda_n)} \le & 2 \|\hat{f} - f_0\|^2_{\fL_2(\lambda_n)} + 2 \|f_0 - f^\ast\|^2_{\fL_2(\lambda_n)} \nonumber\\
    \precsim &   \|f_0 - f^\ast\|^2_{\fL_2(\lambda_n)} + \|\mu(\hat{f}) - \mu(f_0)\|^2_{\fL_2(\lambda_n)} \label{es13}\\
    \precsim & \|f_0 - f^\ast\|^2_{\fL_2(\lambda_n)} + \sup_{g \in \sG_\delta} \frac{1}{n} \sum_{i=1}^n \xi_i g(x_i) \label{e14}\\
    \precsim & \|f_0 - f^\ast\|^2_{\fL_2(\lambda_n)} + t + \delta \sqrt{\frac{\operatorname{Pdim}(\cF) \log (n/\delta)}{n}} 
    \label{a_23.1}
\end{align}
\endgroup
In the above calculations, \eqref{es13} follows from the fact that $\mu(\cdot)$ is strongly convex and \eqref{e14} follows from Lemma~\ref{lem_oracle}. Inequality \eqref{a_23.1} follows from Lemma~\ref{lem_17.3}. Let $\alpha_1 \ge 1$ be the corresponding constant that honors the inequality in \eqref{a_23.1}. Then using the upper bound on $\delta$, we observe that
\begingroup
\allowdisplaybreaks
\begin{align}
  &  \|\hat{f} - f^\ast\|^2_{\fL_2(\lambda_n)} \nonumber\\
  \le & \alpha_1 \|f_0 - f^\ast\|^2_{\fL_2(\lambda_n)}  + \alpha_1 \delta \sqrt{\frac{\operatorname{Pdim}(\cF) \log (n/\delta)}{n}} + n^{-2\alpha }\nonumber\\
  \le & \alpha_1 \|f_0 - f^\ast\|^2_{\fL_2(\lambda_n)} + \frac{\delta^2 }{16} +  \frac{4 \alpha_1^2 }{n}\operatorname{Pdim}(\cF) \log (n/\delta) + n^{-2\alpha } \label{e15}\\
    \le & \alpha_1 \|f_0 - f^\ast\|^2_{\fL_2(\lambda_n)} + \frac{9n^{-2\alpha}}{8}  + \frac{1}{4} \|\hat{f}- f_0\|^2_{\fL_2(\lambda_n)} +  \frac{4(1 + \alpha)\alpha_1^2}{n }\operatorname{Pdim}(\cF) \log (n) \nonumber \\
    \le & \alpha_1 \|f_0 - f^\ast\|^2_{\fL_2(\lambda_n)} + 2  n^{-2\alpha} + \frac{1}{2} \|\hat{f}- f^\ast\|^2_{\fL_2(\lambda_n)} + \frac{1}{2} \|f^\ast - f_0\|^2_{\fL_2(\lambda_n)} +  \frac{4(1 + \alpha)\alpha_1^2 }{n }\operatorname{Pdim}(\cF) \log (n). \nonumber
\end{align}
\endgroup
Here, \eqref{e15} follows from the fact that $\sqrt{xy} \le \frac{x}{16\alpha_1 } + 4 \alpha_1 y$, from the AM-GM inequality and taking $x = \delta^2$ and $y = \frac{\operatorname{Pdim}(\cF) \log (n/\delta)}{n}$. Thus,
\begin{align}\label{eq_26}
   \|\hat{f} - f^\ast\|^2_{\fL_2(\lambda_n)} \precsim & n^{-2\alpha} +\|f^\ast - f_0\|^2_{\fL_2(\lambda_n)} +  \frac{1}{n }\operatorname{Pdim}(\cF) \log (n) .
\end{align}
\textbf{Case 2}: $\|\hat{f} - f^\ast\|_{\fL_2(\lambda_n)} \ge \delta$.

It this case, we note that $\|\hat{f} -f^\ast\|_{\fL_2(\lambda_n)} \ge 2 \|\hat{f} - f_0\|_{\fL_2(\lambda_n)}$. Thus,
\begin{align*}
    \|\hat{f} - f^\ast\|_{\fL_2(\lambda_n)}^2 \le & 2 \|\hat{f} - f_0\|_{\fL_2(\lambda_n)}^2 + 2 \|f_0 - f^\ast\|_{\fL_2(\lambda_n)}^2 \le \frac{1}{2} \|\hat{f} - f^\ast\|_{\fL_2(\lambda_n)}^2 + 2 \|f_0 - f^\ast\|_{\fL_2(\lambda_n)}^2 \\
    \implies & \|\hat{f} - f^\ast\|_{\fL_2(\lambda_n)}^2 \precsim \|f_0 - f^\ast\|_{\fL_2(\lambda_n)}^2.
\end{align*}
Thus, from the above two cases, combining equations \eqref{a_23.1} and \eqref{eq_26}, with probability at least, $1 -  \exp\left(-n^{1-2\alpha}\right)$, 
\begin{equation}
    \|\hat{f} - f^\ast\|^2_{\fL_2(\lambda_n)} \precsim  n^{-2 \alpha} +\|f^\ast - f_0\|^2_{\fL_2(\lambda_n)} +  \frac{1}{n }\operatorname{Pdim}(\cF) \log (n). \label{e_s3}
\end{equation}

From equation \eqref{e_s3}, we note that, for some constant $B_4$, 
\begin{align*}
    \prob\left( \|\hat{f} - f^\ast\|^2_{\fL_2(\lambda_n)} \le B_4\left(  n^{-2 \alpha} +\|f^\ast - f_0\|^2_{\fL_2(\lambda_n)} +  \frac{1}{n }\operatorname{Pdim}(\cF) \log (n) \right)\bigg| x_{1:n}\right) \ge 1 -  \exp\left(-n^{1-2\alpha}\right).
\end{align*}
Integrating both sides w.r.t. the measure $\mu^{\otimes_n}$, i.e. the joint distribution of $\bx_{1:n}$, we observe that, unconditionally, with probability at least $1 -  \exp\left(-n^{1-2 \alpha}\right)$, 
\begin{equation}
    \|\hat{f} - f^\ast\|^2_{\fL_2(\lambda_n)} \precsim  n^{-2 \alpha} +\|f^\ast - f_0\|^2_{\fL_2(\lambda_n)} +  \frac{1}{n }\operatorname{Pdim}(\cF) \log (n). \label{e_s5}
\end{equation}
\end{proof}

\subsection{Proof of Lemma~\ref{thm_17.1}}
To prove Lemma \ref{thm_17.1}, we first state and prove the following result. 
\begin{lem}\label{lem_17.4}
     For $\alpha \in (0,1/2)$, if $n \ge \max\left\{e^{1/\alpha}, \operatorname{Pdim}(\cF)\right\}$, with probability at least $1 - 2 \exp\left(-n^{1-2\alpha}\right)$,
    \begin{align*}
    \|\hat{f} - f^\ast\|^2_{\fL_2(\lambda)} \precsim & n^{-2\alpha} +\|f^\ast - f_0\|^2_{\fL_2(\lambda_n)}+  \frac{1}{n }\operatorname{Pdim}(\cF) \log^2 n + \frac{\log \log n}{n}.
\end{align*}
\end{lem}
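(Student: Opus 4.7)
The plan is to combine the empirical-norm bound from Lemma~\ref{lem_17.2} with a uniform one-sided ratio inequality that upgrades $\|\cdot\|_{\fL_2(\lambda_n)}$ to $\|\cdot\|_{\fL_2(\lambda)}$ over $f\in\cF$. Directly applying Lemma~\ref{lem_17.2}, on an event $E_1$ of probability at least $1-\exp(-n^{1-2\alpha})$,
\[
\|\hat f-f^\ast\|^2_{\fL_2(\lambda_n)}\precsim n^{-2\alpha}+\|f^\ast-f_0\|^2_{\fL_2(\lambda_n)}+\frac{\operatorname{Pdim}(\cF)\log n}{n}.
\]
If I can show that on an event $E_2$ of comparable probability one has
\[
\|f-f^\ast\|^2_{\fL_2(\lambda)}\le 2\,\|f-f^\ast\|^2_{\fL_2(\lambda_n)}+R_n\quad\text{for every }f\in\cF,
\]
with $R_n\precsim n^{-2\alpha}+\operatorname{Pdim}(\cF)\log^2 n/n+\log\log n/n$, then intersecting $E_1,E_2$ and specializing to $f=\hat f$ delivers the claim by a union bound.

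To establish the uniform one-sided inequality I would look at the squared-deviation functionals $g_f=(f-f^\ast)^2$. Because every $f\in\cF=\cR\cN(L,W,2C)$ is bounded by $2C$, one has $0\le g_f\le 16C^2$ and $\operatorname{Var}(g_f(X))\le 16C^2\|f-f^\ast\|^2_{\fL_2(\lambda)}$. For a fixed $f$, Bernstein's inequality gives, with probability at least $1-e^{-t}$,
\[
\|f-f^\ast\|^2_{\fL_2(\lambda)}-\|f-f^\ast\|^2_{\fL_2(\lambda_n)}\le 4C\,\|f-f^\ast\|_{\fL_2(\lambda)}\sqrt{\tfrac{2t}{n}}+\tfrac{16C^2 t}{n}.
\]
The cross term is absorbed via $ab\le a^2/4+b^2$ into $\tfrac12\|f-f^\ast\|^2_{\fL_2(\lambda)}$, producing the desired one-sided form up to an additive $O(t/n)$. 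To make this uniform over $\cF$ I would (i) apply the Haussler bound to construct an $\fL_\infty(\lambda_n)$-cover of $\cF$ at scale $1/n$ of cardinality $n^{O(\operatorname{Pdim}(\cF))}$, and (ii) peel over dyadic shells $\{f\in\cF : 2^{-k-1}<\|f-f^\ast\|_{\fL_2(\lambda)}\le 2^{-k}\}$ for $k=0,1,\dots,K$ with $K=O(\log n)$, applying Bernstein within each shell. Calibrating the shell-level threshold $t_k\asymp\operatorname{Pdim}(\cF)\log n+\log K+n^{1-2\alpha}$ and summing the exceptional probabilities yields the claimed $R_n$.

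On $E_1\cap E_2$, which has probability at least $1-2\exp(-n^{1-2\alpha})$, evaluating the uniform bound at $\hat f\in\cF$ and substituting the empirical bound from $E_1$ produces the stated inequality. The main obstacle is the bookkeeping in the peeling step: the probabilities of the $\Theta(\log n)$ shell events must be budgeted so that their union stays bounded by $\exp(-n^{1-2\alpha})$ while the residual has the exact stated structure. The first $\log n$ factor in the residual arises from the Haussler covering, the second $\log n$ comes from the number of dyadic shells needed to reach a trivial level ($\|f-f^\ast\|_{\fL_2(\lambda)}\lesssim n^{-\alpha}$), and the $\log\log n/n$ term is the $\log K=\log\log n$ inflation that each Bernstein threshold inherits from the union bound across shells. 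One must also verify that the sample-size conditions $n\ge e^{1/\alpha}$ and $n\ge\operatorname{Pdim}(\cF)$, already imposed through Lemma~\ref{lem_17.2}, are strong enough to validate the covering and peeling constructions.
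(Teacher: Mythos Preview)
Your two-event strategy—intersect Lemma~\ref{lem_17.2} with a uniform one-sided inequality relating population to empirical $\fL_2$-norms over $\cF$—matches the paper's. The difference is in how that ratio inequality is obtained. The paper does not go through pointwise Bernstein plus a union bound; it bounds the local Rademacher complexity of $\cH_r=\{(f-f')^2: f,f'\in\cF,\ \lambda_n h\le r\}$ via Dudley's integral (Lemma~\ref{lem_bd_rad}), reads off the fixed point $r^\ast\precsim\operatorname{Pdim}(\cF)\log^2 n/n$, and then invokes Bousquet's localization theorem (Theorem~6.1 of \cite{1444}) as a black box to obtain, with probability $\ge 1-e^{-x}$,
\[
\textstyle\int h\,d\lambda\ \le\ B_3\Big(\int h\,d\lambda_n+\frac{\operatorname{Pdim}(\cF)\log^2 n}{n}+\frac{x}{n}+\frac{\log\log n}{n}\Big)\quad\text{for all }h\in\cH.
\]
Setting $x=n^{1-2\alpha}$ and $h=(\hat f-f^\ast)^2$ and combining with Lemma~\ref{lem_17.2} finishes. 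The $\log\log n$ term is inherited wholesale from the cited theorem; no explicit shell bookkeeping is done in the paper.

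Your direct route has a real gap at the covering step. Bernstein's inequality is a statement about the distribution of $\|f-f^\ast\|_{\fL_2(\lambda_n)}^2$ for a \emph{fixed} $f$; you cannot take a union bound over an $\fL_\infty(\lambda_n)$-cover, because that cover is constructed from the very sample whose fluctuations you are trying to control, so its elements are data-dependent random functions to which Bernstein does not apply. There is a second, related problem: an $\fL_\infty(\lambda_n)$-cover only guarantees $|\tilde f(\bx_i)-f(\bx_i)|\le 1/n$ at the sample points, so it controls $\|\tilde f-f^\ast\|_{\fL_2(\lambda_n)}$ but says nothing about $\|\tilde f-f^\ast\|_{\fL_2(\lambda)}$, and hence cannot transport the shell membership (defined via the population norm) or the Bernstein variance bound from $f$ to its cover element $\tilde f$. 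Both issues are resolved simultaneously either by symmetrizing first and working with the conditional Rademacher process—which is exactly what the local-Rademacher/Bousquet machinery packages—or by covering deterministically in $\|\cdot\|_\infty$ over $[0,1]^d$; but the log-cardinality of such a cover depends on the parameter count and the weight bound $\cB(\cdot)$ of the network class rather than directly on $\operatorname{Pdim}(\cF)$, so the residual you obtain would not match the stated one verbatim.
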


\begin{proof}
From Lemma~\ref{lem_bd_rad}, we note that if $n \ge \operatorname{Pdim}(\cF)$, then, 
\begin{align}
 \E_{\epsilon} \sup_{h \in \cH_r} \frac{1}{n}\sum_{i=1}^n \epsilon_i h(\bx_i) \precsim & \sqrt{\frac{ r\operatorname{Pdim}(\cF) \log n}{n}} \label{e_f_4_1}\\
     \le & \sqrt{\frac{(\operatorname{Pdim}(\cF))^2 \log n}{n^2} + r \frac{\operatorname{Pdim}(\cF) \log(n/e\operatorname{Pdim}(\cF)) \log n}{n}}. \label{e13}
\end{align}
Here, \eqref{e_f_4_1} follows from Lemma~\ref{lem_bd_rad} and \eqref{e13} follows from the fact that for all $x,y >0$, $x \log x \le y + x \log(1/ye)$. 
It is easy to see that the RHS of \eqref{e13} has a fixed point of $r^\ast$ and $r^\ast \precsim \frac{\operatorname{Pdim}(\cF) \log^2 n}{n}$. Then, by Theorem 6.1 of \cite{1444}, we note that with probability at least $1 - e^{-x}$,
\begin{equation}
    \int h d\lambda  \le B_3\left(\int h d\lambda_n + \frac{\operatorname{Pdim}(\cF) \log^2 n}{n} + \frac{x}{n} + \frac{\log \log n}{n}\right), \forall h \in \cH, \label{e_s4}
\end{equation}
for some absolute constant $B_3$. Now, taking $x = n^{1 - 2\alpha}$ in \eqref{e_s4}, we note that, with probability at least $1 - \exp\left(-n^{1-2\alpha}\right)$,
\begin{align}
    \|\hat{f} - f^\ast\|^2_{\fL_2(\lambda)} \precsim n^{-2\alpha} +\|\hat{f} - f^\ast\|^2_{\fL_2(\lambda_n)}+  \frac{1}{n }\operatorname{Pdim}(\cF) \log^2 n + \frac{\log \log n}{n}. \label{e_s7}
\end{align}
Combining \eqref{e_s7} with Lemma~\ref{lem_17.2}, we observe that with probability at least $1- 2 \exp\left(-n^{1-2\alpha}\right)$,
\begin{equation}
    \|\hat{f} - f^\ast\|^2_{\fL_2(\lambda)} \precsim  n^{-2 \alpha} +\|f^\ast - f_0\|^2_{\fL_2(\lambda_n)} +  \frac{1}{n }\operatorname{Pdim}(\cF) \log^2 n + \frac{\log \log n}{n}.
\end{equation}
\end{proof}

\subsubsection{Proof of Lemma~\ref{thm_17.1}}
\lemgenone*
\begin{proof}
    Let $Z_i = (f^\ast(\bx_i) - f_0(\bx_i))^2 - \E (f^\ast(\bx_i) - f_0(\bx_i))^2$. Since $\|f_0\|_\infty, \|f^\ast\|_\infty \le B$, for some constant $B$, it is easy to see that
\begin{align*}
    \sigma^2 = \sum_{i=1}^n \E Z_i^2    =   \sum_{i=1}^n \operatorname{Var}\left( (f^\ast(\bx_i) - f_0(\bx_i))^2 \right)
    \le   \sum_{i=1}^n \E \left( f^\ast(\bx_i) - f_0(\bx_i)\right)^4
    \le & 4nB^2 \|f^\ast - f_0\|^2_{\fL_2(\lambda)}.
\end{align*}
Taking $u = v \vee \|f^\ast - f_0\|^2_{\fL_2(\lambda)}$, we note that, $\sigma^2 \le 4nB^2 u$. Applying Bernstein's inequality, e.g., \citep[Theorem~2.8.4]{vershynin2018high}, 
with $t = n u$,
\begingroup
\allowdisplaybreaks
{\small
\begin{align*}
    \prob \left(\left| \|f^\ast - f_0\|_{\fL(\lambda_n)}^2 - \|f^\ast - f_0\|_{\fL(\lambda)}^2 \right| \ge u\right) \le  \exp\left(-\frac{n^2 u^2/2}{4nB^2 u + Bn u/3}\right)
    \le & \exp\left(-\frac{n u}{8B^2 + 2B/3}\right)
    =  \exp\left(-\frac{n u}{B_5}\right) \le \exp\left(-\frac{n v}{B_5}\right)
\end{align*}
}%
\endgroup
with $B_5 = 8B^2 + 2B/3$. Thus, with probability, at least $1 - \exp\left(-\frac{n v}{B_5}\right)$,
\[\|f^\ast - f_0\|_{\fL(\lambda_n)}^2 \le  \|f^\ast - f_0\|_{\fL(\lambda)}^2 + u \le 2\|f^\ast - f_0\|_{\fL(\lambda)}^2 + v .\]
Taking $v = B_5 n^{-2\alpha}$, we observe that, with probability at least $1 - \exp\left(-n^{1-2\alpha}\right)$,
\begin{equation}
    \label{e_s6}
    \|f^\ast - f_0\|_{\fL(\lambda_n)}^2 \precsim \|f^\ast - f_0\|_{\fL(\lambda)}^2 + n^{-2 \alpha}.
\end{equation}
Combining \eqref{e_s6} with Lemma~\ref{lem_17.4}, we observe that, with probability at least $1 - 3 \exp\left(-n^{1-2\alpha}\right)$
\begin{align*}
    \|\hat{f} - f^\ast\|^2_{\fL_2(\lambda)} \precsim n^{-2 \alpha} +\|f^\ast - f_0\|^2_{\fL_2(\lambda)}+  \frac{1}{n }\operatorname{Pdim}(\cF) \log^2 n + \frac{\log \log n}{n}.
\end{align*}
The theorem now follows from observing that $\|\hat{f} - f_0\|^2_{\fL_2(\lambda)} \le 2 \|\hat{f} - f^\ast\|^2_{\fL_2(\lambda)} + 2 \| f_0 - f^\ast\|^2_{\fL_2(\lambda)}$.
\end{proof}

\section{Proof of Approximation Results (Lemma~\ref{lem_approx})}\label{pf_approx}
\lemapprox*
\begin{proof}
We first fix any $\epsilon \in (0,1)$ and let, $K = \lceil \frac{1}{2\epsilon}\rceil$. For any $\bi \in [K]^d$, let $\btheta^{\bi} = ( \epsilon + 2 (i_1-1) \epsilon, \dots,  \epsilon + 2(i_d-1) \epsilon )$. Clearly, $\{\btheta^{\bi}: \bi \in [K]^d\}$ constitutes an $\epsilon$-net of $[0,1]^d$, w.r.t. the $\ell_\infty$-norm. We let
\[\xi_{a, b}(x) = \relu\left(\frac{x+a}{a-b}\right) - \relu\left(\frac{x+b}{a-b}\right) - \relu\left(\frac{x-b}{a-b}\right) + \relu\left(\frac{x-a}{a-b}\right),\]
for any $0 <b \le a $. For $0 < \delta \le \epsilon/3$, we define
\[\zeta_{\epsilon, \delta}(\bx) = \prod_{j = 1}^d \xi_{\epsilon, \delta}(x_j).\]
We define the region $\sQ_{\epsilon,\delta} = \cup_{\bi \in [K]^d} B_{\ell_\infty} (\btheta^{\bi}, \delta)$. It is easy to observe that $\operatorname{Leb}([0,1]^d \setminus \sQ_{\epsilon, \delta}) \le 2 d \delta $. Here $\operatorname{Leb}(\cdot)$ denotes the Lebesgue measure on $\Real^d$. Clearly, $\zeta_{\epsilon, \delta} \left(\cdot - \btheta^{\bi}\right) = 1 $ on $\sQ_{\epsilon, \delta}$. 

Consider the Taylor expansion of $f$ around $\btheta$ as,
\(P_{\btheta}(\bx) = \sum_{|\bs| \le \lfloor \beta \rfloor} \frac{\partial^{\bs} f(\btheta)}{ \bs !} \left(\bx - \btheta\right)^{\bs} .\)
\begin{align}
   \text{Clearly}, \, f(\bx) - P_{\btheta}(\bx) =  \sum_{|\bs| = \lfloor \beta \rfloor} \frac{\left(\bx - \btheta\right)^{\bs}}{ \bs !} \left(\partial^{\bs} f(\by) - \partial^{\bs} f(\btheta)\right) 
    \le & \|\bx - \btheta\|_{\infty}^{\lfloor \beta \rfloor} \sum_{|\bs| = \lfloor \beta \rfloor} \frac{1}{ \bs !} \left|\partial^{\bs} f(\by) - \partial^{\bs} f(\btheta)\right| \nonumber\\
    \le & \frac{C d^{\lfloor \beta \rfloor}}{\lfloor \beta \rfloor !} \|\bx - \btheta\|_\infty^{\beta}. \label{e4}
\end{align}
In the above calculations, $\by$ lies on the line segment joining $\bx$ and $\btheta$. Inequality \eqref{e4} follows from the fact that $\left|\partial^{\bs} f(\by) - \partial^{\bs} f(\btheta)\right| 
 \le C \|\by - \btheta\|_{\infty}^{\beta - \lfloor \beta \rfloor} \le C \|\bx - \btheta\|_{\infty}^{\beta - \lfloor \beta \rfloor}$ and the identity $\frac{d^k}{k!} = \sum_{|\bs| = k} \frac{1}{\bs!}$.
 Next, we suppose that $\tilde{f}(\bx) = \sum_{\bi \in [K]^d} \zeta_{\epsilon, \delta}(\bx - \btheta^{\bi}) P_{\btheta^{\bi}}(\bx)$. Thus, if $\bx \in \sQ_{\epsilon, \delta}$,
 \begin{align}
     |f(\bx) - \tilde{f}(\bx)| \le  \max_{\bi \in [K]^d} \sup_{\bx \in B_{\ell_\infty}(\btheta^{\bi}, \delta)}|f(\bx) - P_{\btheta^{\bi}}(\bx)|
     \le & \frac{C d^{\lfloor \beta \rfloor}}{\lfloor \beta \rfloor !} \delta^\beta. \label{e5}
 \end{align}
 Here, \eqref{e5} follows from \eqref{e4}. Thus, $\|f - \tilde{f}\|_{\fL_\infty(\sQ_{\epsilon, \delta})} \le \frac{C d^{\lfloor \beta \rfloor}}{\lfloor \beta \rfloor !} \delta^\beta $. Furthermore, by definition, $\|\tilde{f}\|_{\infty} \le C + \frac{C d^{\lfloor \beta \rfloor}}{\lfloor \beta \rfloor !} \epsilon^\beta $. Let $a_{\bi, \bs} =  \frac{\partial^{\bs} f(\theta^{\bi})}{ \bs !}$ and 
 {\small
 \begin{align*}
     \hat{f}_{\bi, \bs}(\bx) = \text{prod}_m^{(d+|\bs|)}( & \xi_{\epsilon_1, \delta_1}(x_1 - \theta_1^{\bi}), \dots, \xi_{\epsilon_d, \delta_d}(x_d - \theta_d^{\bi}), \underbrace{(x_1-\theta_1^{\bi}), \dots, (x_1-\theta_1^{\bi})}_{\text{$s_1$ times}}, \dots, \underbrace{(x_1-\theta_d^{\bi}), \dots ,(x_d -\theta_d^{\bi})}_{\text{$s_d$ times}}).
 \end{align*}
 }%
 Here, $\text{prod}_m^{(d+|\bs|)}$ is an approximation of the product function developed by \cite{chakraborty2024statistical} (see Lemma~\ref{s.5}) and has at most $d + |\bs| \le d + \lfloor \beta \rfloor$ many inputs. By \citet[Lemma 40]{chakraborty2024statistical},
 $\text{prod}_m^{(d+|\bs|)}$ can be implemented by a ReLU network with $\cL(\text{prod}_m^{(d+|\bs|)})$, $ \cW(\text{prod}_m^{(d+|\bs|)}) \le c_3 m$, where $c_3$ is an absolute constant. Thus, $\cL(\hat{f}_{\bi, \bs}) \le c_3 m + 2$ and $\cW(\hat{f}_{\bi, \bs}) \le c_3 m + 8 d + 4 |s| \le c_3 m+ 8 d + 4 \lfloor \beta \rfloor $.
From \citet[Lemma 40]{chakraborty2024statistical}, 
we observe that, 
\begin{align}
    \left|\hat{f}_{\bi, \bs}(\bx) - \zeta(x - \theta^{\bi})  \left(x - \theta^{\bi}\right)^{\bs}\right| \le \frac{1}{2^m}, \, \forall x \in S. \label{es6}
\end{align}
Here, $m \ge \max \frac{1}{2} (\log_2 (4d) - 1)$. Finally, let $\hat{f}(\bx) = \sum_{\bi \in [K]^d} \sum_{|\bs| \le \lfloor \beta \rfloor} a_{\bi, \bs} \hat{f}_{\bi, \bs}(\bx)$. Clearly, $\mathcal{L}(\hat{f}) \le c_3 m + 3$ and $\mathcal{W}(\hat{f}) \le \binom{d+\lfloor \beta \rfloor}{\lfloor \beta \rfloor} \left(c_3 m + 8 d + 4 \lfloor \beta \rfloor \right)$. This implies that, 
\begingroup
\allowdisplaybreaks
\begin{align}
    \sup_{\bx \in \sQ_{\epsilon,\delta} } |\hat{f}(\bx) - \tilde{f}(\bx)| \le & \max_{\bi \in [K]^d} \sup_{\bx \in B_{\ell_\infty}(\btheta^{\bi}, \delta)} \sum_{|\bs| \le \lfloor \beta \rfloor}  |a_{\bi, \bs} | \zeta(x - \theta^{\bi})  |\hat{f}_{\bi \bs}(x) -  \left(x - \theta^{\bi}\right)^{\bs}|\nonumber \\
    \le &   \sum_{|\bs| \le k}  |a_{\theta, \bs} | \left|\hat{f}_{\theta^{\bi(x)}, \bs}(x) - \zeta_{\bepsilon, \bdelta}(x - \theta^{\bi(x)})  \left(x - \theta^{\bi(x)}\right)^{\bs}\right| \nonumber\\
    \le & \frac{  C}{2^m}. \label{e7}
\end{align}
\endgroup
From \eqref{e5} and \eqref{e7}, we thus get that if $\bx \in \sQ_{\epsilon, \delta}$,
\begin{align}
    |f(\bx) - \hat{f}(\bx)| \le & |f(\bx) - \tilde{f}(\bx)| + |\hat{f}(\bx) - \tilde{f}(\bx)| \le  \frac{C d^{\lfloor \beta \rfloor}}{\lfloor \beta \rfloor !} \delta^\beta + \frac{C}{2^m}.\label{e8}
\end{align}
Furthermore, it is easy to observe that, $\|\hat{f}\|_{\fL_\infty([0,1]^d)} \le C + \frac{C d^{\lfloor \beta \rfloor}}{\lfloor \beta \rfloor !} \epsilon^\beta + \frac{C}{2^m}$. Hence, 
\begin{align*}
    \|f - \hat{f}\|_{\fL_p(\text{Leb})}^p = &\int_{\sQ_{\epsilon, \delta}} |f(\bx) - \hat{f}(\bx)|^p d\text{Leb}(\bx) + \int_{\sQ_{\epsilon, \delta}^\complement } |f(\bx) - \hat{f}(\bx)|^p d\text{Leb}(\bx)\\
    \le &  \left(\frac{C d^{\lfloor \beta \rfloor}}{\lfloor \beta \rfloor !} \delta^\beta + \frac{C}{2^m}\right)^p \operatorname{Leb}\left(\sQ_{\epsilon, \delta}\right) + \left(2 C +\frac{C d^{\lfloor \beta \rfloor}}{\lfloor \beta \rfloor !} \epsilon^\beta + \frac{C}{2^m}\right)^p \operatorname{Leb}(\sQ_{\epsilon, \delta}^\complement)\\
   \le & \left(\frac{C d^{\lfloor \beta \rfloor}}{\lfloor \beta \rfloor !} \delta^\beta + \frac{C}{2^m}\right)^p  + 2 \left(2 C +\frac{C d^{\lfloor \beta \rfloor}}{\lfloor \beta \rfloor !} \epsilon^\beta + \frac{C}{2^m}\right)^p d \delta\\
   \implies  \|f - \hat{f}\|_{\fL_p(\text{Leb})} \le& \frac{2 C d^{\lfloor \beta \rfloor}}{\lfloor \beta \rfloor !} \delta^\beta + \frac{2C}{2^m}  + 4 C (d \delta)^{1/p} 
   \le  \frac{2 C d^{\lfloor \beta \rfloor}}{\lfloor \beta \rfloor !} \epsilon^\beta + \frac{2C}{2^m}  + 4 C   \epsilon^{\beta}
   \le   10 C d^{\lfloor \beta \rfloor} \epsilon^\beta + \frac{2C}{2^m},
\end{align*}
taking $\delta = \frac{1}{d} \epsilon^{p \beta } \wedge (\epsilon/3)$. We take $m = \left\lceil \log_2 \left(\frac{8}{\eta d^{\lfloor \beta \rfloor}}\right) \right\rceil$ and $\epsilon = (\eta/20)^{1/\beta}$. Thus,
\(\|f - \hat{f}\|_{\fL_p(\text{Leb})} \le C d^{\lfloor \beta \rfloor} \eta.\)
We note that $\hat{f}$ has at most $K^d$-many networks of depth $c_3 m + 3$ and number of weights $ {{d+\lfloor \beta \rfloor} \choose \lfloor \beta \rfloor} \left(c_3 m + 8 d + 4 \lfloor \beta \rfloor \right)$. Thus, 
$\cL(\hat{f}) \le c_3 m + 4$ and $\cW(\hat{f}) \le  K^{d} {{d+\lfloor \beta \rfloor} \choose \lfloor \beta \rfloor} \left(c_3 m + 8 d + 4 \lfloor \beta \rfloor\right)$. We thus get,
\[\cL(\hat{f}) \le c_3 m + 4 \le c_3 \left\lceil \log_2 \left(\frac{8}{\eta d^{\lfloor \beta \rfloor}}\right) \right\rceil + 4.\] 
Similarly,
\begin{align*}
    \cW(\hat{f}) \le & K^d {{d+\lfloor \beta \rfloor} \choose \lfloor \beta \rfloor} \left(c_3 m + 8 d + 4 \lfloor \beta \rfloor \right)\\
    \le & \left\lceil \frac{1}{2(\eta/20)^{1/\beta}}\right\rceil^d {{d+\lfloor \beta \rfloor} \choose \lfloor \beta \rfloor} \left(c_3 \left\lceil \log_2 \left(\frac{8}{\eta d^{\lfloor \beta \rfloor}}\right) \right\rceil + 8 d + 4 \lfloor \beta \rfloor \right)\\
    \le & \left\lceil \frac{1}{2(\eta/20)^{1/\beta}}\right\rceil^d \left(\frac{3}{\beta}\right)^\beta (d+\lfloor\beta \rfloor)^{\lfloor \beta \rfloor} \left(c_3 \left\lceil \log_2 \left(\frac{8}{\eta d^{\lfloor \beta \rfloor}}\right) \right\rceil + 8 d + 4 \lfloor \beta \rfloor \right).
\end{align*}
The proof is now complete by replacing $c_3$ with $\vartheta$.
\end{proof}
\section{Proof of Corollary \ref{cor_1}}
\begin{proof}
Suppose that $\kappa$ be the constant that honors the inequality. Then
    \begin{align*}
        \E \|\hat{f} - f_0\|^2_{\fL_2(\lambda)} = & \E \|\hat{f} - f_0\|^2_{\fL_2(\lambda)} \one \left\{\|\hat{f} - f_0\|^2_{\fL_2(\lambda)} \le \kappa d^{  \frac{2 \lfloor \beta \rfloor( \beta + d)}{2 \beta + d}} n^{-\frac{2\beta}{2\beta + d^\star}} (\log n)^5\right\} \\
         & + \E \|\hat{f} - f_0\|^2_{\fL_2(\lambda)} \one \left\{\|\hat{f} - f_0\|^2_{\fL_2(\lambda)} > \kappa d^{  \frac{2 \lfloor \beta \rfloor( \beta + d)}{2 \beta + d}} n^{-\frac{2\beta}{2\beta + d^\star}} (\log n)^5\right\}\\
        \le & \kappa n^{-\frac{2\beta}{2\beta + d^\star}} (\log n)^5  + 9 C^2 \prob\left(\|\hat{f} - f_0\|^2_{\fL_2(\lambda)} > n^{-\frac{2\beta}{2\beta + d^\star}} (\log n)^5\right)\\
        \le & \kappa n^{-\frac{2\beta}{2\beta + d^\star}} (\log n)^5  + 27 C^2 \exp\left(-n^{\frac{d^\star}{2 \beta + d^\star}}\right) \\
        \precsim & \,\, n^{-\frac{2\beta}{2\beta + d^\star}} (\log n)^5 .
    \end{align*}
\end{proof}
\section{Supporting Lemmas}
\begin{lem}\label{lem_bd_rad}
    Let $\cH_r = \{h = (f - f^\prime)^2 : f, f^\prime \in \cF \text{ and } \lambda_n h \le r\}$ with $\sup_{f\in \cF} \|f\|_{\fL_\infty(\lambda_n)}< \infty$. Then, we can find $r_0>0$, such that if $0<r\le r_0$ and $n \ge \operatorname{Pdim}(\cF)$,  
    \[\E_{\epsilon} \sup_{h \in \cH_r} \frac{1}{n}\sum_{i=1}^n \epsilon_i h(\bx_i) \precsim \sqrt{\frac{r \log(1/r) \operatorname{Pdim}(\cF) \log n}{n}}.\]
\end{lem}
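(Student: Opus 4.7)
The plan is to bound the local Rademacher complexity via Dudley's entropy integral, exploiting the localization constraint $\lambda_n h \le r$ to obtain a small effective diameter in $\fL_2(\lambda_n)$, and then translate a covering number bound on $\cF$ (through pseudo-dimension) into one on $\cH_r$.

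First I would collect the boundedness facts. Let $B=\sup_{f\in\cF}\|f\|_{\fL_\infty(\lambda_n)}<\infty$, so every $h=(f-f')^2 \in \cH_r$ satisfies $0\le h\le 4B^2$ pointwise on $\{\bx_i\}$. The key localization observation is that
\[
\|h\|_{\fL_2(\lambda_n)}^2 \;=\; \lambda_n h^2 \;\le\; 4B^2\,\lambda_n h \;\le\; 4B^2 r,
\]
so $\cH_r$ has $\fL_2(\lambda_n)$-diameter at most $D:=4B\sqrt{r}$. Next, writing $h_1-h_2=\bigl[(f_1-f_1')-(f_2-f_2')\bigr]\bigl[(f_1-f_1')+(f_2-f_2')\bigr]$ and bounding the second factor by $4B$, one obtains
\[
\|h_1-h_2\|_{\fL_2(\lambda_n)} \;\le\; 4B\bigl(\|f_1-f_2\|_{\fL_2(\lambda_n)}+\|f_1'-f_2'\|_{\fL_2(\lambda_n)}\bigr),
\]
so an $\epsilon/(8B)$-cover of $\cF$ in $\fL_2(\lambda_n)$ induces an $\epsilon$-cover of $\cH_r$ (even of the larger unconstrained class). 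Standard pseudo-dimension bounds on covering numbers (see e.g.\ Anthony and Bartlett, 1999, together with $\|\cdot\|_{2,n}\le\|\cdot\|_{\infty,n}$) then give
\[
\log\cN(\epsilon;\cH_r,\|\cdot\|_{2,n}) \;\precsim\; \operatorname{Pdim}(\cF)\,\log\!\left(\tfrac{Bn}{\epsilon}\right),
\]
provided $n\ge\operatorname{Pdim}(\cF)$.

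Now I would apply Dudley's entropy integral to the Rademacher process over $\cH_r$:
\[
\E_\epsilon \sup_{h\in\cH_r}\frac{1}{n}\sum_{i=1}^n \epsilon_i h(\bx_i)
\;\precsim\; \frac{1}{\sqrt{n}}\int_0^{D}\sqrt{\log\cN(\epsilon;\cH_r,\|\cdot\|_{2,n})}\,d\epsilon
\;\precsim\; \frac{\sqrt{\operatorname{Pdim}(\cF)}}{\sqrt{n}}\int_0^{4B\sqrt{r}}\sqrt{\log\!\left(\tfrac{Bn}{\epsilon}\right)}\,d\epsilon.
\]
Substituting $\epsilon=4B\sqrt{r}\,t$ and splitting $\log(n/(4\sqrt{r}\,t))\le \log(n/\sqrt{r})+\log(1/(4t))$ under the square root (using $\sqrt{a+b}\le\sqrt{a}+\sqrt{b}$), the $\log(1/(4t))$ piece integrates to a constant and one obtains
\[
\E_\epsilon \sup_{h\in\cH_r}\frac{1}{n}\sum_{i=1}^n \epsilon_i h(\bx_i)
\;\precsim\; \sqrt{\frac{r\,\operatorname{Pdim}(\cF)\,\log(n/\sqrt{r})}{n}}.
\]
Finally, for $r\le r_0$ small enough (and $n$ large), both $\log n$ and $\log(1/r)$ exceed a suitable constant and $\log(n/\sqrt{r})\le \log n+\tfrac12\log(1/r)\le \log n\cdot\log(1/r)$, which yields the claimed bound.

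The main obstacle is really bookkeeping rather than a conceptual hurdle: one has to be careful that the localization enters only through the truncation of the Dudley integral at $D=4B\sqrt{r}$ (so as to produce the $\sqrt{r}$ factor), and that the final log-merging step $\log(n/\sqrt{r})\lesssim \log n\cdot\log(1/r)$ holds, which motivates the hypothesis $r\le r_0$. Replacing the $\fL_2(\lambda_n)$ covering number by an $\fL_\infty$ one (via pseudo-dimension) is harmless here because it only worsens the logarithmic factor, which the final inequality already absorbs.
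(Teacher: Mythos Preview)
Your proposal is correct and follows essentially the same route as the paper: bound the $\fL_2(\lambda_n)$-diameter of $\cH_r$ via the localization $\lambda_n h\le r$ and boundedness, transfer a pseudo-dimension covering bound from $\cF$ to $\cH_r$ through the factorization $h_1-h_2=(g_1-g_2)(g_1+g_2)$, and then apply Dudley's entropy integral. The only cosmetic differences are that the paper uses the inequality $(t^2-r^2)^2\le 2(t^2+r^2)(t-r)^2$ for the covering step and splits $\sqrt{\log(n/\epsilon)}$ into $\sqrt{\log n}$ and $\sqrt{\log(1/\epsilon)}$ before integrating (invoking the elementary bound $\int_0^\delta\sqrt{\log(1/\epsilon)}\,d\epsilon\le 2\delta\sqrt{\log(1/\delta)}$), whereas you carry $\log(n/\sqrt{r})$ through and merge logs at the end; both reach the stated bound under $r\le r_0$.
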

\begin{proof}
  Let $B= 4 \sup_{f\in \cF} \|f\|_{\fL_\infty(\lambda_n)}^2$. We first fix $\epsilon \le \sqrt{2B r}$ and let $h = f - f^\prime$ be a member of $\cH_r$ with $f, f^\prime \in \cF$. We use the notation $\cF_{|_{\bx_{1:n}}} =\{(f(\bx_1), \dots, f(\bx_n))^\top: f \in \cF\}$. Let $\bv^{f}, \bv^{f^\prime} \in \cC(\epsilon; \cF_{|_{\bx_{1:n}}}, \|\cdot\|_\infty)$ be such that $|\bv^f_i - f(\bx_i)|,|\bv^{f^\prime}_i - f^\prime(\bx_i)| \le \epsilon $, for all $i$. Here $\cC(\epsilon; \cF_{|_{\bx_{1:n}}}, \|\cdot\|_\infty)$ denotes the $\epsilon$ cover of $\cF_{|_{\bx_{1:n}}}$ w.r.t. the $\ell_\infty$-norm. Let $\bv = \bv^f - \bv^{f^\prime}$. Then
  \begingroup
  \allowdisplaybreaks
  \begin{align}
       \frac{1}{n}\sum_{i=1}^n (h(\bx_i) - \bv_i^2 )^2 
      = & \frac{1}{n}\sum_{i=1}^n ((f(\bx_i) - f^\prime(\bx_i))^2 - (v_i^f - v_i^{f^\prime})^2 )^2 \nonumber\\
      \le &  \frac{2}{n}\sum_{i=1}^n ((f(\bx_i) - f^\prime(\bx_i))^2 + (v_i^f - v_i^{f^\prime})^2 ) ((f(\bx_i) - f^\prime(\bx_i)) - (v_i^f - v_i^{f^\prime}))^2 \label{e_s1}\\
      \precsim & \epsilon^2.
  \end{align}
  \endgroup
  Here \eqref{e_s1} follows from the fact that $(t^2 - r^2)^2 = (t+r)^2 (t-r)^2 \le 2 (t^2+r^2)(t-r)^2$, for any $t,r \in \Real$.  Hence, from the above calculations, $\cN(\epsilon; \cH_r, \fL_2(\lambda_n)) \le \left(\cN\left( a_1 \epsilon; \cF, \fL_\infty(\lambda_n)\right)\right)^2$, for some absolute constant $a_1$.
  \begingroup
  \allowdisplaybreaks
  \begin{align*}
      \operatorname{diam}^2(\cH_r, \fL_2(\lambda_n)) 
      = \sup_{h, h^\prime \in \cH_r} \frac{1}{n}\sum_{i=1}^n (h(\bx_i) - h^\prime(\bx_i))^2      \le  2  \sup_{h \in \cH_r} \frac{1}{n}\sum_{i=1}^n h^2(\bx_i) 
      \le  2 B \sup_{h \in \cH_r} \frac{1}{n}\sum_{i=1}^n h(\bx_i)
      \le  2 B r.
  \end{align*}
  \endgroup
  Hence, $\operatorname{diam}(\cH_r, \fL_2(\lambda_n)) \le  \sqrt{2 B r} $.
  Thus from \citet[Theorem 5.22]{wainwright_2019},
  \begingroup
  \allowdisplaybreaks
\begin{align}
    \E_{\epsilon} \sup_{h \in \cH_r} \frac{1}{n}\sum_{i=1}^n \epsilon_i h(\bx_i)  \precsim & \int_0^{\sqrt{2 B r}} \sqrt{\frac{1}{n} \log \cN(\epsilon; \cH_r, \fL_2(\lambda_n))} d\epsilon \nonumber \\
    \le & \int_0^{\sqrt{2 B r}} \sqrt{\frac{2 \operatorname{Pdim}(\cF)}{n} \log \left(\frac{a_2 n}{\epsilon}\right)} d\epsilon \nonumber \\
    \precsim & \sqrt{2 B r} \sqrt{\frac{\operatorname{Pdim}(\cF) \log n}{n}} + \int_0^{\sqrt{2 B r}} \sqrt{\frac{\operatorname{Pdim}(\cF)}{n} \log (a_2/\epsilon)} d\epsilon \nonumber\\
    \precsim &  \sqrt{\frac{r \log(1/r) \operatorname{Pdim}(\cF) \log n}{n}}. \label{e6}
\end{align} 
\endgroup
Here, \eqref{e6} follows from Lemma~\ref{lem_17.1}. 
\end{proof}
 
\begin{restatable}{lem}{lemkl}
    \label{lem_kl}
    \(\operatorname{KL}(p_{\Psi, \btheta}\| p_{\Psi, \btheta^\prime}) = d_\phi\left(\bmu(\btheta)\|\bmu(\btheta^\prime)\right) .\)
\end{restatable}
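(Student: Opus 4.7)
The plan is to expand the KL divergence using the exponential family density form from the paper and then convert to the mean parameterization using the Fenchel--Legendre duality between $\Psi$ and $\phi$. This is a classical computation, so the proof is largely bookkeeping, but laying it out cleanly clarifies why the conjugate divergence appears.

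First, starting from $p_{\Psi,\btheta}(d\bx) = \exp(\langle \bx, \btheta\rangle - \Psi(\btheta)) h(\bx) \nu(d\bx)$, I would write
\begin{align*}
\operatorname{KL}(p_{\Psi,\btheta}\|p_{\Psi,\btheta^\prime}) &= \E_{\bX \sim p_{\Psi,\btheta}} \bigl[\langle \bX, \btheta - \btheta^\prime\rangle - \Psi(\btheta) + \Psi(\btheta^\prime)\bigr] \\
&= \langle \bmu(\btheta), \btheta - \btheta^\prime\rangle - \Psi(\btheta) + \Psi(\btheta^\prime),
\end{align*}
where I used the standard identity $\bmu(\btheta) = \E_{\bX \sim p_{\Psi,\btheta}} \bX = \nabla \Psi(\btheta)$ recalled in Section \ref{formulation}.

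Next, I invoke Fenchel--Legendre duality. Since $\Psi$ is proper, closed, convex, and differentiable, its conjugate $\phi(\bmu) = \sup_{\btheta}\{\langle \btheta, \bmu\rangle - \Psi(\btheta)\}$ satisfies $\nabla \phi(\bmu(\btheta)) = \btheta$, and the supremum is attained at $\btheta$, giving the envelope identity $\Psi(\btheta) = \langle \btheta, \bmu(\btheta)\rangle - \phi(\bmu(\btheta))$. Substituting this in both $\Psi(\btheta)$ and $\Psi(\btheta^\prime)$ and cancelling, the expression reduces to
\begin{equation*}
\phi(\bmu(\btheta)) - \phi(\bmu(\btheta^\prime)) - \langle \btheta^\prime, \bmu(\btheta) - \bmu(\btheta^\prime)\rangle = \phi(\bmu(\btheta)) - \phi(\bmu(\btheta^\prime)) - \langle \nabla\phi(\bmu(\btheta^\prime)), \bmu(\btheta) - \bmu(\btheta^\prime)\rangle,
\end{equation*}
which is exactly $d_\phi(\bmu(\btheta)\|\bmu(\btheta^\prime))$ by the definition of the Bregman divergence. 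This completes the identification.

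There is really no serious obstacle here. The only step requiring care is the correct use of the duality relation $\btheta^\prime = \nabla\phi(\bmu(\btheta^\prime))$, which must hold pointwise on the interior of the domain; this is guaranteed by A\ref{a4}, since strong convexity of $\Psi$ makes $\bmu = \nabla\Psi$ a bijection onto $\operatorname{int}(\operatorname{dom}\phi)$ with inverse $\nabla\phi$. Once this algebraic identity is in hand, the lemma follows immediately, and it can alternatively be read off from the Bregman form \eqref{breg_exp} of the density (since the normalizing functions $b_\phi$ cancel in the log-ratio and the $d_\phi(\bx\|\cdot)$ terms differ only in their second argument, yielding the same expression after taking expectations).
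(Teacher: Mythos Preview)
Your proof is correct. The paper takes the alternative route you mention in your final sentence: it starts directly from the Bregman form \eqref{breg_exp} of the density, writes $\operatorname{KL}(p_{\Psi,\btheta}\|p_{\Psi,\btheta^\prime}) = \E_{\bx\sim p_{\Psi,\btheta}}\bigl[d_\phi(\bx\|\bmu(\btheta^\prime)) - d_\phi(\bx\|\bmu(\btheta))\bigr]$, expands the two Bregman terms, and uses $\E_{\bx\sim p_{\Psi,\btheta}}\bx = \bmu(\btheta)$ to collapse the result to $d_\phi(\bmu(\btheta)\|\bmu(\btheta^\prime))$. Your route via the natural-parameter form and the explicit Fenchel identity $\Psi(\btheta) = \langle\btheta,\bmu(\btheta)\rangle - \phi(\bmu(\btheta))$ is equivalent and arguably makes the duality $\nabla\phi\circ\bmu = \mathrm{id}$ more visible; the paper's version hides that step inside the Bregman representation \eqref{breg_exp} itself. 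Either way the content is the same one-line exponential-family computation.
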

\begin{proof}
To prove this result, we make the following observations.
\begingroup
\allowdisplaybreaks
\begin{align}
    & \operatorname{KL}(p_{\Psi, \btheta}\| p_{\Psi, \btheta^\prime}) \nonumber\\
    = & \E_{\bx \sim p_{\Psi, \btheta} } \left(d_\phi(\bx\| \bmu(\btheta^\prime) - d_\phi(\bx\| \bmu(\btheta) \right) \nonumber\\
    = & \E_{\bx \sim p_{\Psi, \btheta} } \left( \phi(\bx) - \phi(\bmu(\btheta^\prime)) - \langle \nabla \phi(\bmu(\btheta^\prime), \bx - \bmu(\btheta^\prime) \rangle  - \left(\phi(\bx) - \phi(\bmu(\btheta)) - \langle \nabla \phi(\bmu(\btheta), \bx - \bmu(\btheta) \rangle\right)\right) \nonumber\\
    = & \E_{\bx \sim p_{\Psi, \btheta} } \left(\phi(\bmu(\btheta)) - \phi(\bmu(\btheta^\prime)) - \langle \nabla \phi(\bmu(\btheta^\prime), \bx - \bmu(\btheta^\prime) \rangle  +  \langle \nabla \phi(\bmu(\btheta), \bx - \bmu(\btheta) \rangle\right)\nonumber\\
    = & d_\phi\left(\bmu(\btheta)\|\bmu(\btheta^\prime)\right) \label{e_18.2}.
\end{align}
\endgroup
Here, \eqref{e_18.2} follows from noting that $E_{\bx \sim p_{\Psi, \btheta} } \bx = \bmu(\btheta)$.
\end{proof} 

\section{Supporting Lemmata}

\begin{lem}\label{lem_17.1}
   For any $\delta \le 1/e$, $\int_0^\delta \sqrt{\log(1/\epsilon)} d\epsilon \le 2 \delta \sqrt{\log(1/\delta)}$.
\end{lem}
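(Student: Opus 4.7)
The cleanest route is a one-variable monotonicity argument. Define
\[ F(\delta) := 2\delta\sqrt{\log(1/\delta)} - \int_0^\delta \sqrt{\log(1/\epsilon)}\, d\epsilon, \]
on $(0, 1/e]$, and aim to show $F(\delta) \ge 0$. First I would verify the boundary behavior: as $\delta \downarrow 0$, the leading term $2\delta\sqrt{\log(1/\delta)}$ tends to $0$, and the integral $\int_0^\delta \sqrt{\log(1/\epsilon)}\, d\epsilon$ also tends to $0$ (by integrability of $\sqrt{\log(1/\epsilon)}$ near zero, which follows since it is dominated by $\epsilon^{-1/2}$ on $(0,1)$). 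Hence $\lim_{\delta\downarrow 0} F(\delta) = 0$.

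Next, differentiate. By the fundamental theorem of calculus and the product rule,
\[ F'(\delta) = 2\sqrt{\log(1/\delta)} + 2\delta \cdot \frac{-1/\delta}{2\sqrt{\log(1/\delta)}} - \sqrt{\log(1/\delta)} = \sqrt{\log(1/\delta)} - \frac{1}{\sqrt{\log(1/\delta)}}. \]
For $\delta \le 1/e$ we have $\log(1/\delta) \ge 1$, so $\sqrt{\log(1/\delta)} \ge 1 \ge 1/\sqrt{\log(1/\delta)}$, giving $F'(\delta) \ge 0$. Combined with $F(0^+) = 0$, this forces $F(\delta) \ge 0$ on $(0, 1/e]$, which is exactly the claimed inequality.

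The main (minor) obstacle is just the bookkeeping at $\delta \to 0^+$ and the sign of the derivative. There are no real difficulties; the argument is a short calculus exercise. An alternative that also works is the substitution $u = \log(1/\epsilon)$, which transforms the integral into $\int_{\log(1/\delta)}^\infty \sqrt{u}\, e^{-u}\, du$, and then integration by parts plus the bound $1/(2\sqrt{u}) \le \sqrt{u}/2$ for $u \ge 1$ yields the sharper constant $3/2$ in place of $2$. I prefer the monotonicity approach above because it is shorter and matches the stated constant directly.
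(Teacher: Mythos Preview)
Your monotonicity argument is correct: the boundary behavior $F(0^+)=0$ and the derivative computation $F'(\delta)=\sqrt{\log(1/\delta)}-1/\sqrt{\log(1/\delta)}\ge 0$ on $(0,1/e]$ both check out, and together they yield the inequality as stated.

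The paper takes a different route. It performs the substitution $x=\log(1/\epsilon)$ to obtain $\int_{\log(1/\delta)}^\infty \sqrt{x}\,e^{-x}\,dx$, then factors $e^{-x}=e^{-x/2}e^{-x/2}$ and uses that $\sqrt{x}\,e^{-x/2}$ is decreasing on $[1,\infty)$ to bound that factor by its value at the left endpoint $x=\log(1/\delta)$; integrating the remaining $e^{-x/2}$ gives exactly $2\delta\sqrt{\log(1/\delta)}$. So the paper's proof is a pointwise-bound-then-integrate argument, whereas yours compares the two sides globally via $F'$. Your approach is arguably more elementary (no clever splitting, just the product rule and the fundamental theorem of calculus), and it makes transparent exactly where the restriction $\delta\le 1/e$ enters. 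The paper's approach, on the other hand, generalizes more readily to other weights of the form $\int g(x)e^{-x}\,dx$ where $g(x)e^{-x/2}$ is eventually decreasing. Your remark about the alternative via integration by parts yielding the sharper constant $3/2$ is also correct, though neither the paper nor the lemma statement needs it.
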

\begin{proof}
    We start by making a transformation $x = \log(1/\epsilon)$ and observe that,
    \begingroup
    \allowdisplaybreaks
    \begin{align}
        \int_0^\delta \sqrt{\log(1/\epsilon)} d\epsilon = \int_{\log(1/\delta)}^\infty \sqrt{x} e^{-x} dx = & \int_{\log(1/\delta)}^\infty \sqrt{x} e^{-x/2} e^{-x/2} dx \nonumber \\
        \le & \sqrt{\log(1/\delta)} e^{-\frac{1}{2}\log(1/\delta)} \int_{\log(1/\delta)}^\infty e^{-x/2} dx \label{e112}\\
        =  & 2 \sqrt{\log(1/\delta)} e^{-\frac{1}{2}\log(1/\delta)} e^{-x/2}|_{\log(1/\delta)}^\infty \nonumber \\
        = & 2 \sqrt{\log(1/\delta)} e^{\log(1/\delta)} \nonumber \\
        = & 2 \delta \sqrt{\log(1/\delta)}. \nonumber
    \end{align}
    \endgroup
    In the above calculations, \eqref{e112} follows from the fact that the function $\sqrt{x} e^{-x/2}$ is decreasing when $x\ge 1$.
\end{proof}

\begin{lem}\label{lem20.1}
Let $Z \sim p_{\Psi, \theta}$, with $ \theta \in \Theta = \Real$. Then, $Z - \E Z$ is $\sigma_1$-SubGaussian.    
\end{lem}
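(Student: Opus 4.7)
The plan is to compute the moment generating function (MGF) of $Z - \E Z$ directly from the exponential family form of the density and then exploit the $\sigma_1$-smoothness of $\Psi$ granted by Assumption A\ref{a4} to get the required quadratic bound.

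First I would fix $\lambda \in \Real$ and compute
\[
\E e^{\lambda Z} = \int e^{\lambda z} \exp\!\bigl(\langle z, \theta\rangle - \Psi(\theta)\bigr) h(z)\, \nu(dz) = \exp\!\bigl(\Psi(\theta + \lambda) - \Psi(\theta)\bigr),
\]
which is valid for every $\lambda \in \Real$ because $\Theta = \Real$, so the shift $\theta + \lambda$ remains in the domain of $\Psi$ and the integrand remains the (unnormalized) density of $p_{\Psi, \theta + \lambda}$ times $e^{\Psi(\theta+\lambda) - \Psi(\theta)}$. Subtracting the mean $\E Z = \Psi'(\theta)$ then gives
\[
\E e^{\lambda (Z - \E Z)} = \exp\!\bigl(\Psi(\theta+\lambda) - \Psi(\theta) - \lambda\, \Psi'(\theta)\bigr).
\]

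Next I would invoke the $\sigma_1$-smoothness of $\Psi$ from A\ref{a4}. A standard consequence of $H$-smoothness for convex differentiable functions on $\Real$ is the quadratic upper bound
\[
\Psi(\theta + \lambda) - \Psi(\theta) - \lambda\, \Psi'(\theta) \le \frac{\sigma_1 \lambda^2}{2},
\]
obtained either by integrating the estimate $|\Psi'(\theta + s) - \Psi'(\theta)| \le \sigma_1 |s|$ over $s \in [0,\lambda]$ or by Taylor's theorem with integral remainder. Combining this with the previous display yields
\[
\E e^{\lambda(Z - \E Z)} \le \exp\!\left(\frac{\sigma_1 \lambda^2}{2}\right) \quad \text{for all } \lambda \in \Real,
\]
which is exactly the statement that $Z - \E Z$ is $\sigma_1$-subGaussian in the paper's convention (variance proxy equal to $\sigma_1$, matching the usage in Lemma~\ref{lem_17.3}).

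There is no real obstacle here; the only subtle point worth flagging is that the MGF computation relies on $\theta + \lambda \in \dom(\Psi)$ for every $\lambda \in \Real$, which is why the hypothesis $\Theta = \Real$ is assumed in the lemma. If one wanted to relax this to a general open $\Theta$ the argument would give only a local subGaussian tail, but as stated the proof is a direct two-line calculation plus the smoothness inequality.
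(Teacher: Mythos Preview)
Your proof is correct and follows essentially the same approach as the paper: compute the MGF of $Z-\E Z$ from the exponential-family density to get $\exp\bigl(\Psi(\theta+\lambda)-\Psi(\theta)-\lambda\Psi'(\theta)\bigr)$, then bound the exponent by $\sigma_1\lambda^2/2$ using the $\sigma_1$-smoothness of $\Psi$. Your additional remarks on the necessity of $\Theta=\Real$ are accurate and add clarity beyond the paper's terse argument.
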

\begin{proof}
We observe the following,
\begin{align*}
    \E e^{\lambda (Z - \E Z)} = e^{-\lambda \nabla \Psi(\theta)} \int e^{\lambda z} e^{\theta z - \Psi(\theta) } h(z) d\tau(z)
    & = e^{-\lambda \nabla \Psi(\theta) - \Psi(\theta)} \int  e^{(\theta+ \lambda) z } h(z) d\tau(z)\\
    & = e^{ \Psi(\theta + \lambda) - \Psi(\theta) - \lambda \nabla \Psi(\theta) } \le e^{\frac{ \sigma_1\lambda^2 }{2}}.
\end{align*}
\end{proof}

\begin{lem}\label{lem:1}
    Suppose that $Z_1, \dots, Z_n$ are independent and identically distributed sub-Gaussian random variables with variance proxy $\sigma^2$ and suppose that $\|f\|_\infty \le b$ for all $f \in \cF$. Then with probability at least $1-\delta$, 
    \[ \frac{1}{n}\sup_{f \in \cF} \sum_{i=1}^n Z_i f(\bx_i) - \frac{1}{n}\E\sup_{f \in \cF} \sum_{i=1}^n Z_i f(\bx_i) \precsim b \sigma \sqrt{\frac{\log(1/\delta)}{n}}.\]
\end{lem}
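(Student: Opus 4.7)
The plan is to view $\Phi(z_1,\ldots,z_n) := \frac{1}{n}\sup_{f\in\cF}\sum_{i=1}^n z_i f(\bx_i)$ as a coordinate-wise Lipschitz function of the independent sub-Gaussian vector $(Z_1,\ldots,Z_n)$, and then obtain Gaussian-type concentration via a Doob martingale combined with the sub-Gaussian Lipschitz concentration for a single coordinate. First I would record the bounded differences property: since $|f(\bx_i)|\le b$ for every admissible $f$, changing one coordinate $z_k$ to $z_k'$ changes every inner linear functional by at most $(b/n)|z_k-z_k'|$, and the supremum inherits this, so $\Phi$ is $(b/n)$-Lipschitz in each argument.

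Next I would introduce the Doob martingale $M_k=\E[\Phi(Z)\mid Z_1,\ldots,Z_k]$, whose differences $D_k = M_k - M_{k-1}$ sum telescopically to $\Phi - \E\Phi$. Conditional on $\mathcal{F}_{k-1}=\sigma(Z_1,\ldots,Z_{k-1})$, one writes $D_k = \psi_k(Z_k) - \E\psi_k(Z_k)$, where $\psi_k(z) := \E[\Phi(Z_1,\ldots,Z_{k-1},z,Z_{k+1},\ldots,Z_n)\mid \mathcal{F}_{k-1}]$ inherits the $(b/n)$-Lipschitz property from $\Phi$, because conditional expectation is Lipschitz-contractive. Since $Z_k$ is $\sigma^2$-sub-Gaussian, a standard fact (triangle inequality for the Orlicz $\psi_2$ norm plus $\|\psi_k(Z_k)-\psi_k(\E Z_k)\|_{\psi_2}\le (b/n)\|Z_k\|_{\psi_2}$) gives $\|D_k\|_{\psi_2}\lesssim b\sigma/n$ conditionally, so $D_k$ is sub-Gaussian with parameter of order $(b\sigma/n)^2$.

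Finally I would apply the Azuma-Hoeffding inequality for a martingale difference sequence with conditionally sub-Gaussian increments to obtain
\[
\prob\!\bigl(\Phi(Z) - \E\Phi(Z) \ge t\bigr) \le \exp\!\left(-\frac{cnt^2}{b^2\sigma^2}\right)
\]
for an absolute constant $c>0$, and inverting this in $\delta$ yields the stated bound $t \precsim b\sigma\sqrt{\log(1/\delta)/n}$. The main obstacle is the third step: one must guarantee that the coordinate-wise Lipschitz property of $\Phi$, together with sub-Gaussianity of the input, actually produces a sub-Gaussian increment with the \emph{correct} variance proxy (rather than, say, a $(\log n)$-inflated one from a crude truncation plus McDiarmid argument). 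Using the Orlicz-norm formulation rather than truncation is what preserves the sharp $\sqrt{\log(1/\delta)/n}$ rate without an extraneous $\log n$ factor.
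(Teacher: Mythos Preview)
Your proposal is correct and follows essentially the same route as the paper. The paper bounds the coordinate-wise conditional $\psi_2$-norm of $g_k(Z)$ (your martingale difference) by $b\sigma/n$ using exactly the Lipschitz argument you describe, and then invokes the concentration inequality of Maurer and Pontil (2021, Theorem~3) for functions of independent sub-Gaussian variables; you instead rebuild that black-box result from scratch via the Doob martingale plus Azuma--Hoeffding for conditionally sub-Gaussian increments, which is precisely the mechanism underlying Maurer--Pontil's theorem.
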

\begin{proof}
    Recall that for a random variable, $Z$, $\|Z\|_{\psi_2} = \sup_{p \ge 1} \frac{(\E |Z|^p)^{1/p}}{\sqrt{p}}$. Let $g(Z) = \frac{1}{n}\sup_{f \in \cF} \sum_{i=1}^n Z_i f(\bx_i)$. Using the notations of \cite{maurer2021concentration}, we note that 
    \begin{align}
        \|g_k(Z)\|_{\psi_2} = & \frac{1}{n} \left\|\sup_{f \in \cF} \left(\sum_{i\neq k} z_i f(\bx_i) + Z_k f(\bx_k) \right) - \E_{Z_k^\prime} \sup_{f \in \cF} \left(\sum_{i\neq k} z_i f(\bx_i) + Z_k^\prime f(\bx_k) \right) \right\|_{\psi_2} \nonumber\\
        \le & \frac{1}{n} \left\|\E_{Z_k^\prime}|Z_k-Z_k^\prime f(\bx_k)| \right\|_{\psi_2} \nonumber\\
        \le & \frac{b}{n} \left\|\E_{Z_k^\prime} |Z_k-Z_k^\prime| \right\|_{\psi_2} \label{se1}\\
        \le & \frac{b}{n} \left\|Z_k-Z_k^\prime \right\|_{\psi_2} \nonumber\\
        \le & \frac{2b}{n} \left\|Z_k\right\|_{\psi_2} \nonumber\\
        \precsim & \frac{b \sigma}{n} \nonumber.
    \end{align}
    Here, \eqref{se1} follows from \cite[Lemma 6]{maurer2021concentration}. Thus, $\left\|\sum_{k=1}^n\|g_k(Z)\|_{\psi_2}^2 \right\|_\infty \precsim b^2\sigma^2/n$.  Hence applying \citep[Theorem 3]{maurer2021concentration}, we note that with probability at least $1-\delta$, 
     \[ \frac{1}{n}\sup_{f \in \cF} \sum_{i=1}^n Z_i f(\bx_i) - \frac{1}{n}\E\sup_{f \in \cF} \sum_{i=1}^n Z_i f(\bx_i) \precsim b \sigma \sqrt{\frac{\log(1/\delta)}{n}}.\]
\end{proof}

\section{Supporting Results from the Literature}
\begin{lem}[Lemma B.1 of \cite{telgarsky2013moment}]\label{lem_tel}
     If differentiable $f$ is $r_1$ strongly convex, then $B_f(x\| y) \geq r_1\|x - y\|_2^2$. Furthermore, if differentiable $f$ has Lipschitz gradients with parameter $r_2$ with respect to $\| \cdot \|_2$, then $B_f(x\| y) \leq r_2\|x - y\|_2^2$.
 \end{lem}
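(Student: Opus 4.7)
The lemma packages two classical inequalities for Bregman divergences generated by a differentiable convex function $f$: a quadratic lower bound under strong convexity and a quadratic upper bound under Lipschitz gradients. My plan is to prove each inequality by reducing the Bregman remainder
\[B_f(x\|y) = f(x) - f(y) - \langle \nabla f(y), x-y \rangle\]
to a first-order integral remainder of Taylor's theorem, and then to invoke the corresponding regularity hypothesis pointwise along the segment from $y$ to $x$.

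\textbf{Lower bound via strong convexity.} First I would establish the first-order characterization: if $f$ is $r_1$-strongly convex in the sense of the paper's Definition (with the convention absorbed by $r_1$), then for every $x,y$,
\[ f(x) \ge f(y) + \langle \nabla f(y), x-y \rangle + r_1 \|x-y\|_2^2. \]
To see this, start from the definition
\[ f(y + t(x-y)) \le (1-t) f(y) + t f(x) - r_1 t(1-t) \|x-y\|_2^2, \]
rearrange into
\[ \frac{f(y + t(x-y)) - f(y)}{t} \le f(x) - f(y) - r_1 (1-t) \|x-y\|_2^2, \]
and let $t \downarrow 0$; differentiability of $f$ makes the left-hand side converge to $\langle \nabla f(y), x-y \rangle$. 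The stated first-order inequality is then exactly $B_f(x\|y) \ge r_1 \|x-y\|_2^2$.

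\textbf{Upper bound via Lipschitz gradients.} Here I would apply Taylor's theorem with integral remainder along the segment $\gamma(s) = y + s(x-y)$. Differentiating $s \mapsto f(\gamma(s))$ and integrating on $[0,1]$ gives the identity
\[ f(x) = f(y) + \int_0^1 \langle \nabla f(\gamma(s)), x-y \rangle \, ds, \]
so that
\[ B_f(x\|y) = \int_0^1 \langle \nabla f(\gamma(s)) - \nabla f(y), x-y \rangle \, ds. \]
I would then bound the integrand using Cauchy--Schwarz and the $r_2$-Lipschitz hypothesis $\| \nabla f(\gamma(s)) - \nabla f(y)\|_2 \le r_2 \|\gamma(s) - y\|_2 = r_2 s \|x-y\|_2$, yielding $|\langle \nabla f(\gamma(s)) - \nabla f(y), x-y \rangle| \le r_2 s \|x-y\|_2^2$. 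Integrating on $[0,1]$ produces an $\tfrac{r_2}{2}\|x-y\|_2^2$ bound; the factor $\tfrac{1}{2}$ is absorbed into $r_2$ in the statement's convention, giving $B_f(x\|y) \le r_2 \|x-y\|_2^2$.

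\textbf{Main obstacle.} There is no substantive analytical difficulty; both bounds are classical. The only delicate point is bookkeeping the convention used for the strong-convexity and smoothness constants (the paper's Definition writes the strong convexity with a factor $\alpha/2$, while Telgarsky's $r_1$ absorbs the $1/2$). Making sure these constant conventions match the way the lemma is applied downstream (in Lemma~\ref{lemoct} and Lemma~\ref{lem_oracle}, where $\tau_1$ and $\tau_2$ are plugged in) is the main thing to get right; once the convention is fixed, the two arguments above go through mechanically.
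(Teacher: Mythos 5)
The paper does not prove this lemma; it is imported verbatim as Lemma~B.1 of Telgarsky and Dasgupta (2013), so there is no internal proof to compare against. Your proposed argument is the standard, correct derivation of both bounds: the limiting difference-quotient argument turns the Jensen-type strong-convexity inequality into the first-order lower bound on the Bregman remainder, and Taylor's theorem with integral remainder plus Cauchy--Schwarz plus the Lipschitz-gradient hypothesis gives the quadratic upper bound. Both steps are airtight.

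Your remark on the constant conventions is the one point worth making precise. Under the paper's Definition of $\alpha$-strong convexity (which includes the factor $\alpha/2$) and $H$-smoothness, your two derivations would yield $B_f(x\|y)\ge \tfrac{r_1}{2}\|x-y\|_2^2$ and $B_f(x\|y)\le \tfrac{r_2}{2}\|x-y\|_2^2$, i.e.\ both bounds pick up a factor $\tfrac12$. This is harmless: on the upper-bound side $\tfrac{r_2}{2}\le r_2$ so the stated inequality follows a fortiori, and on the lower-bound side the statement of the lemma is borrowed from Telgarsky--Dasgupta whose convention for strong convexity already omits the $\tfrac12$, which is exactly the version you wrote in your derivation. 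Since every downstream use in the paper (Lemma~\ref{lemoct}, Lemma~\ref{lem_oracle}) only cares about the two-sided equivalence $B_\phi\asymp\|\cdot\|_2^2$ up to constants depending on $\sigma_1,\sigma_2$, the missing factor of two never matters, but you are right to flag it so a reader tracking constants is not confused when plugging in $\tau_1=1/\sigma_1$ and $\tau_2=1/\sigma_2$.
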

\begin{lem}[Lemma 40 of \cite{chakraborty2024statistical}]\label{s.5}
   For any $m \ge \frac{1}{2} (\log_2 (4d) - 1)$, 
   we can construct a ReLU network $\text{prod}^{(d)}_m : \Real^d \to \Real$, such that for any $x_1, \dots, x_d \in [-1,1]$, $\| \text{prod}_m^{(d)}(x_1, \dots, x_d) - x_1\dots x_d\|_{\cL_\infty([-1,1]^d)} \le \frac{1}{2^{m}}$. Furthermore, for some absolute constant $c_3$,
   \begin{enumerate}
       \item $\cL(\text{prod}^{(d)}_m) \le c_3 m$, $\cW(\text{prod}^{(d)}_m) \le c_3 m$.
       \item $\cB(\text{prod}^{(d)}_m) \le 4 $.
   \end{enumerate}
\end{lem}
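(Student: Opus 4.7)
The plan is to construct $\text{prod}^{(d)}_m$ in three stages, following the standard sawtooth-and-polarization template: (i) a ReLU approximant of the univariate squaring map, (ii) a two-variable multiplier obtained via polarization, and (iii) a balanced binary tree of multipliers that assembles the $d$-fold product.

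For Stage~(i), I would use Yarotsky's sawtooth construction. Define the hat function $g(x) = 2\,\relu(x) - 4\,\relu(x-1/2) + 2\,\relu(x-1)$, which is a constant-size ReLU block, and set $s_k(x) = x - \sum_{j=1}^k 4^{-j} g^{\circ j}(x)$. A standard induction shows that $s_k$ is the piecewise-linear interpolant of $x^2$ at the dyadic nodes $\{j 2^{-k}\}_{j=0}^{2^k}$, so $\|s_k - (\cdot)^2\|_{\fL_\infty([0,1])} \le 2^{-2k-2}$. Realized sequentially in ReLU this produces a scalar squaring network $\widehat{\mathrm{sq}}_k$ with $\cL = O(k)$, $\cW = O(k)$ and weight magnitudes bounded by $4$; the extension from $[0,1]$ to $[-1,1]$ uses the identity $|x| = \relu(x) + \relu(-x)$.

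For Stage~(ii), three parallel copies of $\widehat{\mathrm{sq}}_k$ combined via the polarization identity $xy = 2\bigl[((x+y)/2)^2 - (x/2)^2 - (y/2)^2\bigr]$ yield a ReLU two-variable multiplier $\widehat{\mathrm{mult}}_k : [-1,1]^2 \to [-1,1]$ with error $O(2^{-2k})$, depth $O(k)$ and $O(k)$ weights. Stage~(iii) arranges the $d$ inputs as leaves of a balanced binary tree of depth $L = \lceil \log_2 d \rceil$, applying $\widehat{\mathrm{mult}}_k$ at each internal node with $k = m + \lceil \log_2 L \rceil + O(1)$. The error recursion $|\widehat{ab} - ab| \le |a|\,|\widehat b - b| + |\widehat a - a|\,|\widehat b|$ together with the input bound $|a|, |b| \le 1$ bounds the root error by $2L \cdot 2^{-k} \le 2^{-m}$. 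The hypothesis $m \ge (\log_2 (4d) - 1)/2$, equivalently $L \le 2m - 1$, guarantees that $k = O(m)$, so every internal multiplier has $\cL, \cW = O(m)$.

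The main obstacle is the size accounting in Stage~(iii): naively the tree has depth $O(mL) = O(m \log d)$ and weight count $O(dm)$, and reconciling these with an absolute-constant multiple of $m$ rests on folding $\log d$ into the constant via $\log d \le 2m - 1$ and reusing sawtooth sub-networks across sibling branches where possible. The weight-magnitude bound $\cB \le 4$ is a direct inspection: the coefficients $(2, -4, 2)$ of $g$, the scalars $4^{-j}$ attached as fresh weights on skip connections rather than accumulated multiplicatively through depth, and the polarization constants all lie in $[-4, 4]$. A secondary technical step is to maintain the $[-1, 1]$ invariant at every tree node so that the squaring sub-networks remain inside their domain, which follows from the crude bound $|\widehat{ab}| \le 1 + 2^{-m} \le 2$ combined with an optional rescaling by $1/2$ between levels.
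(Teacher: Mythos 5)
This lemma is \emph{not proved in the paper}; it is quoted verbatim (as Lemma~40 of Chakraborty and Bartlett, 2024b) as a supporting result from the literature, so there is no in-paper proof to compare against. Taking your proposal on its own merits: the construction you describe---Yarotsky's sawtooth squaring network, a two-input multiplier via polarization, a balanced binary tree of $d-1$ such multipliers---is the standard one, and Stages~(i) and~(ii) and the $\cB \le 4$ inspection are sound. The problem is Stage~(iii), and you have half-noticed it yourself.

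The size accounting simply does not close. Your tree has $L = \lceil \log_2 d\rceil$ sequential levels, each of depth $O(k)$ with $k = O(m)$, giving total depth $\Theta(m \log d)$. The hypothesis $m \ge \tfrac{1}{2}(\log_2(4d)-1)$ is equivalent to $\log_2 d \le 2m-1$, which means $\log d$ is $O(m)$, \emph{not} $O(1)$; ``folding $\log d$ into the constant'' therefore yields depth $\Theta(m^2)$, not $\cL \le c_3 m$. The weight count is worse: your tree has $d-1$ internal multipliers, each of size $\Theta(k) = \Theta(m)$, for a total of $\Theta(dm)$ weights, and $d$ can be as large as $2^{2m-1}$ under the stated hypothesis. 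Weight-sharing of sawtooth sub-blocks cannot repair this, since $\cW$ is defined as the total number of weight entries, and even the affine maps feeding each internal node contribute a fixed positive number of weights per node, so $\cW = \Omega(d)$ for any network in which all $d$ inputs participate---which for $d = 2^{2m-1}$ dwarfs $c_3 m$ for any absolute $c_3$. There is also a quantitative slip in the error recursion: $e_{\ell+1} \le 2 e_\ell + \epsilon_{\mathrm{mult}}$ compounds to $e_L \le (2^L - 1)\,\epsilon_{\mathrm{mult}}$, i.e.\ the amplification factor is $2^L = \Theta(d)$, not $2L$, so the per-node accuracy requirement is $k \gtrsim (m + \log_2 d)/2$ rather than $m + \log_2 \log_2 d$.

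The upshot: your construction proves $\cL(\text{prod}^{(d)}_m) = O\bigl(m \log d\bigr)$ and $\cW(\text{prod}^{(d)}_m) = O(d\, m)$, which are the honest bounds for the sawtooth/tree construction. They cannot be brought down to $\cL, \cW \le c_3 m$ with an absolute $c_3$. Indeed, since any network computing the $d$-fold product must touch all $d$ inputs, $\cW \ge d$, and $d \le 2^{2m-1}$ is the only constraint supplied, so the weight bound as restated in this paper cannot hold in the generality written---it is most likely a mis-transcription of the bound in the cited source (e.g.\ $c_3 m d$ or $c_3 m 2^d$). Your proof should not assert the $O(m)$ bounds; instead state the bounds your construction actually achieves and flag the discrepancy with the statement.
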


\section*{Acknowledgment}
We gratefully acknowledge the support of the NSF and the Simons Foundation for the Collaboration on the Theoretical Foundations of Deep Learning through awards DMS-2031883 and \#814639, the NSF’s support of FODSI through grant DMS-2023505, and the support of the ONR through MURI award N000142112431.
\bibliographystyle{apalike}

\end{document}